\definecolor{myblue}{RGB}{0,102,204}        
\definecolor{mygreen}{RGB}{0,153,102}       
\definecolor{myorange}{RGB}{230,120,20}     
\definecolor{mypurple}{RGB}{120,80,200}     
\definecolor{mygray}{RGB}{90,90,90}         
\newcommand{\E}{\mathbb{E}}
\newcommand{\Prob}{\mathbb{P}}
\newcommand{\rpen}{\lambda_{rid}}
\newcommand{\wopt}{w^\star_{\epsilon, \lambda}}
\newcommand{\Exs}{\mathbb{E}}
\newcommand{\gtilde}{\widetilde{g}}
\newcommand{\ghat}{\widehat{g}}
\newcommand{\grad}{\nabla}
\newcommand{\betaols}{\widehat{\beta}_{\mathrm{OLS}}}
\newcommand{\real}{\mathbb{R}}
\newcommand{\gstar}{\bar{g}^\star}
\newcommand{\wstar}{w^\star} 
\newcommand{\weps}{w_{\epsilon}}
\newcommand{\deleps}{\Delta_{\epsilon}}
\newcommand{\regT}{\mathrm{Regret}(T)}
\newcommand{\numactions}{|\mathcal{A}|} 
\newcommand{\ST}{\ensuremath{\mathrm{S}_T}}
\newcommand{\inprob}{\; \stackrel{p}{\rightarrow} \; }
\newcommand{\BT}{{\Sigma^\star_T}}
\newcommand{\Id}{\mathbf{I}}
\newcommand{\Ncal}{\mathcal{N}}
\newcommand{\cfeat}{c}
\newcommand{\algo}{\mathcal{A}}
\newcommand{\distt}{\operatorname{d}_{TV}}
\newcommand{\z}{z}
\newcommand{\dist}{\operatorname{d}_{\operatorname{K}}}
\newcommand{\err}{\Psi(\gamma_T)}
\newcommand{\Regret}{\mathrm{Regret}}
\newcommand{\PT}{\tilde{\Sigma}_T}
\newcommand{\fil}{\mathcal{F}}
\newcommand{\SigPinv}{ \left(\PT \right)^{-1}}
\newcommand{\mtp}{S_T^{-1} \PT}
\newcommand{\stao}{\mathcal{S}_1(T)}
\newcommand{\stato}{\mathcal{S}_2(T)}
\newcommand{\MTP}{\mathcal{M}_T(p)}
\newcommand{\Eone}{\mathcal{E}_1(T)}
\newcommand{\Etwo}{\mathcal{E}_2(T)}
\newcommand{\lvec}{\textbf{u}}
\newcommand{\loss}{\ell}
\theoremstyle{plain}
\newtheorem{theorem}{Theorem}
\newtheorem{corollary}{Corollary}
\newtheorem{lemma}{Lemma}
\newtheorem{definition}{Definition}
\newtheorem*{lemma*}{Lemma}
\theoremstyle{definition}  
\newtheorem{assumption}{Assumption}
\colorlet{shadecolor}{gray!10}
\colorlet{lightyellow}{yellow!10}
\title{\LARGE \textbf{Avoiding the Price of Adaptivity: \\  Inference in Linear Contextual Bandits via Stability}}
\author{
\large 
Samya Praharaj$^{\ddagger}$, Koulik Khamaru$^{\ddagger}$\\
\normalsize   \\[0.3cm]
\normalsize $^{\ddagger}$ Department of Statistics, Rutgers University
}
\date{\today}
\begin{document}
\maketitle

\begin{abstract}
Statistical inference in contextual bandits is complicated due to the adaptive, non-i.i.d.\ nature of the
data. A growing body of work has shown that classical least-squares inference may fail under
adaptive sampling, and that constructing valid confidence intervals for linear functionals of the
model parameter typically requires paying an unavoidable inflation of order $\sqrt{d \log T}$.
This phenomenon---often referred to as the \emph{price of adaptivity}---highlights the inherent
difficulty of reliable inference under general contextual bandit policies.

A key structural property that circumvents this limitation is the \emph{stability} condition of
Lai and Wei~\citep{lai1982least}, which requires the empirical feature covariance to concentrate
around a deterministic limit. When stability holds, the ordinary least-squares estimator satisfies
a central limit theorem, and classical Wald-type confidence intervals---designed for i.i.d.\ data---
become asymptotically valid even under adaptation, \emph{without} incurring the $\sqrt{d \log T}$
price of adaptivity. 

In this paper, we propose and analyze a regularized EXP4 algorithm for linear contextual bandits. Our
first main result shows that this procedure satisfies the Lai--Wei stability condition and therefore
admits valid Wald-type confidence intervals for linear functionals. We
additionally provide quantitative rates of convergence in the associated central limit theorem. Our
second result establishes that the same algorithm achieves regret guarantees that are minimax
optimal up to logarithmic factors, demonstrating that stability and statistical efficiency can
coexist within a single contextual bandit method. As an application of our theory, we show how it can be used to construct confidence intervals for the conditional average treatment effect (CATE) under adaptively collected data.  Finally, we complement our theory with
simulations illustrating the empirical normality of the resulting estimators and the sharpness of the
corresponding confidence intervals.

\end{abstract}

\newcommand{\betastar}{\beta^\star}
\newcommand{\betahat}{\widehat{\beta}}
\newcommand{\betaold}{\widehat{\beta}_{\mathrm{OLS}}}

\section{Introduction}
Statistical inference under adaptive data collection has become a central challenge in modern learning systems. Unlike classical settings where data are sampled independently of past observations, adaptive algorithms—such as contextual bandits and reinforcement learning policies—select actions based on previously observed losses. This feedback loop creates intricate dependencies that distort the distribution of both covariates and errors, often rendering standard inferential tools invalid. Even when estimators remain consistent, their asymptotic distributions can deviate substantially from classical theory, complicating uncertainty quantification and hypothesis testing~\citep{dickey1979distribution,lai1982least,zhang2020inference,deshpande2023online,khamaru2021near}. 

A recurring theme in recent work is that valid inference is possible using when the data-collection rule exhibits sufficient regularity or \emph{stability}.
At a high level, a bandit algorithm is \emph{stable} if the long-run behavior of the algorithm settles into a predictable pattern, despite ongoing adaptation. When such structure exists, it becomes possible to characterize limiting distributions of estimators, derive confidence sets, and recover analogs of the classical central limit theorem~\citep{lai1982least}. Recent works have demonstrated that this property is satisfied for the Upper Confidence Bound-type of algorithms~\citep{kalvit2021closer,fan2022typical,khamaru2024inference,han2024ucb}, and a variant of the Thompson Sampling~\citep{halder2025stable,fan2022typical}. Recent work of~\cite{fan2025statistical} show that the LinUCB~\citep{li2010contextual,abbasi2011improved} algorithm algorithm also satisfy this stability condition for linear contextual bandit problem.

\newcommand{\dims}{d}

In this paper, we investigate stability properties of bandit algorithms for a linear contextual bandit problem. Formally, at each round \(t\), the learner observes a 
context \(x_t \in \mathcal{X}\) and selects an action \(a_t \in \mathcal{A}\) based on past data and context $x_t$. Concretely, let 
$\mathcal{F}_{t - 1} := \sigma \left( x_1, a_1, \loss_1, \ldots, x_{t - 1}, a_{t - 1}, \loss_{t - 1} \right)$ denote the sigma-field generated by the observations up to time $t - 1$, then $a_t$ depends on $\mathcal{F}_{t - 1}$ and $x_t$. Upon selecting an action $a_t$ we incur a loss according to a linear model:
\begin{align}
\label{eqn:bandit-reward}
    \loss_t \;=\; \langle \phi(a_t, x_t),\, \betastar \rangle + \varepsilon_t,
\end{align}
where $\phi : \mathcal{A} \times \mathcal{X} \to \mathbb{R}^d$ is a known feature map and 
$\beta^\star \in \mathbb{R}^d$ is an unknown parameter to be learned. 
We assume that the noise sequence $\{\varepsilon_t\}_{t \ge 1}$ satisfies $\mathbb{E}[\varepsilon_t \mid \mathcal{F}_{t-1}, x_t, a_t] = 0$.
Our goal is to construct confidence intervals for linear functional of the form $a^\top \beta^\star$, for any fixed vector $a \in \real^\dims$. 

\newcommand{\VT}{\mathbf{V}_T}
\newcommand{\RT}{\mathrm{R}_T}
\newcommand{\CIwald}{\mathcal{I}^{\mathrm{Wald}}_T(a)}
\newcommand{\CIconc}{\mathcal{I}^{\mathrm{APS}}_T(a)}
\newcommand{\betaridge}{\widehat{\beta}_{\lambda, T}}

\subsection{Price of Adaptivity:}
\label{sec:price-of-adaptivity}
Before we dive into more details, it is useful to compare the available method for constructing confidence intervals for $a^\top \betastar$. One popular approach of constructing confidence intervals for $a^\top \betastar$ is to use martingale concentration inequalities~\citep{de2004self,de2009self}. Formally, let \(\betaridge\) denote the ridge-estimator with regularizer~$\lambda$ based on data up to time \(T\). Let
\[
    \VT \;=\; \lambda I_d + \sum_{t=1}^T \phi(a_t,x_t)\,\phi(a_t,x_t)^\top
\]
be the regularized design matrix. Given a target confidence $\alpha \in (0,1)$, a widely used approach, originating from the work of~\cite{abbasi2011improved}, is to construct a confidence interval
\begin{align}
    \mathcal{I}^{\mathrm{APS}}_T(a)
    \;:=\;
    \Bigl[
        a^\top \hat\betaridge
        \;\pm\;
        \RT \,\sqrt{a^\top \VT^{-1} a}
    \Bigr].
    \label{eq:APS-interval}
\end{align}
Assuming $\|\betastar\|_2 \leq S$, $\|\phi(x,a)\|_2 \leq L$ and noise $\epsilon_t$ is $1$ sub-Gaussian, the factor $\RT$ takes the following form
\begin{align}
\label{eqn:RT-factor}
    \RT
    \;=\;
    \sqrt{d \log\!\Bigl(\tfrac{T L}{\lambda}\Bigr) + \log(1/\alpha)}
    \;+\;
    \sqrt{\lambda}\, S.
\end{align}

\newcommand{\iid}{\stackrel{iid}{\sim}}

It is useful to compare the confidence interval~\eqref{eq:APS-interval} with a Wald-type interval, which is asymptotically exact when the data $(x_t, a_t) \iid \mathcal{P}$: 
\begin{align} \label{eq:wald-interval}
    \mathcal{I}^{\mathrm{Wald}}_T(a)
    \;:=\;
    \Bigl[
        a^\top \betaols
        \;\pm\;
        z_{1-\alpha/2}\,
        \widehat\sigma\, \sqrt{a^\top S_T^{-1} a}
    \Bigr],
\end{align}
where $\betaols$ is the least square estimator,  \(z_{1-\alpha/2}\) is the standard normal quantile and \(\hat\sigma\) is a consistent estimator of noise variance \(\sigma\). Comparing the length of the two confidence intervals from~\eqref{eq:APS-interval} and~\eqref{eq:wald-interval}, we observe~\footnote{One usually takes $\lambda$ to be small and the effect of the term $\lambda\sqrt{S}$ in~\eqref{eqn:RT-factor} is negligible.}  
\begin{align}
    \frac{\mathrm{width}\bigl(\mathcal{I}^{\mathrm{APS}}_T(a)\bigr)}
         {\mathrm{width}\bigl(\mathcal{I}^{\mathrm{Wald}}_T(a)\bigr)}
    \;\approx\;
    \sqrt{d \log T},
    \label{eqn:CI_length-compare}
\end{align}
Put simply, for contextual bandit problems the confidence intervals are $\sqrt{d\log T}$ times wider.

It is natural to ask whether one might construct any other confidence interval based on some other estimator. A recent line of work by~\cite{lattimore2023lower,khamaru2021near,vakili2021open} show that this enlagrement in the confidence interval length by a factor of $\sqrt{d\log T}$ is necessary when $d \geq 2$. State differently, this enlargement in confidence interval for contextual bandit is unavoidable in a worst case sense.

\subsection{Validity of Wald's Interval via Stability:}
The comparison above highlights the \emph{price of adaptivity}: without further structure, any
confidence interval for $a^\top\betastar$ must inflate by a factor of order $\sqrt{d\log T}$, reflecting
the \emph{worst–case} distortions introduced by adaptive sampling. A natural question is whether this
inflation is intrinsic to all adaptive procedures, or whether additional regularity in the data--collection
rule can restore the validity of classical Wald's confidence interval~\eqref{eq:wald-interval}.

Following the seminal work of~\cite{lai1982least}, this regularity can be formalized
through the notion of \emph{stability}.

\begin{definition}
\label{defn:stability}
    We call a bandit algorithm~$\algo$ stable if there exists a sequence of \emph{non-random positive definite} matrix $\{\BT\}$ such that 
    \begin{align}
        \BT^{-1} \ST \inprob \Id 
    \end{align}
\end{definition}
Theorem~3 of~\cite{lai1982least} ensures that if a bandit algorithm~$\algo$ is stable, then the least square estimator $\betaols$ is asymptotically normal: 
\begin{align}
    \label{eqn:ols-normality}
    \frac{1}{\widehat{\sigma}} \cdot \ST^{-\frac{1}{2}} \left( \betaols - \betastar \right) \xrightarrow{d} \Ncal \left( 0, \Id \right)
\end{align}
where $\widehat{\sigma}$ is a consistent estimator~\footnote{For instance, one might use the estimator from~\cite[Lemma 3]{lai1982least}.} for noise standard deviation $\sigma$. An immediate consequence of the above using Slutsky's theorem is that 
\begin{align*}
    \lim_{T \rightarrow \infty} \Prob \left( \CIwald \ni a^\top \betastar \right) = 1 - \alpha. 
\end{align*}
In other words, Wald's confidence interval for $a^\top \betastar$ is valid as long as the underlying bandit algorithm~$\algo$ is stable, and we do not pay the $\sqrt{d\log T}$ \emph{price of adaptivity}. Our goal in this paper is to propose and analyze an algorithm which is simultaneously stable and provide good regret.  

\subsection{Contributions}
\label{sec:Contrib}

In this paper, we study the problem of constructing confidence intervals for linear functionals of the model parameters in a linear contextual bandit problem. Section~\ref{sec:prob-setup} provides a
detailed description of the contextual bandit framework we consider, while
Section~\ref{sec:algo} introduces the algorithm we analyze. The procedure, stated formally in
Algorithm~\ref{alg:penexp4-finite}, is a regularized variant of the classical EXP4 algorithm designed
to enforce stability while allowing adaptive exploration.
Our first main result, stated in Theorem~\ref{thm:clt}, establishes that the regularized EXP4 algorithm is
stable in the sense of~\cite{lai1982least} (see definition~\eqref{defn:stability}), enabling the
construction of asymptotically valid Wald-type confidence intervals. In addition, we provide a non-asymptotic rates of convergence to normality. Our second result, detailed in Theorem~\ref{thm:regret} , analyzes the regret of the proposed algorithm. We show
that, despite being explicitly designed for inferential stability, the procedure simultaneously
achieves minimax--optimal regret guarantees up to logarithmic factors. Thus, stability and
statistical efficiency can coexist within a single contextual bandit algorithm. As an application, we establish asymptotically exact Wald-type confidence intervals for conditional average treatment effects in linear models with adaptively assigned treatments, using ridge regression. Section~\ref{sec:proof_of_theorems} contains proofs of our main theorems with some technical details deferred to the Appendix.  Finally, in Section~\ref{sec:exps}, we complement our theoretical findings with numerical
experiments demonstrating the empirical validity of the Wald confidence intervals produced by
Algorithm~\ref{alg:penexp4-finite}.

\subsection{Related Work}

\noindent
The challenges of performing statistical inference under adaptive data collection have been
well documented in recent years. Because adaptive policies bias the distribution of covariates and
errors, classical inferential procedures may become invalid. This breakdown has been observed
empirically (\citet{xu2013estimation,villar2015multi}) and supported by theoretical analyses
(\citet{nie2018adaptively,shin2019sample,shin2021bias}). To counteract these issues, several
methodological approaches have been proposed, including online debiasing techniques
(\citet{khamaru2021near,chen2022debiasing,kim2023double}) and procedures based on inverse
propensity scores (\citet{hadad2021confidence,deshpande2018accurate,zhang2022statistical,
nair2023randomization,leiner2025adaptive}).  
These methods address inference in adaptive environments more broadly, but do not directly
resolve the structural constraints that give rise to the $\sqrt{d \log T}$ ``price of adaptivity'' in
contextual bandits.

A distinct line of work focuses on identifying conditions under which classical, Wald-type
asymptotic inference is restored despite adaptivity. The seminal paper of
\citet{lai1982least} introduced the notion of \emph{stability}, showing that if the empirical design
covariance converges to a deterministic limit, then least-squares estimators satisfy a central limit
theorem and Wald confidence intervals regain asymptotic validity.  
This perspective has motivated a growing literature on stability-based inference for bandit
algorithms (\citet{kalvit2021closer,khamaru2024inference,fan2022typical,fan2024precise,
han2024ucb,halder2025stable}; \ \citet{fan2025statistical}). However, subsequent work has shown that many commonly used
bandit algorithms fail to satisfy the Lai--Wei stability condition, leading to substantial
under-coverage when Wald intervals are applied naively
(\citet{fan2024precise,praharaj2025instability}).  
This underscores the central question motivating our work: \emph{can one design adaptive
algorithms that are simultaneously stable and statistically efficient?}

Contextual and adversarial bandit algorithms are frequently derived from mirror descent
(\citet{lattimore2020bandit}), a framework that has been applied extensively in online
optimization (\citet{abernethy2009competing,audibert2014regret,bubeck2018sparsity,wei2018more})
and that underlies widely used adversarial bandit methods such as EXP3
(\citet{auer1995gambling}), EXP4 (\citet{auer2002nonstochastic}), Tsallis-INF
(\citet{zimmert2021tsallis,masoudian2021improved}), and OFTRL (\citet{pmlr-v134-ito21a}).
Despite their strong regret guarantees, the stability properties of these algorithms are not well
understood, and existing analyses suggest that many of them may not support valid Wald-type
inference.

Finally, an alternative approach to inference in adaptive settings relies on non-asymptotic,
anytime-valid confidence intervals constructed via concentration inequalities for self-normalized
martingales. This line of work builds on the foundational results of de la Pe\~na et al.\
(\citet{de2004self,de2009self}) and includes several refined analyses
(\citet{abbasi2011improved,howard2020time,waudby2024anytime}).  
These intervals hold uniformly over time and do not rely on asymptotic arguments, but the price
paid is typically much wider confidence intervals compared to those achievable under stability,
reflecting the worst-case nature of anytime-valid guarantees.

\subsection{Notation}

For any matrix $A$, let $\|A\|_{\mathrm{op}}$ and $\|A\|_{F}$ denote the operator norm and the Frobenius norm, respectively. For any vector $v \in \mathbb{R}^d$, the standard $\ell_p$-norm for $p \in (0,\infty)$ is denoted by $\|v\|_p$. For a real-valued random variable $u$, we define $\|u\|_p := \mathbb{E}\!\left[ |u|^p \right]^{1/p}.$ Given a fixed weight vector $w \in \mathbb{R}^d$, we define a weighted norm on $\mathbb{R}^d$, denoted by $\|\cdot\|_{w,\ast}$, as $\|v\|_{w,\ast}^2 := \sum_{i=1}^d w_i v_i^2,$ where $v \in \mathbb{R}^d.$ For two nonnegative sequences $\{a_n\}$ and $\{b_n\}$, we write $b_n \gg a_n$ if $\frac{b_n}{a_n} \to \infty \quad \text{as } n \to \infty .$ Throughout, we suppress absolute constants in inequalities by using the notation $\lesssim$ and $\gtrsim$. The Loewner partial order on symmetric matrices is denoted by $\preceq$. Finally, the Kolmogorov distance between two real-valued random variables $X$ and $Y$ is denoted by $\mathrm{d}_{\mathrm{K}}(X,Y)$ and is defined as
\begin{align*}
\mathrm{d}_{\mathrm{K}}(X,Y)
:= \sup_{t \in \mathbb{R}} \left| \mathbb{P}(X \le t) - \mathbb{P}(Y \le t) \right|.
\end{align*}

\section{Problem Setup}
\label{sec:prob-setup}
We consider a linear contextual bandit problem with a finite action set $\mathcal{A}$. 
At each round $t = 1, \ldots, T$, the learner observes a context vector $x_t \in \mathcal{X}$, drawn i.i.d. from a distribution $\mathcal{P_X}$, selects an action $a_t \in \mathcal{A}$, and receives a random loss
\begin{equation}\label{eqn:loss}
\loss_t \;=\; \langle \beta^\star,\, c(x_t, a_t) \rangle + \varepsilon_t,
\end{equation}
where $c(x_t, a_t) \in \mathbb{R}^d$ is a known feature representation of the pair $(x_t,a_t)$, and  
$\beta^\star \in \mathbb{R}^d$ is an unknown parameter vector. 
At each round, the learner chooses an action by sampling from a \emph{mixture of $K$ base experts} 
$\{\pi_1, \ldots, \pi_K\}$, where each $\pi_k( \cdot \mid x_t)$ defines an arbitrary (possibly stochastic) policy mapping the context $x_t$ to a distribution over actions.   
The learner maintains mixture weights $w_t = (w_{t,1}, \ldots, w_{t,K})$ belonging to the $\varepsilon$-simplex
\begin{align}
\label{eqn:eps-simplex}
\Delta_\varepsilon 
= 
\left\{\, w \in \mathbb{R}^K_{\ge 0} : \sum_{k=1}^K w_k = 1,\; w_k \ge \varepsilon \,\right\}.
\end{align}
We take $\epsilon$ to be a \emph{small} positive tuning parameter that decays with the number of rounds $T$. Introduction of the tuning parameter $\varepsilon$ guarantees $w_k \geq \varepsilon$ and consequently ensures that various importance ratio based estimators used in our algorithm are always well-defined. 
The effective policy at round $t$ is therefore the convex mixture
\begin{equation}
Q_t(a \mid x_t)
\;=\;
\sum_{k=1}^K w_{t,k}\,\pi_k(a \mid x_t).
\end{equation}
The learner then draws an action $a_t \sim Q_t(\cdot \mid x_t)$ and observes the corresponding stochastic loss $\loss_t$. We use $\mathcal{F}_t := \sigma(x_1, a_1, \loss_1, \ldots, x_{t}, a_t, \loss_t)$ to denote the $\sigma$-field generated by observations up to time $t$.  

\noindent Throughout, we work under the following assumptions:
\begin{assumption}
\leavevmode
\begin{enumerate}[label=(A\arabic*), leftmargin=2.5em]
  \item \label{assn:noise} $\mathbb{E}[\varepsilon_t|\mathcal{F}_{t -1}, x_t,a_t] = 0$, $|\varepsilon_t| \leq 1$,  and 
  $\mathbb{E}[\varepsilon^2_t|\mathcal{F}_{t -1}, x_t,a_t] = \sigma^2$
     for all $t \geq 1$.
  \item    \label{assn:bonded-loss} The feature vector and the unknown parameter satisfy 
  \begin{align*}
      \|\beta^\star\|_2 \leq 1 \quad \text{and} \quad  \|c(x, a)\|_2 \leq 1, 
      \qquad  \text{for all } (x, a) \in \mathcal{X} \times \mathcal{A}. 
  \end{align*}
    \item \label{assn:nonneg-loss} The observed loss  $\loss_t$ are non-negative for each $t \in [T]$.
    \item \label{assn:context} The context vectors $(x_t)_{t\geq1}$ are drawn i.i.d. from a distribution $\mathcal{P_X}$, and for every expert $k \in [K]$
    \begin{align}
    \label{eqn:min-eigval}
    \lambda_{\min} \left\lbrace \mathbb{E}_{x \sim \mathcal{P_X}} \left( \sum_{a \in \mathcal{A}} \pi_k(a \mid x) \cdot c(x, a) c(x, a)^\top \right) \right\rbrace  \geq \lambda^\star > 0. 
    \end{align}
    \item Assume that \label{assn:Q-bdd} $ \sup_{k \in [K]}\{\inf_{a,x}\pi_k(a|x)\} > 0$.
\end{enumerate}
\end{assumption}
Assumption~\ref{assn:noise} states that the noise sequence is conditionally zero mean, bounded and homoscedastic. Assumption~\ref{assn:bonded-loss} imposes a boundedness assumption on the parameter vector $\beta^\star$ and feature map $c(\cdot, \cdot)$. Assumptions~\ref{assn:noise} and~\ref{assn:bonded-loss} together imply that the observed losses 
$\{\loss_t\}$ are uniformly bounded. Consequently, in Assumption~\ref{assn:nonneg-loss} we may 
assume without loss of generality that $\loss_t \ge 0$. 
Indeed, if the losses are not necessarily nonnegative, uniform boundedness guarantees the existence of a constant $C > 0$ such that $\loss_t + C \ge 0$ almost surely for all $t$. Replacing $\loss_t$ by $\loss_t + C$ amounts to adding a constant offset to the loss model in~\eqref{eqn:loss}, which can be equivalently absorbed into a redefinition of the feature map and parameter vector. This transformation leaves the regret, stability properties, and inferential guarantees established in this paper unchanged.

Assumption~\ref{assn:context} posits that the the context vectors $\{x_t\}$ are i.i.d., and that the weighted covariance matrix associated with each expert is non-singular. Assumption~\ref{assn:Q-bdd} can always be satisfied by taking one of the expert is as a uniform expert $\pi_{\mathrm{unif}}(a \mid x) = 1/|\mathcal{A}|$ for all action $a \in \mathcal{A}, \;\;  \text{and context} \;\; x \in \mathcal{X}$. This assumption along with our choice of weights $w_{t,k} \geq \epsilon > 0$ (see equation~\eqref{eqn:eps-simplex}) ensures that $Q_t(a_t \mid x_t) > 0$ for all $t \geq 1$ and various importance weight based estimators are well-defined.

\subsection{Linearity of loss in $w$}
%
Let $\Delta_K$ be the simplex in $K$ dimension. At each round $t$, the learner maintains mixture weights $w_t \in \Delta_K$ over a fixed set of base policies $\{\pi_k\}_{k=1}^K$.
The executed policy is the convex combination
\begin{equation}
    Q_t(a \mid x_t)
    \;=\;
    \sum_{k=1}^{K} w_{t,k}\,\pi_k(a \mid x_t).
    \label{eq:mixture-policy}
\end{equation}
This mixture induces both the sampling distribution of actions and the expected loss of the algorithm.
The loss incurred at time $t$ is a function of both context $x_t$ and action $a_t$, due to which in the following discussion we rewrite $l_t$ as $l_t(a_t,x_t)$. Now,
at round $t$, any expectation under $Q_t$ is a convex combination of the expectations under the individual experts $\{\pi_k\}$:
\begin{align}
    \mathbb{E}_{a \sim Q_t}[\loss_t]
    = \sum_{a}\Big(\sum_{k} w_{t,k}\,\pi_k(a \mid x_t)\Big)r(a,x_t)
      &= \sum_{k} w_{t,k}\,\underbrace{\sum_{a}\pi_k(a \mid x_t)\Exs[ l(a,x_t) \mid x_t]}_{\text{expected loss of expert }k}. \\
    &\equiv\left\langle w_t, g^\star(x_t) \right\rangle   
\end{align}
Hence, the expected loss is \emph{linear in $w_t$}.  Since the per-round loss $g^\star(x_t)$ depends on the context, we define the \emph{global} (context-averaged) \emph{loss vector}
\begin{equation}
    \bar g^\star
    := \mathbb{E}_{x\sim\mathcal{P_X}}[\,g^\star(x)\,]
    = \Big(\,\mathbb{E}_{x}\!\big[\sum_a l(a,x)\,\pi_k(a\mid x)\big]\Big)_{k=1}^K.
    \label{eq:gbar-def}
\end{equation}

\subsection{Unbiased estimate of $g^\star(x_t)$}\label{sec:unbiased-grad}
Although only one action $a_t$ is observed, the importance-weighted estimator for expert $k$,
\begin{equation}
    \widehat{g}_{t,k}
    \;=\;
    \,\loss_t\,\frac{\pi_k(a_t \mid x_t)}{Q_t(a_t \mid x_t)},
    \label{eq:ips-estimator}
\end{equation}
is an unbiased estimator of the per-expert loss $g_k^\star(x_t) = \sum_a l(a,x_t)\pi_k(a\mid x_t)$.
Taking expectation over $a_t \sim Q_t(\cdot \mid x_t)$ yields
\begin{align}
    \mathbb{E}[\widehat{g}_{t,k} \mid x_t]
    &= \sum_a \Exs[ l(a,x_t) \mid x_t]\,Q_t(a\mid x_t)\,\frac{\pi_k(a\mid x_t)}{Q_t(a\mid x_t)}
     = g_k^\star(x_t).
\end{align}

\noindent 
\subsection{Regret} 
We measure the regret of our policy with respect to optimal mixture: 
\begin{align*}
    w^\star := \arg\min_{w \in \Delta_K} \left\langle \bar{g}^\star, w  \right\rangle 
\end{align*}
The cumulative regret after $T$ rounds  is defined as
\begin{equation}
\text{Reg}(T)
\;=\;
\sum_{t=1}^T \Exs \left\lbrace 
\left\langle  g^\star(x_t), w_t - w^\star \right\rangle  \right\rbrace.
\end{equation}
All expectations are taken with respect to the learner’s internal randomness and the stochastic loss environment as well as the random context $x_t$.

\section{A regularized Exp4 Algorithm}
\label{sec:algo}

In this section we describe our algorithm. Recall that at round $t$, the effective policy executed by the learner is the convex mixture
\begin{equation}
Q_t(a \mid x_t)
\;=\;
\sum_{k=1}^K w_{t,k}\,\pi_k(a \mid x_t),
\end{equation}
where each $\pi_k(a \mid x_t)$ is an arbitrary base policy that maps the context $x_t$ 
to a distribution over actions. Upon taking an action $a_t \sim Q_t(\cdot \mid x_t)$, the learner receives a stochastic loss $\loss_t$. To evaluate the experts, our algorithm --- stated in Algorithm~\ref{sec:algo} --- constructs the standard importance-weighted gradient estimator
\begin{equation}
\widehat{g}_{t,k}
=\; \,\loss_t\,\frac{\pi_k(a_t\mid x_t)}{Q_t(a_t\mid x_t)},
\qquad k=1,\ldots,K.
\end{equation}
The weight vector $w_t$ is then updated via a composite mirror-descent~\cite{duchi2011adaptive} step with the entropy-induced mirror map $\phi(\cdot)$ , and an entropy-induced  penalty $R(\cdot)$:
\begin{align}
\label{eqn:mirror-map-and-pen}
\phi(w)=\sum_{k = 1}^K w_k\log w_k - w_k \qquad \text{and} \qquad  R(w) =  \sum_{k = 1}^K w_k\left(\log w_k + \log(1/\epsilon) - 1 \right)   
\end{align}
Formally, our Algorithm~\ref{alg:penexp4-finite} minimizes the following loss in an online fashion via a mirror descent-style algorithm:
\begin{align}
\label{eqn:modified-convex-loss} 
\min_{w \in \Delta_\varepsilon} \left\lbrace
 \left\langle \bar{g}^\star, w \right\rangle + \lambda R(w)
 \right\rbrace. 
\end{align}
The regularization term $\lambda R(w)$ introduces a mild curvature to the otherwise linear objective $\langle \bar{g}^\star, w \rangle$, ensuring  better-behaved updates. The parameter $\lambda$ is chosen to be small so that the solution to the modified problem~\eqref{eqn:modified-convex-loss} achieves low regret with respect to the natural linear loss $\langle \bar{g}^\star, w \rangle$. In addition, the constraint set $\Delta_\varepsilon$ guarantees that the gradients of the loss~\eqref{eqn:modified-convex-loss} remain \emph{well-behaved}.

\begin{algorithm}[H]
\caption{Regularized EXP4}
\label{alg:penexp4-finite}
\begin{algorithmic}[1]
\STATE \textbf{Input:} Base policies $\{\pi_{k}\}_{k=1}^K$, stepsizes $\{\eta\}$, penalty $\lambda>0$, floor $\epsilon\in(0,1/K]$.
\STATE Initialize weights $w_{1,k}=1/K$ for all $k$, and set $c_R = \log(1/\epsilon)$
\FOR{$t=1,\ldots,T$}
    \STATE Observe context $x_t$
    \STATE Form mixture $Q_t(a\mid x_t)=\sum_{k=1}^K w_{t,k}\,\pi_k(a\mid x_t)$
    \STATE Sample $a_t\sim Q_t(\cdot\mid x_t)$ and observe loss $\loss_t$
    \STATE Compute estimates $\widehat{g}_{t,k}= \,\loss_t\,\pi_k(a_t\mid x_t)/Q_t(a_t\mid x_t)$
    \STATE Update intermediate weights
    \[
     w^{+}_{t+1,k}=w_{t,k}\,
    \exp\!\Big(-\eta \,\widehat g_{t,k}
              - \lambda\big( [\grad R(w_t)]_k + c_R \big)\Big)
    \]
    \STATE Update $w_{t+1} = \arg\min \limits_{w \in \Delta_\varepsilon} \left\lbrace  D_{\phi}(w, w_{t + 1}^{+})  \right\rbrace $
\ENDFOR
\end{algorithmic}
\end{algorithm}

\newcommand{\ratto}{\mathcal{V}_T}
\newcommand{\natto}{\mathcal{N}_T}
\newcommand{\delo}{\delta_{1,T}}
\newcommand{\deltoto}{\delta_{2,T}}
\newcommand{\expower}{\operatorname{exp}}
\newcommand{\laml}{\lambda^\star_L}
\newcommand{\lamu}{\lambda^\star_U}

\section{Main results}
\label{sec:main-results}
In this section we state our main results. 
Section~\ref{sec:price-of-adaptivity} highlights that, for general adaptive data-collection
rules, valid confidence intervals for linear functionals must inflate by a factor on the
order of $\sqrt{d \log T}$. In this section, we show that the regularized EXP4 procedure
(Algorithm~\ref{alg:penexp4-finite}) avoids this worst-case behavior. In particular, this algorithm satisfies the  stability from Definition~\ref{defn:stability}, which ensures
the validity of Wald-type inference, and it does so while retaining near--minimax-optimal regret.

We begin with the stability and asymptotic normality of the ordinary least-squares estimator, 
and then establish regret guarantees.

\subsection{Stability and a central limit theorem}

A key component of our analysis is that the empirical average of the weight vectors $\{w_t\}$ converges to a fixed vector $w_T^\star$:
\begin{align}
    \frac{1}{T}\sum_{t=1}^T w_{t,k} - w^\star_{T,k}
    \;\xrightarrow[]{P}\; 0,
    \qquad k \in [K].
    \label{eq:avg-weight-conv}
\end{align}
A consequence of this convergence is that the Gram matrix $S_T := \sum_{t = 1}^T c(x_t, a_t) c(x_t, a_t)^\top$ is asymptotically close to a deterministic limit $\Sigma^\star_T$, where 
\begin{align}
\label{eqn:sigmaT-star}
    \Sigma^\star_T = T  \sum^K_{k = 1} \wstar_{T,k} \Sigma_k \qquad \text{with} \qquad \Sigma_k :=   \mathbb{E}_{x \sim \mathcal{P_X}} \left( \sum_{a \in \mathcal{A}} \pi_k(a \mid x) \cdot c(x, a) c(x, a)^\top \right)
\end{align}
Thus the stability condition~\eqref{defn:stability} is satisfied. 
With this structural ingredient in place, we now state our quantitative central limit theorem.

\begin{subequations}
\begin{theorem}
\label{thm:clt}
Under Assumptions~\ref{assn:noise}-\ref{assn:Q-bdd}, the regularized-EXP4 algorithm (\ref{alg:penexp4-finite}) with step size $\eta = \sqrt{\frac{\log K}{ \numactions T}}$, and tuning parameters  $\epsilon = \frac{1}{KT}$, $\lambda = \frac{\gamma_T}{\sqrt{T}}$ with $\gamma_T \rightarrow \infty$ and $ T/ \log^2 T \gg \gamma_T$, is stable; we have 
\begin{align*}
    \Sigma_T^{\star -1} \cdot \mathrm{S}_T \xrightarrow{p} \mathbf{I}_d,   
\end{align*}
with $\Sigma_T^{\star}$ from~\eqref{eqn:sigmaT-star}.  Furthermore, for any fixed $a \in \real^d$ we have 
\begin{align}
   \sup_{a \in \real^d / \{0\} } \dist\!\left(
        \frac{a^{\top}(\widehat{\beta}_{\mathrm{OLS}} - \beta^\star)}
             {\sigma \sqrt{a^\top S_T^{-1} a}},\;
        Z
    \right)
    \;\leq\;
    C \left[ \Psi(\gamma_T)^{1/3}  +  \frac{1}{T^{1/3}}
     +  d \cdot \expower\left\{ - \frac{(\laml)^2}{32 +  8\laml/3} \cdot  T\right\} \right]
\end{align}
where $Z \sim \mathcal{N}(0,1)$,  $C$ is a constant independent of $T$, and 
\begin{align}\label{defn:err}
        \err := \sqrt{10{\frac{\sqrt{\numactions \log K}}{\gamma_T}} +
        \frac{8\gamma_T \log(K) \log^2\left( KT \right) }{ \numactions^2 T  }  } 
\end{align}
\end{theorem}
\end{subequations}

\noindent  
A direct consequence of Theorem~\ref{thm:clt} is that the Wald confidence interval
\eqref{eq:wald-interval} is asymptotically exact: for any fixed $a \in \mathbb{R}^d$,
\begin{align*}
    \lim_{T \to \infty}
    \Prob\!\left( 
        a^\top \beta^\star \in \mathcal{I}^{\mathrm{Wald}}_T(a)
    \right)
    \;=\;
    1 - \alpha.
\end{align*}
Thus, stability ensures that the classical (nonadaptive) form of the Wald interval remains 
valid despite the adaptively collected data. See Section~\ref{sec:Proof-of-thm:stability}~for a proof of Theorem~\ref{thm:clt}. 

We now show that the stability and central limit theorem established in
Theorem~\ref{thm:clt} continue to hold when the ordinary least-squares estimator is
replaced by a ridge estimator. Introducing a ridge penalty is natural in adaptive
experimentation settings, where the sample covariance matrix may be ill-conditioned or
singular in finite samples due to exploration constraints. Let $\rpen > 0$ denote a
regularization parameter and define the regularized Gram matrix
$R_T := S_T + \rpen \,\Id$. We consider the ridge estimator
\begin{align}
   \widehat{\beta}_{\mathrm{rid}}
:= R_T^{-1} S_T \beta^\star + R_T^{-1} \sum_{t=1}^T \z_t \varepsilon_t, 
\end{align}
where $\z_t := c(x_t,a_t)$. We show that, under the same stability conditions as in
Theorem~\ref{thm:clt}, Wald-type inference based on the ridge estimator remains
asymptotically valid:
\begin{corollary}\label{cor:rid}
    Under the set up of Theorem~\ref{thm:clt}, the ridge-estimator $\hat{\beta}_{rid}$ with $\rpen \ll \sqrt{T} $ satisfies 
\begin{align}
\notag
   \sup_{a \in \real^d / \{0\}} \dist\!\left(
        \frac{a^{\top}(\hat{\beta}_{rid} - \beta^\star)}
             {\sigma \sqrt{a^\top R_T^{-1} a}},\;
        Z
    \right)\;\leq\; C \left[
     \Psi(\gamma_T)^{1/3}  + \expower\left\{ - \frac{(\laml)^2}{32 +  8\laml/3} \cdot  T\right\} + \frac{1}{T^{1/3}}
    + \frac{\rpen}{ \ \sqrt{\Psi(\gamma_T) T}} \right] 
\end{align}
\end{corollary}
We prove this corollary in Appendix~\ref{appnd:ridge}. Corollary~\ref{cor:rid} has important implications in estimation of heterogeneous treatment effects which we discuss in the next subsection.

\subsection{Inference for Conditional Average Treatment Effects}
\label{sec:cate}

The \emph{conditional average treatment effect} (CATE) plays a central role in modern
causal inference, as it characterizes heterogeneity in treatment responses across
contexts. Formally, for a binary treatment $T \in \{0,1\}$ and covariates $X$, the CATE is
defined as
\begin{align*}
\tau(x) := \Exs\!\left[ W(1) - W(0) \mid X = x \right],
\end{align*}
where $W(a)$ denotes the potential outcome under treatment $a$.
CATEs are foundational to applications such as personalized medicine~\citep{powers2018some}, targeted
advertising~\citep{ascarza2018retention} and experimentation platforms~\citep{green2012modeling}, where decisions
are tailored to individual or contextual characteristics.

Despite its importance, valid pointwise inference for CATE for $\tau(x)$ is non-trivial even in non-adaptive settings, as CATE is a non-regular estimand in
general \citep{athey2016recursive,wager2018estimation,chernozhukov2018double,kunzel2019metalearners,nie2021quasi}. Furthermore, standard results do not generalize well to adaptive data-collection regimes—such as
contextual bandits or reinforcement learning—where treatment assignments depend on
past observations.

Our results provide a principled resolution to this inference problem for a
practically relevant class of models. Specifically, when the conditional mean outcome
is linear in known features,
\begin{align*}
\Exs[W_t \mid X_t = x, a_t = a] = c(x,a)^\top \beta^\star,
\end{align*}
the CATE admits the representation
\begin{align*}
\tau(x) = \big(c(x,1) - c(x,0)\big)^\top \beta^\star,
\end{align*}
that is, the CATE is a linear functional of the unknown parameter $\beta^\star$.
In this case, inference for CATE reduces to inference for a linear contrast of
$\beta^\star$.

When treatment assignments are generated adaptively using the regularized EXP4
algorithm (Algorithm~\ref{alg:penexp4-finite}), the stability guarantees established in
Theorem~\ref{thm:clt} and Corollary~\ref{cor:rid} ensure that ridge regression retains
a central limit theorem with the same asymptotic variance as in non-adaptive designs.
This allows for asymptotically valid Wald-type confidence intervals for CATE under adaptively collected data, without paying any additional price for adaptivity~\cite{abbasi2011improved}.
We formalize this implication below.

\begin{corollary}
\label{cor:cate}
Consider the setup of Section~\ref{sec:cate}
with noise satisfying Assumptions~\ref{assn:noise}--\ref{assn:Q-bdd}. Let treatment
assignments be generated by Algorithm~\ref{alg:penexp4-finite}, and let
$\widehat{\beta}_{\mathrm{ridge}}$ denote the ridge estimator with penalty
$\rpen \ll \sqrt{T}$ as in Corollary~\ref{cor:rid}. Then, for any fixed $x$ and contrast vector $a(x) := c(x,1) - c(x,0)$,
\begin{align*}
\dist\!\left(
        \frac{a^{\top}(\hat{\beta}_{rid} - \beta^\star)}
             {\sigma \sqrt{a^\top R_T^{-1} a}},\;
        Z
    \right)
   \;\leq\; C \left[
     \Psi(\gamma_T)^{1/3}  + \expower\left\{ - \frac{(\laml)^2}{32 +  8\laml/3} \cdot  T\right\} + \frac{1}{T^{1/3}}
    + \frac{\rpen}{ \ \sqrt{\Psi(\gamma_T) T}} \right] 
\end{align*}
where $R_T := S_T + \rpen \cdot \Id$.
\end{corollary}

\subsection{Regret guarantees}

We next turn to regret guarantees. The following theorem quantifies how the choice of $\gamma_T$ influences the regret, formalizing the tradeoff introduced by the stabilizing penalty. 

\begin{theorem}
\label{thm:regret}
Suppose Assumptions~\ref{assn:noise}-\ref{assn:Q-bdd} are in force. Then, 
for $T \ge 4$, under the stepsize 
$\eta = \sqrt{\frac{\log K}{T K}}$
and tuning parameters
$\epsilon = \frac{1}{KT}$ and $\lambda = \frac{\gamma_T}{\sqrt{T}}$,
Algorithm~\ref{alg:penexp4-finite} satisfies
\begin{align}
    \Regret(T)
    \;\le\;
    8\sqrt{T K \log K}
    \;+\;
    \gamma_T \log(KT)\sqrt{T}
    \;+\;
    \frac{4 \gamma_T^2 \log^3(KT)}{K^2 \sqrt{T}}.
    \label{eq:regret-bound}
\end{align}
\end{theorem}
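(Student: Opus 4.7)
The plan is to cast Algorithm~\ref{alg:penexp4-finite} as a mirror-descent (MD) step on the $\varepsilon$-simplex $\Delta_\varepsilon$ with the negative-entropy mirror map $\phi$ and a pseudo-gradient $\gtilde_t := \widehat g_t + (\lambda/\eta)\,\nabla R(w_t)$, then combine the standard local-norm MD regret bound with an optimality comparison between a penalized comparator and $w^\star$. First, since $w_t$ is $\mathcal{F}_{t-1}$-measurable and $\mathbb{E}[\widehat g_t \mid \mathcal{F}_{t-1}] = \bar g^\star$ by the calculation in Section~\ref{sec:unbiased-grad}, I reduce the regret to $\mathbb{E}\sum_t \langle \widehat g_t, w_t - w^\star\rangle$. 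Letting $\tilde w^\star$ be the closest point to $w^\star$ in $\Delta_\varepsilon$ in $\ell_1$ (which differs from $w^\star$ by at most $K\varepsilon = 1/T$, hence contributes at most $O(1)$ to the cumulative regret) and $w^\star_{\lambda,\varepsilon} := \arg\min_{w\in\Delta_\varepsilon}\{\langle \bar g^\star, w\rangle + (\lambda/\eta) R(w)\}$, I split
$\mathbb{E}\sum_t \langle \widehat g_t, w_t - \tilde w^\star\rangle = \mathbb{E}\sum_t \langle \widehat g_t, w_t - w^\star_{\lambda,\varepsilon}\rangle + T\cdot\langle \bar g^\star, w^\star_{\lambda,\varepsilon} - \tilde w^\star\rangle$.

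For the MD summand, the standard local-norm inequality applied to $\gtilde_t$ yields $\sum_t \langle \gtilde_t, w_t - w^\star_{\lambda,\varepsilon}\rangle \le (\log K)/\eta + (\eta/2)\sum_t \|\gtilde_t\|^2_{w_t,\ast}$. I decompose $\|\gtilde_t\|^2_{w_t,\ast} \le 2\|\widehat g_t\|^2_{w_t,\ast} + 2(\lambda/\eta)^2\|\nabla R(w_t)\|^2_{w_t,\ast}$ and control the two pieces separately: the classical EXP4 calculation (swapping sums using $\sum_k w_{t,k}\pi_k(a\mid x_t) = Q_t(a\mid x_t)$ together with $|\loss_t|\le 1$) gives $\mathbb{E}\|\widehat g_t\|^2_{w_t,\ast}\le K$, which with $\eta = \sqrt{\log K/(TK)}$ produces the leading $\sqrt{TK\log K}$ term; the explicit bound $\|\nabla R(w_t)\|_\infty = \max_k|\log(w_{t,k}/\varepsilon)| \le \log(KT)$ produces the $\gamma_T^2\log^3(KT)/(K^2\sqrt T)$ correction once $(\eta,\lambda)$ are substituted in. For the bias summand, first-order optimality of $w^\star_{\lambda,\varepsilon}$ gives $\langle \bar g^\star, w^\star_{\lambda,\varepsilon}-\tilde w^\star\rangle \le (\lambda/\eta)\bigl[R(\tilde w^\star) - R(w^\star_{\lambda,\varepsilon})\bigr] \le (\lambda/\eta)\log(KT)$, which after summing over $T$ rounds produces the middle $\gamma_T\log(KT)\sqrt T$ term.

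The delicate step is the linearization correction $-(\lambda/\eta)\sum_t \langle \nabla R(w_t), w_t - w^\star_{\lambda,\varepsilon}\rangle$ that appears when converting $\gtilde_t$ back into $\widehat g_t$ via $\widehat g_t = \gtilde_t - (\lambda/\eta)\nabla R(w_t)$. The naive convexity inequality $\langle \nabla R(w_t), w_t - w\rangle \ge R(w_t) - R(w)$ gives only a per-round contribution of order $\log K$, which when scaled by $\lambda/\eta = \gamma_T\sqrt{K/\log K}$ and summed over $T$ rounds would be linear in $T$. The fix is to exploit the explicit identity $\nabla R(w)_k = \log(w_k/\varepsilon)$ together with the simplex constraint $\sum_k(w_{t,k}-w_k) = 0$ (which kills the common $\log(1/\varepsilon)$ offset appearing in the inner product) and the first-order optimality of $w^\star_{\lambda,\varepsilon}$; together these produce enough cancellation to keep the correction at $O(\log(KT))$ per round rather than the naive $\log K$ accumulation. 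Getting this telescoping right---and showing that the resulting cross-term absorbs cleanly into the bias $\gamma_T\log(KT)\sqrt T$ already present---is where I expect the bulk of the technical work to lie.
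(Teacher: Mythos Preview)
Your overall framework---mirror descent on $\Delta_\varepsilon$ with a composite pseudo-gradient, then a comparator restricted to the truncated simplex---matches the paper, but the route you propose is considerably more elaborate than the paper's, and the complication stems from one scaling choice.

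The paper does \emph{not} introduce the penalized minimizer $w^\star_{\lambda,\varepsilon}$ in the regret proof. It takes the truncated point $w_\varepsilon = (1-(K-1)\varepsilon,\varepsilon,\dots,\varepsilon)$ and plugs it directly into the master inequality (Lemma~\ref{thm:avg-conv}). The entire decomposition is
\[
\Regret(T)\;\le\;\sum_{t=1}^T\mathbb{E}\langle\tilde g_t,\,w_t-w_\varepsilon\rangle\;+\;4TK\varepsilon\;+\;2\lambda T\log(1/\varepsilon),
\]
and the last summand is precisely the ``linearization correction'' you flag as delicate. The paper disposes of it in one line by H\"older: $|\langle\nabla R(w_t),w_t-w_\varepsilon\rangle|\le\|\nabla R(w_t)\|_\infty\|w_t-w_\varepsilon\|_1\le 2\log(1/\varepsilon)$. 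Multiplying by $\lambda=\gamma_T/\sqrt{T}$ and summing over $T$ yields $2\gamma_T\sqrt{T}\log(KT)$, which is exactly the middle term of the bound. No penalized comparator, no first-order optimality argument, no telescoping.

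The reason you anticipate a linear-in-$T$ blowup is that you write the pseudo-gradient with penalty at scale $\lambda/\eta$, whereas the paper's master lemma (Lemma~\ref{thm:avg-conv}) is formulated with $\tilde g_t=\widehat g_t+\lambda\nabla R(w_t)$; the quadratic remainder there carries $\lambda^2\|\nabla R\|^2_{w_t,*}$, not $(\lambda/\eta)^2$. With the paper's scaling the correction is $O(\lambda)$ per round and the crude H\"older bound suffices. Your own bias computation already shows the inconsistency: you write that $T\cdot(\lambda/\eta)\log(KT)$ equals $\gamma_T\sqrt{T}\log(KT)$, but $\lambda/\eta=\gamma_T\sqrt{K/\log K}$ is of constant order, so the sum is $\Theta(T)$---the very obstruction you later try to telescoping your way around. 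Once you adopt the paper's convention $\tilde g_t=\widehat g_t+\lambda\nabla R(w_t)$, both the bias term and the ``delicate step'' collapse to the same one-line H\"older bound, and the detour through $w^\star_{\lambda,\varepsilon}$ is unnecessary.

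One minor correction: the bound on the local-norm term is $\mathbb{E}\|\widehat g_t\|^2_{w_t,*}\le|\mathcal{A}|$ (Lemma~\ref{lemma:norm-ghat} in the paper), not $K$; the argument sums over actions using $\pi_k(a\mid x)\le 1$, not over experts.
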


\noindent 
We prove Theorem~\ref{thm:regret} in Section~\ref{sec:regret-thm-proof}.
Setting $\gamma_T = \sqrt{\log T}$ yields regret of order 
$O\!\left(\sqrt{T K \log(KT)}\right)$, matching minimax lower bounds up to
logarithmic factors. The same choice balances the convergence rate in 
Theorem~\ref{thm:clt}, leading to a dimension-free Wald-type interval with 
logarithmic convergence. Thus, despite being explicitly regularized for 
inferential stability, the procedure retains the characteristic efficiency of 
the EXP4 family.

\section{Experiments}
\label{sec:exps}

We evaluate the empirical behavior of the proposed \emph{regularized–EXP4} algorithm in a stochastic contextual bandit environment with linear loss structure. In this section we provide the simulation details for the standard OLS estimator. Experimental details for the ridge estimator are presented in Appendix~\ref{sec:ride-sim}.

\subsection{Simulation Environment}

\noindent The loss model follows a block–sparse linear structure
\begin{align*}
    \loss_t = \langle x_t , \theta_{a_t} \rangle + \varepsilon_t,
\end{align*}
where $\varepsilon_t\sim\text{Unif}(-0.1,0.1)$ and $x_t\in\mathbb{R}^{d_x}$ is a normalized Gaussian context vector with $\|x_t\|_2 \le 1$. Each arm $a\in\{1,\dots,A\}$ possesses an unknown parameter $\theta_a\in\mathbb{R}^{d_x}$, and the global coefficient vector
\begin{align*}
    \beta^\star = (\theta_1,\dots,\theta_A) \in \mathbb{R}^{A d_x}
\end{align*}
is normalized to satisfy $\|\beta^\star\|_2 \le 1$. The learner never observes $\theta_a$; instead, only bandit feedback $\loss_t$ is revealed.

We encode actions through a sparse block feature map
\begin{align*}
    c(x,a) = \big(0,\dots,x,\dots,0\big) \in \mathbb{R}^{A d_x},
\end{align*}

\paragraph{Extension of the feature map and parameter space.}
To ensure that the losses are non-negative,  we extend the original feature representation by introducing an intercept term. For each context–action pair $(x_t, a_t)$, the original feature vector
\begin{align*}
c(x_t, a_t) \in \mathbb{R}^{A d_x}
\end{align*}
is augmented as
\begin{align*}
\tilde{c}(x_t, a_t)
&=
\begin{pmatrix}
c(x_t, a_t) \\
1
\end{pmatrix}
\in
\mathbb{R}^{A d_x + 1}.
\end{align*}
\noindent Correspondingly, the unknown parameter vector is extended to
\begin{align*}
\tilde{\beta}^\star
&=
\begin{pmatrix}
\beta^\star \\
2
\end{pmatrix}
\in
\mathbb{R}^{A d_x + 1},
\end{align*}
Under this augmented representation, the loss model becomes
\begin{align*}
\loss_t
&=
\langle \tilde{c}(x_t, a_t), \tilde{\beta}^\star \rangle + \varepsilon_t,
\end{align*}
which is algebraically equivalent to the original linear model but explicitly accounts for a constant offset in the losses. In particular, as $|\varepsilon_t| \leq 0.1,$ and $|c(x_t,a_t)| \leq 1$ adding shift of constant $2$ to the observed loss ensures that the resultant losses are non-negative.

\noindent Our experiments are conducted in two settings. The first assumes experts based on softmax policies, and the second replaces them with  six layer neural network policies. We outline the specifics of each configuration in the following sections.

\subsubsection*{Simulation setting with softmax experts}

\noindent Each expert network produces action probabilities through a softmax map: 
\begin{align*}
\pi_k(a\mid x)= \frac{e^{\left\langle u_{a,k},x \right\rangle}}{\sum_{a \in \mathcal{A}} e^{\left\langle u_{a,k},x \right\rangle}} 
\end{align*}
where, $k \in [K]$ and the entries of the  weight vector $u_{a,k}$ are i.i.d.  draws from $\mathcal{N}(0,0.04)$ distribution. In our experiments we set $A = K = 5$ and $d_x = 10$.

\subsubsection*{Simulation setting with neural experts}

Unlike the previous setting, the experts now form neural policies with a six-layer ReLU architecture. The expert policy is a six-layer neural network given by
\begin{align*}
   x \;\longrightarrow\; h_1 \;\longrightarrow\; h_2 \;\longrightarrow\; h_3
\;\longrightarrow\; h_4 \;\longrightarrow\; h_5 \;\longrightarrow\; h_6
\;\longrightarrow\; \operatorname{softmax}(\text{logits}), 
\end{align*}
where the hidden layers satisfy
\begin{align*}
   h_i = \operatorname{ReLU}\!\left( W_i h_{i-1} + b_i \right),
\qquad i = 1,\ldots,6, 
\end{align*}
with \(h_0 = x\). The entries of the weight matrices $W_i$ are i.i.d. draws from a standard Gaussian random variable. 
The resulting expert policy is
\begin{align*}
    \pi(a \mid x) = \frac{\exp(\text{logits}_a)}{\sum_{a'=1}^A \exp(\text{logits}_{a'})}.
\end{align*}
In our experiments we set $A  = 3$ and $K = 5$ and $d_x = 50$.

\subsection{Algorithmic Configuration}

\noindent  Let $\z_t = c(x_t,a_t)$.
At the end of horizon $T$, we compute the OLS estimator
\begin{align*}
    \widehat\beta_T = S_T^{-1} b_T, \qquad
    S_T = \sum_{t=1}^T \z_t \z_t^\top,\quad
    b_T = \sum_{t=1}^T \z_t \loss_t.
\end{align*}

\noindent Hyperparameters for Algorithm~\ref{sec:algo} are set to be
\begin{align*}
\varepsilon = \frac{1}{KT},\qquad
\lambda_{\rm pen}= \frac{\sqrt{\log T}}{\sqrt{T}},\qquad
\eta = \sqrt{\frac{\log K}{ \numactions T}},
\end{align*}
unless stated otherwise. We draw a random unit direction $a\in\mathbb{R}^{A d_x}$ and for each confidence level $\alpha \in [0.20, 0.01]$, check whether the true parameter lies inside the interval. Concretely we check if the target parameter $a^\top\beta^\star$ lies in the interval 
\begin{align*}
\mathcal{I}^{\mathrm{APS}}_T(a) :=  \big[a^\top\widehat\beta_{\mathrm{ridge}}-\xi_T\sqrt{a^\top \VT^{-1}a},\;
a^\top\widehat\beta_{\mathrm{ridge}}+\xi_T\sqrt{a^\top \VT^{-1}a}\big],
\end{align*}
where $\xi_T$ is
\begin{align}
    \xi_T :=  \sqrt{
2\!\left(
\frac{1}{2}\log\frac{\det(V_t)}{\det(\lambda I)}
\;+\;
\log\frac{1}{\alpha}
\right)
}
\;+\;
\sqrt{\lambda}\,\|\beta^\star\|_2
\end{align}
We note that the confidence intervals $\mathcal{I}^{\mathrm{APS}}_T(a)$ defined above are sharper than the anytime valid confidence interval defined in equation~\eqref{eq:APS-interval}. Wald coverage is measured analogously using the confidence
 interval $ \mathcal{I}^{\mathrm{Wald}}_T(a)$ from~\eqref{eq:wald-interval} , with $\widehat{\sigma}$ as the sample standard deviation estimate~\cite[Lemma 3]{lai1982least}. For each $T \in \{500, 3000 \}$, we report empirical coverage of $\mathcal{I}^{\mathrm{APS}}(a)$ vs $\mathcal{I}^{\mathrm{Wald}}(a)$, and their average width.

\subsection{Simulation Plots}

\noindent To quantify inferential quality we compute, over $N_{\mathrm{runs}}=1200$ Monte Carlo trials, the empirical coverage and interval width of confidence intervals produced by our method and a Wald-type baseline. 

\subsubsection*{Softmax Experts}

\noindent We vary the horizon and confidence level: 
\begin{align*}
T\in\{500,3000\},\qquad
A=K=5,\qquad
d_x=30,
\end{align*}

    \begin{figure}[H]\label{fig-ucb}
    \centering

    \begin{subfigure}[t]{0.42\textwidth}
        \includegraphics[width=\textwidth,trim={1cm 0.5cm 1cm 0.5cm}]{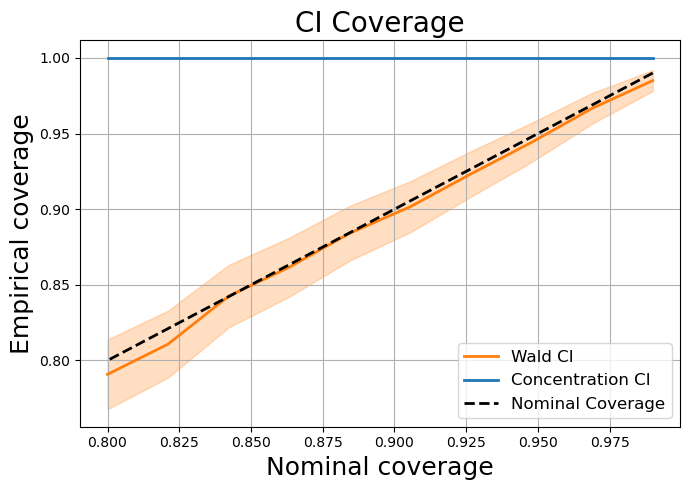}
    \end{subfigure}
    \hfill
    \begin{subfigure}[t]{0.42\textwidth}
        \includegraphics[width=\textwidth,trim={1cm 0.5cm 1cm 0.5cm}]{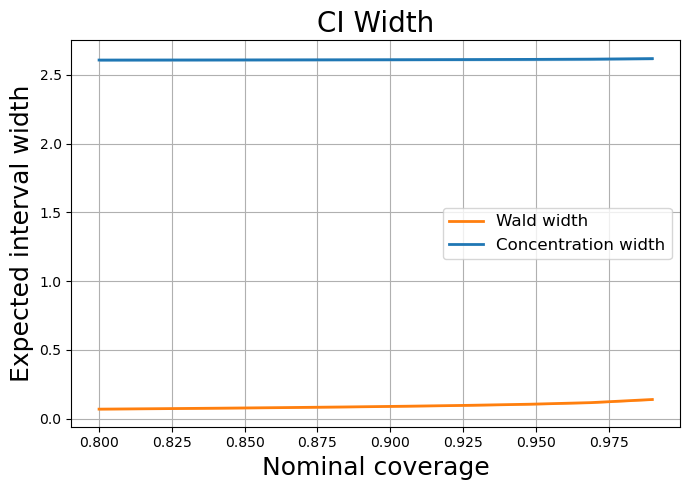}     
    \end{subfigure}
    \\[4pt]
    \caption{\textbf{Left}: Coverages of both $\mathcal{I}^{\mathrm{APS}}$ and $\mathcal{I}^{\mathrm{Wald}}$.  \textbf{Right}: Expected confidence width of  both $\mathcal{I}^{\mathrm{APS}}$ and $\mathcal{I}^{\mathrm{Wald}}$. The average CI widths of $\mathcal{I}^{\mathrm{Wald}}$ and $\mathcal{I}^{\mathrm{APS}}$ across all values of $\alpha$ are $0.08$ and $2.61$ respectively.  Simulations are based on \textbf{$T=500$} runs. }  
\end{figure}
    
    \begin{figure}[H]\label{fig-ucb}
    \centering

    \begin{subfigure}[t]{0.42\textwidth}
        \includegraphics[width=\textwidth,trim={1cm 0.5cm 1cm 0.5cm}]{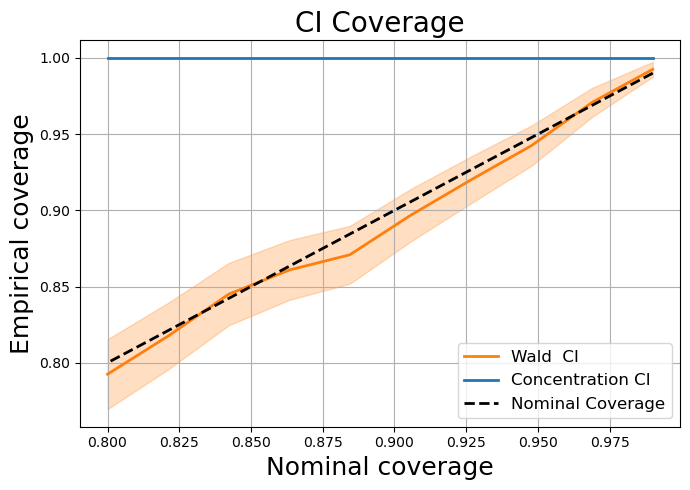}
    \end{subfigure}
    \hfill
    \begin{subfigure}[t]{0.42\textwidth}
        \includegraphics[width=\textwidth,trim={1cm 0.5cm 1cm 0.5cm}]{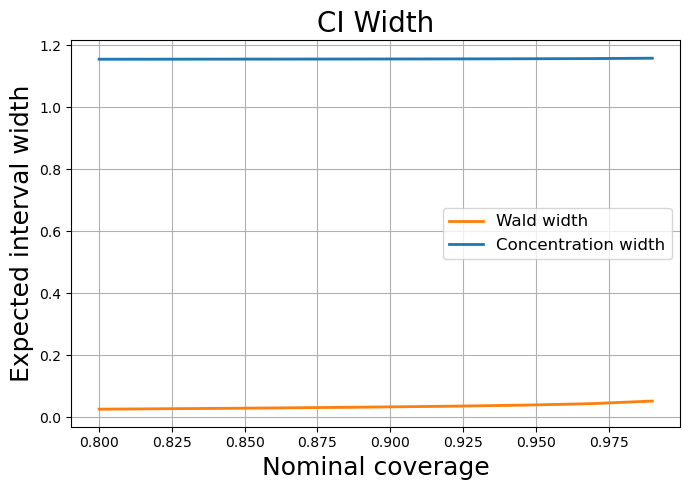}     
    \end{subfigure}
    \\[4pt]
    \caption{\textbf{Left}: Coverages of both $\mathcal{I}^{\mathrm{APS}}$ and $\mathcal{I}^{\mathrm{Wald}}$.  \textbf{Right}: Expected confidence width of  both $\mathcal{I}^{\mathrm{APS}}$ and $\mathcal{I}^{\mathrm{Wald}}$. The average CI widths of $\mathcal{I}^{\mathrm{Wald}}$ and $\mathcal{I}^{\mathrm{APS}}$ across all values of $\alpha$ are $0.03$ and $1.15$ respectively.  Simulations are based on \textbf{$T=3000$} runs.} 
\end{figure}

 \begin{figure}[H]\label{fig-ucb}
    \centering

    \begin{subfigure}[t]{0.42\textwidth}
        \includegraphics[width=\textwidth,trim={1cm 0.5cm 1cm 0.5cm}]{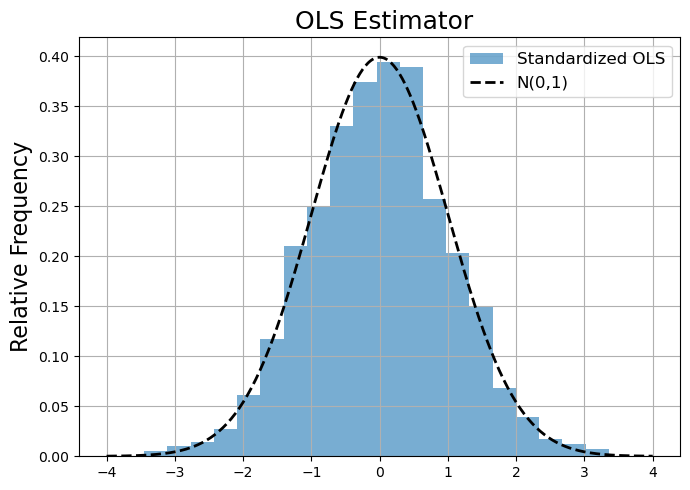}
        \caption*{$\frac{a^{\top}  \left( \betaols - \beta^\star \right)}{ \sqrt{ a^{\top} S_T^{-1} a}}$}
    \end{subfigure}
    \hfill
    \begin{subfigure}[t]{0.42\textwidth}
        \includegraphics[width=\textwidth,trim={1cm 0.5cm 1cm 0.5cm}]{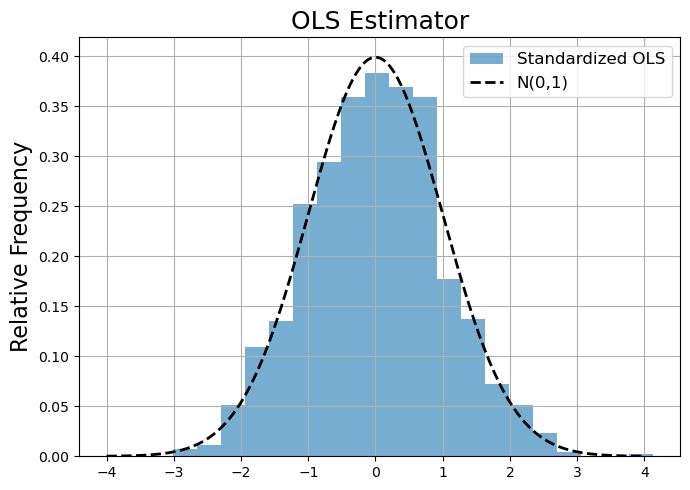}
        \caption*{$\frac{a^{\top}  \left( \betaols - \beta^\star \right)}{ \sqrt{ a^{\top} S_T^{-1} a}}$}
    \end{subfigure}
    \\[4pt]
    \caption{\textbf{Left}: Histogram of the standardized OLS estimator for the Softmax Experts with $T=500$.  \textbf{Right}: Histogram of the standardized OLS estimator for the Softmax Experts with $T=3000$.} 
\end{figure}

\subsubsection*{Neural Experts}    

\noindent We vary the horizon and confidence level: 
\begin{align*}
T\in\{500,3000\},\qquad
A=3,K=5,\qquad
d_x=50,
\end{align*}

 \begin{figure}[H]\label{fig-ucb}
    \centering

    \begin{subfigure}[t]{0.42\textwidth}
        \includegraphics[width=\textwidth,trim={1cm 0.5cm 1cm 0.5cm}]{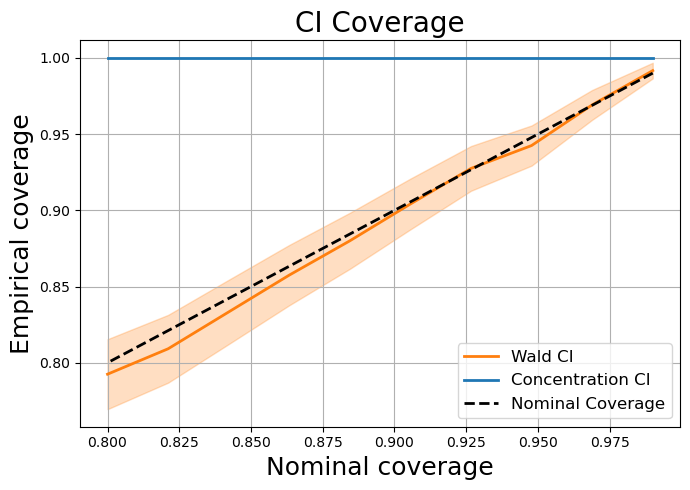}
    \end{subfigure}
    \hfill
    \begin{subfigure}[t]{0.42\textwidth}
        \includegraphics[width=\textwidth,trim={1cm 0.5cm 1cm 0.5cm}]{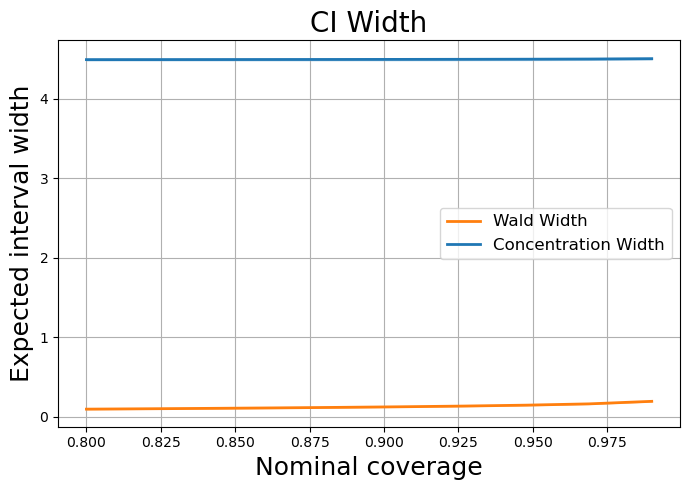}     
    \end{subfigure}
    \\[4pt]
    \caption{\textbf{Left}: Coverages of both $\mathcal{I}^{\mathrm{APS}}$ and $\mathcal{I}^{\mathrm{Wald}}$.  \textbf{Right}: Expected confidence width of  both $\mathcal{I}^{\mathrm{APS}}$ and $\mathcal{I}^{\mathrm{Wald}}$. The average CI widths of $\mathcal{I}^{\mathrm{Wald}}$ and $\mathcal{I}^{\mathrm{APS}}$ across all values of $\alpha$ are $0.12$ and $4.55$ respectively. Simulations are based on \textbf{$T=500$} runs.}  
\end{figure}
    
    \begin{figure}[H]\label{fig-ucb}
    \centering

    \begin{subfigure}[t]{0.42\textwidth}
        \includegraphics[width=\textwidth,trim={1cm 0.5cm 1cm 0.5cm}]{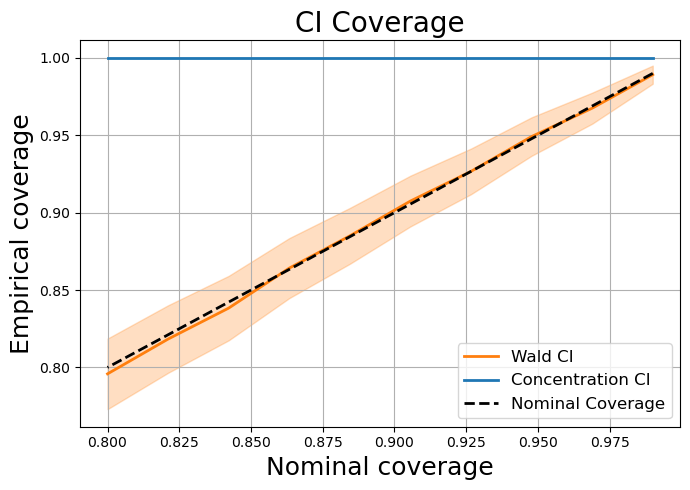}
    \end{subfigure}
    \hfill
    \begin{subfigure}[t]{0.42\textwidth}
        \includegraphics[width=\textwidth,trim={1cm 0.5cm 1cm 0.5cm}]{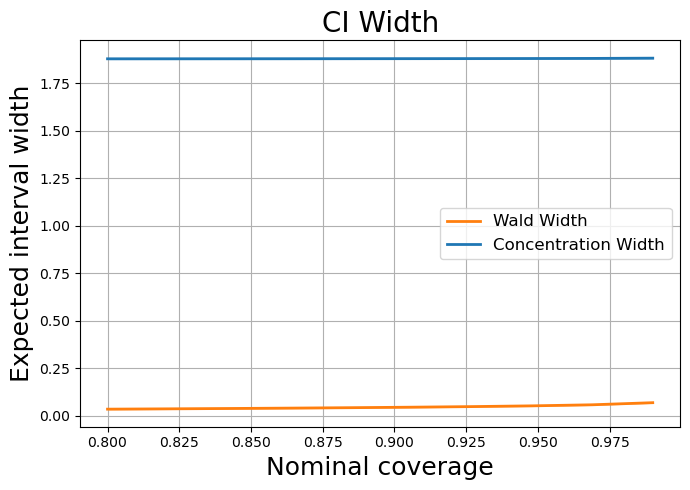}     
    \end{subfigure}
    \\[4pt]
    \caption{\textbf{Left}: Coverages of both $\mathcal{I}^{\mathrm{APS}}$ and $\mathcal{I}^{\mathrm{Wald}}$.  \textbf{Right}: Expected confidence width of  both $\mathcal{I}^{\mathrm{APS}}$ and $\mathcal{I}^{\mathrm{Wald}}$. The average CI widths of $\mathcal{I}^{\mathrm{Wald}}$ and $\mathcal{I}^{\mathrm{APS}}$ across all values of $\alpha$ are $0.06$ and $1.88$ respectively.
    Simulations are based on \textbf{$T=3000$} runs.} 
\end{figure}

   \begin{figure}[H]\label{fig-ucb}
    \centering

    \begin{subfigure}[t]{0.42\textwidth}
        \includegraphics[width=\textwidth,trim={1cm 0.5cm 1cm 0.5cm}]{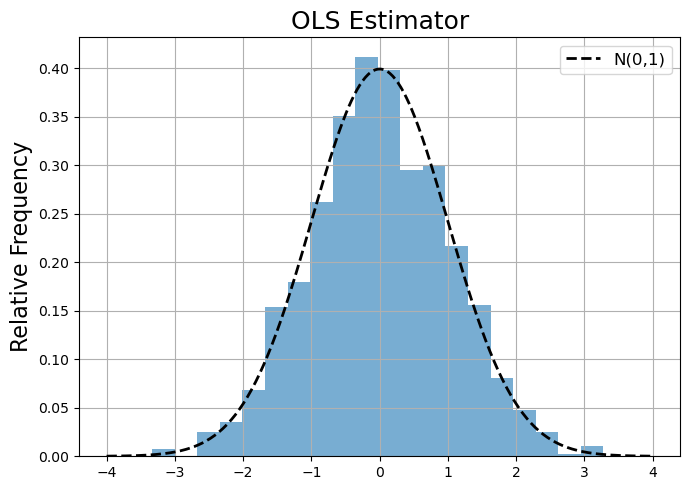}
        \caption*{$\frac{a^{\top}  \left( \betaols - \beta^\star \right)}{ \sqrt{ a^{\top} S_T^{-1} a}}$}
    \end{subfigure}
    \hfill
    \begin{subfigure}[t]{0.42\textwidth}
        \includegraphics[width=\textwidth,trim={1cm 0.5cm 1cm 0.5cm}]{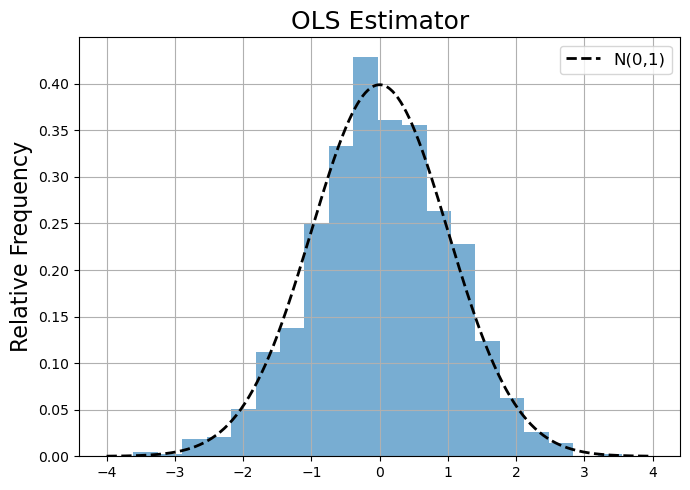}
        \caption*{$\frac{a^{\top}  \left( \betaols - \beta^\star \right)}{ \sqrt{ a^{\top} S_T^{-1} a}}$}
    \end{subfigure}
    \\[4pt]
    \caption{\textbf{Left}: Histogram of the standardized OLS estimator for the $6$-layer Neural-Network Experts with $T=500$.  \textbf{Right}: Histogram of the standardized OLS estimator for the $6$-layer Neural-Network Experts with $T=3000$.} 
\end{figure}

\section{Proof of Theorems}\label{sec:proof_of_theorems}
In this section we prove Theorems~\ref{thm:clt} and~\ref{thm:regret}.
Recall from Assumption~\ref{assn:bonded-loss} that $S_T = \sum^T_{t = 1} \cfeat(x_t, a_t)\cfeat(x_t, a_t)^\top$ where, the random vector $\cfeat(x,a) \in \real^d$ satisfies
\begin{align}
   \sup_{x,a}\|\cfeat(x,a)\|_2  \leq 1. 
\end{align}

\noindent By stability (definition~\eqref{defn:stability}) we mean that the sample covariance matrix satisfies the following property.
\begin{align}
\label{eqn:cov-stability}
\BT^{-1}S_T \inprob \mathbf{I} \qquad \text{where} \;\; \{\BT\} \;\; \text{is a sequence of non-random positive definite matrices.}
\end{align}
Let us define the matrix 
\begin{align*}
 Y_t := \cfeat(x_t, a_t)\cfeat(x_t, a_t)^\top.   
\end{align*}
We denote by $\mathcal{F}_{t}:=\sigma(x_1,a_1,r_1,\ldots,x_t,a_t,\loss_t)$, the $\sigma$-field generated by all observations up to time $t$,
including $x_t$, $a_t$ and $\loss_t$.
Since $x_t \iid \mathcal{P_X}$, the conditional expectation
\begin{align}\label{defn:condi-Yt}
   \mathbb{E}[Y_t \mid \mathcal{F}_{t-1}]
= \mathbb{E}_{x_t}\!\left[
\sum_{a\in \mathcal{A}} Q_t(a\mid x_t)\,\cfeat(x_t,a)\cfeat(x_t,a)^\top
\right]
= \sum^K_{k=1}w_{t,k}\Sigma_k  
\end{align}

\noindent where the matrices
\begin{align}\label{defn:Sigma}
   \Sigma_k := \mathbb{E}_{x}\!\left[\sum_{a\in \mathcal{A}} \pi_k(a\mid x)\,\cfeat(x,a)\cfeat(x,a)^\top\right],
\qquad k=1,\ldots,K 
\end{align}
are population-level second moments under each expert.
In order to simplify notations, throughout we assume $\sigma = 1$.

\subsection{Proof of Theorem~\ref{thm:clt}}
\label{sec:Proof-of-thm:stability}

\noindent We first prove stability holds for the standard OLS estimator, assuming that $S_T$ is invertible. Once the stability condition~\eqref{defn:stability} is verified, the asymptotic normality claim follows directly by invoking Theorem 3 from~\cite{lai1982least} along with an application of Slutsky's theorem. 

\subsection*{Part (a): proof of stability condition for $S_T$~\eqref{eqn:cov-stability}}

\noindent In order to prove the stability of the sample covariance matrix $S_T$, we need to first prove that the average of the \emph{random} weight vectors $\{w_{t}\}_{t \geq 0}$ converge to a non-random vector, which we call $w_T^\star$.  Concretely, 
\begin{align}
\label{eqn:weight-stability}
    \frac{1}{T}\sum_{t = 1}^T w_{t,k}-\wstar_{T,k}  \inprob 0
\end{align}
for all experts $k \in [K]$.  

\subsubsection*{Proof of weight stability~\eqref{eqn:weight-stability}:}

\noindent We begin by recalling that $\gtilde_t = \ghat_t + \lambda \grad R(w_t)$. Let $\tilde{R}_\lambda(w) := \lambda R(w) + \langle \bar g^\star, w\rangle $. If $f$ is any $M$-strongly convex function, then $f+g$ is also  $M$-strongly convex if $g$ is a linear function (\citet{nesterov2013introductory}). This result implies that $\tilde{R}_\lambda$ is $\lambda$ strongly convex. Now, for any arbitrary $y \in \Delta_\epsilon$ we have,
\begin{align} \label{eqn:master-ineqone}
\notag
  \dfrac{1}{T} \sum_{t=1}^T \Exs\langle \tilde{g}_t , w_{t} - y \rangle 
   & = \dfrac{1}{T} \sum_{t=1}^T \Exs\langle \bar g^\star + \lambda \nabla R(w_{t}) , w_{t} - y \rangle \\[8pt]\notag
   & = \dfrac{1}{T} \sum_{t=1}^T \bigg( \Exs\langle \bar g^\star, w_{t} - y \rangle  + \lambda \Exs \langle \nabla R(w_{t}) , w_{t} - y \rangle \bigg) \\[8pt]\notag
   & \overset{(i)}{\geq} \dfrac{1}{T} \sum_{t=1}^T \bigg( \Exs\langle \bar g^\star, w_{t} - y \rangle  + \lambda \Exs( R(w_{t}) - R(y)) \bigg) \\[8pt]\notag
   & \geq \dfrac{1}{T} \sum_{t=1}^T \bigg( \Exs [\langle \bar g^\star, w_{t} \rangle +  \lambda R(w_{t})]  -\Exs [\langle \mu, y \rangle +  \lambda R(y)] \bigg) \\[8pt]\notag
   & \overset{(ii)}{=}  \dfrac{1}{T} \sum_{t=1}^T \Exs \bigg[\tilde{R}(w_{t})  - \tilde{R}(y) \bigg] \\[8pt]\notag
   & \overset{(iii)}{\geq}  \Exs \bigg[\tilde{R}(\bar{w}_{T})  - \tilde{R}(y) \bigg]\\[8pt]
   & \overset{(iv)}{=}  \Exs \bigg[D_{\tilde{R}_\lambda}(\bar{w}_{T},y) + \langle \nabla \tilde{R}_\lambda(y), \bar{w}_{T}-y\rangle) \bigg]
\end{align}

\noindent  Where $D_{\tilde{R}_\lambda}$ is the Bregman divergence defined in terms of  $\tilde{R}_\lambda$. The first equality above follows by the unbiasedness of the gradient estimate $\widehat{g}_t$ (Section~\ref{sec:unbiased-grad}).  Equation $(i)$ above follows from convexity of $R$, equation $(ii)$ follows from definition of $\tilde{R}$ and $(iii)$ holds due to convexity of $\tilde{R}_{\lambda}$. Equality $(iv)$ follows from the definition of Bregman divergence. Note that the above discussion holds for any $y \in \Delta_\epsilon$.  Let us define $\wstar_T$ as follows
\begin{align}\label{defn:wstar}
    \wstar_T = \arg \min_{w \in \Delta_\epsilon} \tilde{R}_\lambda(w)
\end{align}

\noindent If we choose $y = \wstar_T$ then by applying the first order optimality criteria of equation~\eqref{defn:wstar} and Pinsker's inequality we have 
\begin{align}\label{eqn:master-ineqtwo}
  \langle \nabla \tilde{R}_\lambda(\wstar_T), \bar{w}_{T}-\wstar_T \rangle \geq 0
  \quad \text{and,} \ \ D_{\tilde{R}_\lambda}(\bar{w}_{T},\wstar_T) \geq \dfrac{\lambda}{2} \|\bar{w}_{T} - \wstar_T \|_1^2 
\end{align}

\noindent The above inequality is justified as on $\Delta_\varepsilon$, the Bregman divergence induced by the (shifted) negative entropy satisfies
$D_R(p,q)=\mathrm{KL}(p\|q)$ (the additive and linear terms cancel on the simplex).
From equations~\eqref{eqn:master-ineqone} and ~\eqref{eqn:master-ineqtwo}, we arrive at the following lower bound:
\begin{align}\label{eqn:master-ineqthree}
    \dfrac{1}{T} \sum_{t=1}^T \Exs\langle \tilde{g}_t , w_{t} - w^\star_T \rangle 
    \geq \dfrac{\lambda}{2} \Exs \|\bar{w}_{T} - \wstar_T \|_1^2 
\end{align}

\noindent Now,  the following lemma provides an upper bound on $\frac{1}{T} \sum_{t=1}^T \Exs\langle \tilde{g}_t , w_{t} - w^\star_T \rangle $.
\begin{lemma}\label{lemma:wmorm}
Under the setup of Theorem~\ref{thm:clt}, we have
\begin{align*}
    \dfrac{1}{T} \sum_{t=1}^T \Exs\langle \tilde{g}_t , w_{t} - \wstar_T \rangle 
    \leq 
    5{\frac{\sqrt{\numactions \log K}}{\sqrt{T}}} +
     \frac{4\gamma_T^2 \log(K) \log^2\left( KT \right) }{ \numactions^2 \sqrt{T}  }
\end{align*}
     
\end{lemma}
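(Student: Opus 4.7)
The bound is a standard online--mirror--descent (OMD) regret guarantee tailored to the penalized EXP4 update, with $\log K$ coming from the entropic Bregman divergence at the uniform start, the $\sqrt{\numactions\log K/T}$ term coming from the classical EXP4 second--moment bound on the importance--weighted loss gradient, and the $\gamma_T^2\log K\log^2(KT)/(\numactions^2\sqrt{T})$ term coming from the curvature introduced by $\lambda R$. The strategy is therefore (i) write the update as a composite OMD step for the negative--entropy mirror map $\phi$, (ii) apply the one--step Bregman inequality and telescope using the generalized Pythagorean inequality on $\Projset$, and (iii) split the stability term between the $\ghat_t$ and $\lambda \grad R(w_t)$ contributions.

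\paragraph{Steps.} The intermediate weight $w_{t+1}^+$ is the prox step for $\phi$ with direction $d_t := \eta\ghat_t + \lambda\grad R(w_t)$; the constant shift $\lambda c_R\mathbf{1}$ is common across coordinates and is absorbed by the simplex renormalization in the projection. First--order optimality plus the three--point Bregman identity gives the per--step inequality
\[
\langle d_t, w_t - \wstar_T\rangle \;\le\; D_\phi(\wstar_T, w_t) - D_\phi(\wstar_T, w_{t+1}^+) + D_\phi(w_t, w_{t+1}^+).
\]
Because $\wstar_T \in \Projset$, the generalized Pythagorean inequality gives $D_\phi(\wstar_T, w_{t+1}) \le D_\phi(\wstar_T, w_{t+1}^+)$, so summing over $t$ telescopes the first two terms and, using $D_\phi(\wstar_T, w_1) \le \log K$, yields
\[
\sum_{t=1}^T \langle \eta\ghat_t + \lambda \grad R(w_t), w_t - \wstar_T\rangle \;\le\; \log K + \sum_{t=1}^T D_\phi(w_t, w_{t+1}^+).
\]
For the stability sum, I expand the entropy Bregman divergence as $D_\phi(w_t, w_{t+1}^+) = \sum_k w_{t,k}(e^{-d_{t,k}}-1+d_{t,k})$ and apply $e^{-x}-1+x \le x^2$ on the regime where $x \ge -1$ to get $D_\phi(w_t, w_{t+1}^+) \lesssim \|d_t\|_{w_t,\ast}^2 \le 2\eta^2\|\ghat_t\|_{w_t,\ast}^2 + 2\lambda^2\|\grad R(w_t)\|_{w_t,\ast}^2$. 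The EXP4 conditional second--moment calculation, using $|\loss_t|\le 2$ from Assumptions~\ref{assn:noise}--\ref{assn:bonded-loss} and $\sum_k w_{t,k}\pi_k(a\mid x)^2 \le Q_t(a\mid x)$, gives
\[
\Exs\!\Big[\sum_{k=1}^K w_{t,k}\,\ghat_{t,k}^2 \,\Big|\, \mathcal{F}_{t-1}\Big] \;\lesssim\; \numactions.
\]
A direct computation shows $[\grad R(w_t)]_k = \log w_{t,k} + \log(1/\varepsilon) \in [0, \log(KT)]$ under $\varepsilon = 1/(KT)$ and $w_{t,k} \ge \varepsilon$, so $\|\grad R(w_t)\|_{w_t,\ast}^2 \le \log^2(KT)$. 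Dividing the telescoped inequality by $\eta T$, taking expectations, and substituting $\eta = \sqrt{\log K/(\numactions T)}$ and $\lambda = \gamma_T/\sqrt{T}$ collects the two terms appearing in the stated bound after tracking constants.

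\paragraph{Main obstacle.} The delicate point is the local--norm bound $D_\phi(w_t, w_{t+1}^+) \lesssim \|d_t\|_{w_t,\ast}^2$: because $\ghat_{t,k}$ can reach $O(1/\varepsilon) = O(KT)$ pointwise, the classical ``small step'' hypothesis $\eta|\ghat_{t,k}| \le 1$ does \emph{not} hold uniformly in $k$, and one cannot rely on the dual $\ell_\infty$ norm. Control must instead come from the $w_t$--weighted moment, where individual blowups are compensated by small sampling probabilities via the EXP4 identity. Handling the regime where $d_{t,k}$ is large negative (so that $e^{-d_{t,k}}-1+d_{t,k}$ is not quadratically dominated) requires either restricting to $\eta\ghat_{t,k} \ge -1$ and treating the complementary event as a rare--event correction, or splitting the penalty contribution to exploit the sign of $\grad R(w_t)_k \ge 0$; once this quadratic bound is in place, the rest of the computation reduces to plugging in the prescribed $\eta$, $\lambda$, and $\varepsilon$ and tracking constants.
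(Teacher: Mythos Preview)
Your overall plan is correct and follows the paper's route exactly: Bregman three--point identity, generalized Pythagorean inequality on $\Delta_\varepsilon$, telescope, then control the stability term in the local norm and plug in $\mathbb{E}\|\widehat g_t\|_{w_t,\ast}^2 \le |\mathcal{A}|$ and $\|\nabla R(w_t)\|_{w_t,\ast}^2 \le \log^2(1/\varepsilon)$.

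The one place where your argument is left incomplete is precisely the step you flag as the main obstacle, and the paper's resolution is simpler than the rare--event or sign--splitting routes you suggest. The key observation is that the direction $\widetilde g_t = \widehat g_t + \lambda\nabla R(w_t)$ is \emph{coordinate-wise nonnegative}: $[\nabla R(w)]_k = \log(w_k/\varepsilon) \ge 0$ on $\Delta_\varepsilon$ by design of the penalty, and $\widehat g_{t,k} = \ell_t\,\pi_k(a_t\mid x_t)/Q_t(a_t\mid x_t) \ge 0$ under nonnegative losses (standard in EXP-type analyses; if losses are signed but bounded one translates, which is harmless on the simplex). With $\widetilde g_t \ge 0$, the dual update satisfies $\nabla\phi(w_{t+1}^+) \le \nabla\phi(w_t)$ coordinate-wise, so the intermediate point $z_t$ in the second--order Taylor expansion of $D_{\phi^\ast}(\nabla\phi(w_{t+1}^+),\nabla\phi(w_t))$ obeys $z_t \le \nabla\phi(w_t)$; since $\nabla^2\phi^\ast(y) = \mathrm{Diag}(e^{y_i})$ is coordinate-wise monotone, $\nabla^2\phi^\ast(z_t) \preceq \mathrm{Diag}(w_t)$. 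This yields $D_\phi(w_t, w_{t+1}^+) \le \tfrac{1}{2}\|\eta\widetilde g_t\|_{w_t,\ast}^2$ with no step--size restriction. In your exponential--inequality language, nonnegativity places you in the regime $d_{t,k} \ge 0$, where $e^{-x}-1+x \le x^2/2$ holds unconditionally, so the large--negative regime you worry about never arises.
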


\noindent We  prove this lemma in Section~\ref{sec:regret-thm-proof}
. From equation~\eqref{eqn:master-ineqthree} and Lemma~\ref{lemma:wmorm} it follows from Jensen's inequality that,
\begin{align}\label{eqn:wlone-err}
\notag
    \Exs[\|\bar{w}_{T} - \wstar_T \|_1] 
    &\leq \sqrt{\frac{2}{\lambda_T}} \sqrt{\dfrac{1}{T} \sum_{t=1}^T \Exs\langle \tilde{g}_t , w_{t} - \wstar_T \rangle } \notag \\[8pt] \notag
    & \leq  \sqrt{10{\frac{\sqrt{\numactions \log K}}{\sqrt{T}\lambda_T}} +
     \frac{8\gamma_T^2 \log(K) }{ \lambda_T \numactions^2 T \sqrt{T}}  \log^2\left( KT \right)}\\[8pt] \notag
    & =  \sqrt{10{\frac{\sqrt{\numactions \log K}}{\gamma_T}} +
     \frac{8\gamma_T \log(K) \log^2\left( KT \right) }{ \numactions^2 T  }  }\\[8pt]
    & = \err
\end{align}

\noindent  Hence, if $\numactions$ and $K$ do not vary with $T$ then for any $\gamma_T \rightarrow \infty$ such that $ T/ \log^2 T \gg \gamma_T$ it follows that $\frac{1}{T}\sum_{t = 1}^T w_{t,k}-\wstar_{T,k}  \inprob 0$ for all experts $k \in [K]$. 

\noindent Now we are ready to prove stability of $S_T$. We first decompose $S_T$ into two parts:
\begin{subequations}
\begin{align}\label{defn:MT-BT}
    &S_T = M_T + B_T \qquad \text{where}  \\
&D_t := Y_t - \mathbb{E}[Y_t\mid \mathcal{F}_{t-1}], \;\;   M_T := \sum_{t=1}^T D_t, \;\; \text{and} \;\;  B_T := \sum_{t=1}^T \mathbb{E}[Y_t\mid \mathcal{F}_{t-1}].  
\end{align}
\end{subequations} 
\noindent Now, we define
\begin{align}\label{defn:Sigma-star}
    \frac{1}{T} \BT := \sum^K_{k = 1} \wstar_{T,k} \Sigma_k
\end{align}
where 
\begin{align*}
    \Sigma_k :=   \mathbb{E}_{x \sim \mathcal{P_X}} \left( \sum_{a \in \mathcal{A}} \pi_k(a \mid x) \cdot c(x, a) c(x, a)^\top \right)
\end{align*}
\noindent We show, using Lemma~\ref{lemma:MB-inprob} stated below, that
\begin{align}
\label{eqn:ST-decomp-II}
    \dfrac{S_T}{T} 
    = \dfrac{M_T}{T} + \dfrac{B_T}{T} 
     = \underbrace{\dfrac{M_T}{T}}_{o_{\Prob}(1)} + \underbrace{\left[\dfrac{B_T}{T}-\frac{1}{T} \BT \right]}_{o_{\Prob}(1)} + \frac{1}{T} \BT  
\end{align}

\begin{lemma}\label{lemma:MB-inprob}
    Let $\Sigma_k,M_T$ and $B_T$ be as defined in equations~\eqref{defn:MT-BT}. Suppose that there exists non-random weight vector $\wstar_T = (\wstar_{T,1}, \ldots , \wstar_{T,K})^\top$ such that $\frac{1}{T}\sum_{t = 1}^T w_{t,k}-\wstar_{T,k}  \inprob 0$ for all experts $k \in [K]$. Then we have,
    \begin{align}
        \Exs \left[\bigg| \bigg| \dfrac{M_T}{T} \bigg| \bigg|_{op} \right] \lesssim \dfrac{1}{\sqrt{T}} \quad \text{and}, \quad  \Exs \left[\bigg| \bigg| \dfrac{B_T}{T}-\sum^K_{k = 1} \wstar_{T,k} \Sigma_k \bigg| \bigg|_{op} \right] \leq \Exs[\|\bar{w}_T - w^\star_T \| ].
    \end{align}  
\end{lemma}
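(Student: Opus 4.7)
The lemma has two independent parts, and the proof of neither relies on stability of the EXP4 iterates themselves (which was already used to derive the convergence~\eqref{eqn:weight-stability}): only boundedness of the feature map from Assumption~\ref{assn:bonded-loss} and the martingale identity just above the lemma statement are needed.

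\textbf{Part (a): the martingale term $M_T$.} By construction $\{D_t\}$ is an $\mathcal{F}_t$-adapted matrix-valued martingale difference sequence. Since $\|\cfeat(x,a)\|_2 \le 1$, the rank-one matrix $Y_t$ satisfies $\|Y_t\|_{\mathrm{op}}\le 1$ and therefore $\|D_t\|_{\mathrm{op}}\le 2$ almost surely. The cleanest route is to pass to the Frobenius norm, which dominates the operator norm, and exploit the orthogonality of martingale increments in the Frobenius inner product:
\begin{equation*}
    \mathbb{E}\|M_T\|_F^2 \;=\; \sum_{t=1}^T \mathbb{E}\|D_t\|_F^2 \;\le\; \sum_{t=1}^T d\,\mathbb{E}\|D_t\|_{\mathrm{op}}^2 \;\le\; 4 d T.
\end{equation*}
Jensen's inequality then gives $\mathbb{E}\|M_T/T\|_{\mathrm{op}}\le \mathbb{E}\|M_T/T\|_F \le 2\sqrt{d/T}\lesssim 1/\sqrt{T}$, treating $d$ as a constant. (A matrix Azuma--Hoeffding inequality would sharpen the dimension dependence to $\sqrt{\log d}$, but this is unnecessary.)

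\textbf{Part (b): the predictable term $B_T$.} The identity $\mathbb{E}[Y_t\mid\mathcal{F}_{t-1}]=\sum_{k=1}^K w_{t,k}\Sigma_k$ derived immediately above the lemma yields $B_T/T=\sum_{k=1}^K \bar w_{T,k}\Sigma_k$ where $\bar w_T = T^{-1}\sum_{t=1}^T w_t$. Since $\|\cfeat(x,a)\|_2\le 1$, for any unit vector $v$ one has $v^\top \Sigma_k v = \mathbb{E}_x\!\left[\sum_a \pi_k(a\mid x)(v^\top \cfeat(x,a))^2\right]\le 1$, so $\|\Sigma_k\|_{\mathrm{op}}\le 1$ for every $k$. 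The triangle inequality for the operator norm then yields
\begin{equation*}
    \bigg\|\frac{B_T}{T} - \sum_{k=1}^K w^\star_{T,k}\Sigma_k\bigg\|_{\mathrm{op}} \;\le\; \sum_{k=1}^K |\bar w_{T,k} - w^\star_{T,k}|\cdot\|\Sigma_k\|_{\mathrm{op}} \;\le\; \|\bar w_T - w^\star_T\|_1,
\end{equation*}
and taking expectations produces the stated bound (with the ambient norm in the lemma interpreted as $\ell_1$, consistent with the bound used earlier from Pinsker's inequality).

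\textbf{Anticipated difficulty.} There is no genuine obstacle in the lemma itself: the heavy lifting (the mirror-descent control of $\bar w_T - w^\star_T$) is carried out by Lemma~\ref{lemma:wmorm} and inequality~\eqref{eqn:master-ineqthree}, and the role of this lemma is simply to convert weight stability into covariance stability. The only minor care needed is the dimensional constant hidden in $\lesssim$ for part (a), which is harmless in the fixed-$d$ asymptotic regime of the paper; tracking it explicitly would replace $1/\sqrt{T}$ by $\sqrt{d/T}$ but would not change any downstream conclusion of~\eqref{eqn:ST-decomp-II}.
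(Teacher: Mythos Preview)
Your proposal is correct and follows essentially the same approach as the paper: both parts use the Frobenius-norm orthogonality of the martingale increments $D_t$ (plus Jensen) for $M_T$, and the triangle inequality together with the uniform boundedness of $\|\Sigma_k\|_{\mathrm{op}}$ for $B_T$. If anything, your argument is slightly cleaner in making the bound $\|\Sigma_k\|_{\mathrm{op}}\le 1$ explicit rather than appealing to entrywise boundedness.
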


\noindent We prove this Lemma in Appendix~\ref{app:Aux-lemmas}. Let $\lambda^\star$ be the minimum eigenvalue among $\lambda_{\min}(\Sigma_k)$ for all $k\in [K]$. Note that $\lambda_{\min} \left(\frac{1}{T} \BT \right) \geq \lambda^\star > 0$ by Assumption~\ref{assn:context}. Hence, the minimum eigenvalue of $\Sigma_T^\star/T$ --- which is a convex combination of $\Sigma_k's$ --- is also lower bounded by $\lambda^\star$. This fact combined with the decomposition~\eqref{eqn:ST-decomp-II} yields 
\begin{align*}
    \BT^{-1}S_T \xrightarrow{\Prob} \Id.
\end{align*}.

\subsection*{Part (b): Proof of the quantitative CLT}

\noindent  The standard approach to prove such Berry Esseen bound is to uncover and utilize a martingale difference structure in our statistic of interest~(\citet{hall2014martingale}).  For notational simplicity, we assume that the sample covariance matrix $S_T$ is invertible. If $S_T$ is not invertible, a modified ridge estimator using $R_T: = S_T + \lambda_{rid} I$ exhibits equivalent asymptotic behavior. We highlight the details in Appendix~\ref{appnd:ridge}.

\noindent Fix a vector $a \in \real^d \setminus \{ 0\} $ and define 
\begin{align}\label{defn:stao}
   \stao : = \frac{a^{\top}  \left( \betaols - \beta^\star \right)}{ \sqrt{ a^{\top} S_T^{-1} a}} = \frac{a^{\top} S_T^{-1}}{ \sqrt{ a^{\top} S_T^{-1} a}} \sum^T_{t=1} \z_t  \varepsilon_t.
\end{align}
where $\z_t : = \cfeat(x_t, a_t)$ .We note that since $S_T$ is a random matrix measurable with respect to $\fil_T$, and $\stao$ is not a sum of martingale difference sequence. 

\noindent Define $\PT := \Exs[S_T]$ and consider the alternate statistic
\begin{align}\label{defn:stato}
    \stato := \frac{a^{\top} (\PT)^{-1}}{ \sqrt{ a^{\top} (\PT)^{-1} a}} \sum^T_{t=1} \z_t  \varepsilon_t
\end{align}

\noindent Observe that assumption~\ref{assn:context} ensures that $\PT$ is invertible. Let $b = (\PT)^{-1/2} \ a$. Algebraic manipulation yields that,
\begin{align*}
    \stato
   = \frac{b^{\top} (\PT)^{-1/2}}{ \|b \|_2} \sum^T_{t=1} \z_t  \varepsilon_t 
    = \frac{1}{ \|b \|_2} \sum^T_{t=1} h_t  \varepsilon_t
\end{align*}
where $h_t := b^{\top} (\PT)^{-1/2} \z_t$. We observe that
\begin{align*}
    \Exs[h_t\varepsilon_t \mid \fil_{t-1}]
    &=  \Exs[ \Exs[ h_t\varepsilon_t \mid \fil_{t-1},x_t,a_t] \mid \fil_{t-1}]\\[8pt]
    & = \Exs[h_t \Exs[ \varepsilon_t \mid \fil_{t-1},x_t,a_t] \mid \fil_{t-1}]\\[8pt]
    & = 0
\end{align*}

\noindent Therefore, $(h_t\varepsilon_t,\fil_t)_{t\geq 1}$ is a martingale difference sequence and one may analyze $\stato$ using standard martingale CLT (\citet{mourrat2013rate}).  Letting  $\MTP := k_T \mtp$ where $k_T = \sqrt{ a^{\top} \SigPinv a / a^{\top} S_T^{-1} a}$ we may rewrite our statistic of interest $\stao$ in terms of $\stato$:
\begin{align}
    \stao = \stato \ + \frac{a^{\top} \left(\MTP-\Id  \right)}{ \| b\|_2} \sum^T_{t=1} \SigPinv\z_t  \varepsilon_t 
    = \stato + \ratto
\end{align}
where $\ratto :=  \frac{a^{\top} \left(\MTP-\Id  \right)}{ \| b\|_2} \sum^T_{t=1} \SigPinv\z_t  \varepsilon_t $. 
In Lemma~\ref{lemma:stato-genbdd} we argue that the behavior of $\stao$ and $\stato$ are equivalent for large $T$.  Formally, we have 
\begin{lemma}\label{lemma:stato-genbdd}
   Let $\natto :=\left[\frac{1}{T} \PT \right]^{-1} \times \left[ \frac{M_T}{T} + \frac{B_T}{T}-\frac{1}{T} \PT\right]$ with $\PT = \Exs [S_T]$, and $Z$ be a copy of the standard normal random variable. Suppose that 
\begin{align}\label{eqn:assn-stato}
    \quad \dist \left(\stato, Z \right) \leq \beta_T
\end{align}
where $\beta_T$ is some positive real sequence. Then,
\begin{align}\label{eqn:stao-general-bdd}
    \dist \left(\stao, Z \right) \lesssim \beta_T + 2 \cdot \err^{1/2} \ + \  d \cdot \expower\left\{ - \frac{(\laml)^2}{32 +  8\laml/3} \cdot  T\right\}
\end{align}
where $\err$ as defined in equation~\eqref{defn:err}.
\end{lemma}

\noindent It now remains to analyze the term $\stato$, and we do so by using a Berry-Esseen bounds for martingale CLTs~\citep{mourrat2013rate}. Lemma~\ref{lemma:rate-stao} stated below characterizes $\beta_T$.
\begin{lemma}\label{lemma:rate-stao}
    Let $a\in \real^d$ be any arbitrary real vector, $\PT = \Exs[\sum^T_{t=1}\z_t \z^{\top}_t]$. Then
    \begin{align}\label{eqn:rate-stao}
    \dist \left(\stato,Z \right)
        \leq  \ C \left[ \Psi(\gamma_T)^{1/3} \ + \ \frac{1}{T^{1/3}}  \right]
    \end{align}
where $\err$ as is defined in equation~\eqref{defn:err} and $C$ is a constant independent of $T$.
\end{lemma}

\noindent Hence by combining equations~\eqref{eqn:stao-general-bdd} and~\eqref{eqn:rate-stao} together we obtain:
\begin{align*}
  \dist \left(\stao, Z \right)
  \lesssim \Psi(\gamma_T)^{1/3} \ + \ \frac{1}{T^{1/3}}
  \ + \ 2 \cdot \err^{1/2} \ + \  d \cdot \expower\left\{ - \frac{(\laml)^2}{32 +  8\laml/3} \cdot  T\right\}
\end{align*}

\noindent As $\err^{1/2}$ is strictly dominated by $\err^{1/3}$, we may ignore this term. This completes the proof of Theorem~\ref{thm:clt}. We prove Lemma~\ref{lemma:rate-stao} in Appendix~\ref{append:imp}, and prove our key Lemma~\ref{lemma:stato-genbdd} next.

\subsubsection*{Proof of Lemma~\ref{lemma:stato-genbdd}:}

\noindent Recall the following decomposition.
\begin{align} 
    \stao = \stato \ + \frac{a^{\top} \left(\MTP-\Id  \right)}{ \| b\|_2} \sum^T_{t=1} \SigPinv\z_t  \varepsilon_t 
    = \stato + \ratto
\end{align}
where $\ratto :=  \frac{a^{\top} \left(\MTP-\Id  \right)}{ \| b\|_2} \sum^T_{t=1} \SigPinv\z_t  \varepsilon_t $ and $\MTP := k_T \mtp$ such that
\begin{align*}
  k_T = \sqrt{ \frac{a^{\top} \SigPinv a }{ a^{\top} S_T^{-1} a}}  
\end{align*}

\noindent Now, fix a positive sequence $\delo$  whose explicit choice will be made later in the proof. Define event $\mathcal{E}_1(T)$ such that
\begin{align}
     \Eone := \left\{ |\ratto| \leq \delta_{1,T} \right\}
\end{align}

\noindent On event $\Eone$, we have
\begin{align}\label{eqn:ratto-sand}
    \stato - \delta_{1,T} \leq \stao \leq \stato + \delta_{1,T}
\end{align}

\noindent Furthermore, suppose that for every sequence $\delo$, there exists another sequence $\deltoto$ such that
\begin{align}\label{eqn:Eone-comp}
    \Prob(\Eone^c)
     \leq \delta_{2,T}
\end{align}

\noindent Now we are ready to prove our claim~\eqref{eqn:stao-general-bdd}, which we prove in two steps. We use  equation~\eqref{eqn:ratto-sand} to derive lower and upper bounds on $\Prob(\stao \leq x)-\Phi(x)$, which are free of $x$. 
\begin{align*}
    \Prob(\stao \leq x) - \Phi(x)
    &\geq  \Prob(\stao \leq x, \Eone) - \Phi(x)\\[8pt]
    &\geq  \Prob(\stato + \delta_{1,T} \leq x, \Eone) - \Phi(x)\\[8pt]
    &\geq  \Prob(\stato \leq x -\delta_{1,T}) - \Phi(x) - \Prob(\Eone^c )
\end{align*}

\noindent From the last inequality we have:
\begin{align*}
    &\Prob(\stao \leq x) - \Phi(x)\\[8pt]
    & \geq \bigg[ \underbrace{\Prob(\stato \leq x-\delta_{1,T}) - \Phi(x-\delta_{1,T})}_{\mathcal{I}_1} \bigg]
    + \bigg[ \underbrace{\Phi(x-\delta_{1,T}) - \Phi(x)}_{\mathcal{I}_2} \bigg]
    - \underbrace{\Prob(\Eone^c)}_{\mathcal{I}_3}
\end{align*}

\noindent Now, from our assumption~\eqref{eqn:assn-stato} we have $\mathcal{I}_1 \geq -\beta_T$. By applying first order Taylor expansion along with the fact that $\sup_x e^{-x^2}<1$ leads us to $\mathcal{I}_2 \gtrsim -\delta_{1,T}$. Finally, equation~\eqref{eqn:Eone-comp} implies that $\mathcal{I}_3 \leq \delta_{2,T}$. Hence, for all $x\in \real$,
\begin{align}
  \Prob(\stao \leq x) - \Phi(x) \gtrsim - \bigg[\beta_T +\delta_{2,T} + \delta_{1,T} \bigg]  
\end{align}

\noindent Now, for the upper bound, observe that
\begin{align*}
    \Prob(\stao \leq x) - \Phi(x)
    &=\Prob(\stao \leq x, \Eone) + \Prob(\stao \leq x, \Eone^c) - \Phi(x)\\[8pt]
    &\leq  \Prob(\stato - \delta_{1,T} \leq x, \Eone) + \Prob(\Eone^c ) - \Phi(x)\\[8pt]
    &\leq  \Prob(\stato \leq x+ \delta_{1,T}) - \Phi(x) + \Prob(\Eone^c )
\end{align*}

\noindent The last inequality leads to,
\begin{align*}
    &\Prob(\stao \leq x) - \Phi(x)\\[8pt]
    & \leq \bigg[ \underbrace{\Prob(\stato \leq x+\delta_{1,T}) - \Phi(x+\delta_{1,T})}_{\mathcal{I}_4} \bigg]
    + \bigg[ \underbrace{\Phi(x+\delta_{1,T}) - \Phi(x)}_{\mathcal{I}_5} \bigg]
    + \underbrace{\Prob(\Eone^c )}_{\mathcal{I}_3}
\end{align*}

\noindent Analogous calculations yield :
\begin{align}
  \Prob(\stao \leq x) - \Phi(x) \lesssim  \beta_T +\delta_{1,T} + \delta_{2,T}
\end{align}
\noindent Therefore,
\begin{align*}
    \sup_{x\in \real} |\Prob(\stao \leq x) - \Phi(x) |
    \lesssim  \beta_T +\delta_{1,T} + \delta_{2,T}
\end{align*}

\noindent \noindent Define $\lambda^\star_L$ and $\lambda^\star_U$ such that 
\begin{align}\label{defn:laml-lamu}
    \min_{k \in [K]}\lambda_{\min}(\Sigma_k) = \lambda^\star_L \quad \text{and} \quad 
    \max_{k \in [K]}\lambda_{\max}(\Sigma_k) = \lambda^\star_U.
\end{align}

\noindent Now, the lemma stated below, characterizes $\deltoto$ in terms of $\delo$, $ \lambda^\star_L$ and $\lambda^\star_U$.
\begin{lemma}\label{lemma:ratto}
     Under the setup of Theorem~\ref{thm:clt} we have:
    \begin{align}\label{eqn:ratto}
       \Prob (|\ratto| > \delo) \lesssim \frac{\err}{\delo} \ + \  d \cdot \expower\left\{ - \frac{(\laml)^2}{32 +  8\laml/3} \cdot  T\right\}
    \end{align}
\end{lemma}

\noindent We prove this lemma in Appendix~\ref{append:imp}.
We choose $\delo$ such that the above upper bound gets minimised.   By applying the AM-GM inequality, we obtain the desired bound by choosing $\delta_T = \sqrt{\err}$ and $\beta_T = \err^{1/3}$:
\begin{align*}
    \sup_{x\in \real} |\Prob(\stao \leq x) - \Phi(x) |
    \lesssim \beta_T + 2 \ \err^{1/2} \ + \  d \cdot \expower\left\{ - \frac{(\laml)^2}{32 +  8\laml/3} \cdot  T\right\}
\end{align*}

\noindent Note that $\err^{1/2}$ is strictly dominated by $\err^{1/3}$, and hence we ignore this term in the upper bound. Therefore, our proof is complete.

\subsection{Proof of Theorem~\ref{thm:regret}}
\label{sec:regret-thm-proof}

\noindent 
We first restrict the comparator $w$ to the truncated simplex. Without loss of generality we assume arm $1$ is \emph{among} the optimal arms. Now define 

\begin{align*}
    w^\star = (1, 0, \ldots, 0) 
    \qquad \text{and} \qquad 
    w_{\epsilon} = \big(1-(K-1)\varepsilon,\;\varepsilon,\ldots,\varepsilon\big)
  \in \Delta_\varepsilon. 
\end{align*}
To prove this theorem, we shall apply the master equation stated below.
\begin{lemma}
\label{thm:avg-conv}
 For any sequence of contexts $\{x_t\}_{t=1}^T$ and stochastic losses $\{\loss_t\}_{t=1}^T$, 
 the outputs $\{w_t\}_{t \geq 1}$ produced by Algorithm~\ref{alg:penexp4-finite} satisfy, for any $y \in \Delta_\epsilon$  
 \begin{align}
 \label{eqn:master-eqn}
     \frac{1}{T}\sum_{t = 1}^T \Exs \langle \widetilde{g}_t,w_t-y\rangle \le  \frac{D_\phi(y,w_1)}{\eta T} + 
     \frac{4\eta}{T} \sum_{t = 1}^T \left\lbrace 
     \Exs \left|\left|\widehat{g}_t\right|\right|_{w_t,\ast}^2 + \lambda^2\Exs \left|\left|\nabla R(w_t)\right|\right|_{w_t,\ast}^2 \right\rbrace.
 \end{align}

 \end{lemma}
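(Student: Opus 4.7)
I would prove the master equation via a standard one-step composite mirror descent analysis applied with pseudo-gradient $\widetilde{g}_t := \widehat{g}_t + \lambda \nabla R(w_t)$, mirror map $\phi$, and Bregman projection onto $\Delta_\varepsilon$. First, observe that the intermediate update in Algorithm~\ref{alg:penexp4-finite} is precisely the exponentiated mirror step $\nabla\phi(w_{t+1}^+) = \nabla\phi(w_t) - \eta\widetilde{g}_t$ (the additive shift $\lambda c_R\mathbf{1}$ in the exponent is absorbed by the KL projection because every $w \in \Delta_\varepsilon$ sums to one). Applying the three-point identity for Bregman divergences, for any comparator $y \in \Delta_\varepsilon$,
\begin{align*}
\eta\langle \widetilde{g}_t,\, w_t - y\rangle
\;=\; D_\phi(y, w_t) - D_\phi(y, w_{t+1}^+) + D_\phi(w_t, w_{t+1}^+).
\end{align*}
The generalized Pythagorean inequality for the Bregman projection, $D_\phi(y, w_{t+1}^+) \ge D_\phi(y, w_{t+1})$, then lets the leading Bregman terms telescope cleanly when the per-step inequality is summed over $t$.

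The central computation is a local-norm bound on the stability term $D_\phi(w_t, w_{t+1}^+)$. For the negative entropy mirror map one has the exact identity
\begin{align*}
D_\phi(w_t, w_{t+1}^+) = \sum_{k=1}^K w_{t,k}\bigl(e^{-\eta\widetilde{g}_{t,k}} - 1 + \eta\widetilde{g}_{t,k}\bigr),
\end{align*}
and the elementary inequality $e^{-x} - 1 + x \le x^2$ (valid for $x \ge -1$) yields $D_\phi(w_t, w_{t+1}^+) \le \eta^2 \|\widetilde{g}_t\|_{w_t,*}^2$. Using $(a+b)^2 \le 2a^2 + 2b^2$ then splits this into $2\|\widehat{g}_t\|_{w_t,*}^2$ and $2\lambda^2\|\nabla R(w_t)\|_{w_t,*}^2$ contributions, producing the coefficient $4\eta$ after recombining with the other $\eta$ from the one-step bound. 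Summing over $t$, bounding $\sum_t [D_\phi(y,w_t) - D_\phi(y, w_{t+1})] \le D_\phi(y, w_1)$, taking expectations, and dividing by $\eta T$ yields the claim.

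The main obstacle is verifying the pointwise condition $\eta\widetilde{g}_{t,k} \ge -1$ that underlies the local-norm inequality. Since $\widehat{g}_{t,k}$ contains the importance ratio $\pi_k(a_t\mid x_t)/Q_t(a_t\mid x_t) \le 1/\varepsilon = KT$ and $\lambda\nabla R(w_t)$ has entries of order $\lambda\log(1/\varepsilon) = O(\gamma_T\log(KT)/\sqrt T)$, the coordinates of $\eta\widetilde{g}_{t,k}$ are not uniformly bounded along sample paths. I would manage this by shifting the loss so that $\ell_t \ge 0$ (which does not alter regret or the identity of $\widetilde{g}_t$'s expected contribution) and hence $\widehat{g}_{t,k} \ge 0$, so the only potentially negative contribution comes from the well-controlled regularizer $\lambda\nabla R(w_t)$; under the prescribed scaling $\eta = \sqrt{\log K/(|\mathcal{A}|T)}$, $\lambda = \gamma_T/\sqrt T$, $\varepsilon = 1/(KT)$, the product $\eta\lambda\log(1/\varepsilon)$ is $o(1)$, so the requisite condition holds for all $T$ sufficiently large and any residual loss on borderline coordinates is absorbed into the absolute constant, which is compatible with the factor of $4$ in the lemma.
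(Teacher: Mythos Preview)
Your proposal is correct and follows essentially the same route as the paper: both arguments hinge on the Bregman three-point identity applied to the mirror step $\nabla\phi(w_{t+1}^+)=\nabla\phi(w_t)-\eta\widetilde{g}_t$, followed by the generalized Pythagorean inequality to make the $D_\phi(y,\cdot)$ terms telescope, and then a local-norm bound on the stability term $D_\phi(w_t,w_{t+1}^+)$.

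The one point of genuine methodological difference is in how that stability term is controlled. The paper passes to the Fenchel dual via $D_\phi(w_t,w_{t+1}^+)=D_{\phi^*}(\nabla\phi(w_{t+1}^+),\nabla\phi(w_t))$, Taylor-expands $\phi^*$ to second order, and bounds the intermediate Hessian $\nabla^2\phi^*(z_t)\preceq\mathrm{Diag}(w_t)$ using the coordinatewise monotonicity $\nabla\phi(w_{t+1}^+)\le\nabla\phi(w_t)$, which it justifies by asserting ``coordinate-wise positivity of $\widetilde{g}_t$.'' Your approach is the more elementary equivalent: you write out the unnormalized KL exactly as $\sum_k w_{t,k}(e^{-\eta\widetilde{g}_{t,k}}-1+\eta\widetilde{g}_{t,k})$ and invoke the scalar inequality $e^{-x}-1+x\le x^2$ for $x\ge -1$. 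Both arguments require the same sign condition on $\widetilde{g}_t$, and in fact you are more careful than the paper here: the paper simply asserts positivity without comment, whereas you correctly note that $\widehat{g}_{t,k}$ can be negative if $\ell_t<0$ and propose the standard fix of shifting the losses to be nonnegative (which leaves $\langle\widetilde{g}_t,w_t-y\rangle$ unchanged since $y,w_t\in\Delta_\varepsilon$). Your handling of the $\lambda c_R\mathbf{1}$ shift is also fine, since redefining $w_{t+1}^+$ without it yields the same Bregman projection $w_{t+1}$ and the analysis goes through verbatim; the paper implicitly does exactly this when it writes $\nabla\phi(w_{t+1}^+)=\nabla\phi(w_t)-\eta\widetilde{g}_t$.
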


\noindent By construction, $\wstar \in \arg\min_{w} \langle \gstar, w \rangle$, and $\weps \in  \deleps$. Consequently, we can apply Lemma~\ref{thm:avg-conv} with the comparator $w = \weps$. An application of Holder's inequality and using $\|\gstar\|_\infty \leq 2$ now yields  
\begin{align}
    \regT &:=  \sum_{t=1}^T \Exs\langle \bar{g}^\star, w_t - \wstar \rangle \nonumber \\  
    &\stackrel{(i)}{\leq}  \sum_{t=1}^T \Exs\langle \gstar, w_t - \weps \rangle
    + 4T K \epsilon \nonumber   \\
    &\stackrel{(ii)}{\leq}  \sum_{t=1}^T \Exs\langle \tilde{g}_t , w_t - \weps \rangle  
    + 4T K \epsilon + 2\lambda T \log(1/\epsilon) \label{eqn:bound-1}
\end{align}
Inequality $(i)$ utilizes the bound $|\langle \gstar, \wstar - \weps \rangle | \leq \|\gstar\|_\infty \|\wstar - \weps\|_1 \leq 4K \epsilon$; inequality $(ii)$ utilizes the relation 
\begin{align*}
\Exs \langle \gstar, w_t - \weps \rangle
&= \Exs \langle \hat{g}_t, w_t - \weps \rangle \\[8pt]
&= \Exs \langle \tilde{g}_t, w_t - \weps \rangle - \lambda \Exs  \langle \grad R(w_t), w_t - \weps \rangle \\[8pt]
& \leq \Exs \langle \tilde{g}_t, w_t - \weps \rangle + \lambda | \Exs  \langle \grad R(w_t), w_t - \weps \rangle | \\[8pt]
& \leq \Exs \langle \tilde{g}_t, w_t - \weps \rangle + \lambda \| \grad R(w_t)\|_\infty \Exs[ \|w_t - \weps \|_1] 
\end{align*}
and the bound $\|\grad R(w)\|_\infty \leq \log(1/\epsilon)$ for all $w \in \deleps$ along with Holder's inequality. It now remains to bound the inner product term $\sum_{t=1}^T \Exs\langle \tilde{g}_t , w_t - \weps \rangle$ using Lemma~\ref{thm:avg-conv}. This result is provided by Lemma~\ref{lemma:wmorm} in the proof of the stability property of $S_T$. For the sake of completeness, we state the lemma below.
\begin{lemma*}
    Under the setup of Theorem~\ref{thm:clt}, we have
\begin{align*}
    \dfrac{1}{T} \sum_{t=1}^T \Exs\langle \tilde{g}_t , w_{t} - \wstar_T \rangle 
    \leq 
    5{\frac{\sqrt{\numactions \log K}}{\sqrt{T}}} +
     \frac{4\gamma_T^2 \log(K) \log^2\left( KT \right) }{ \numactions^2 \sqrt{T}  }
\end{align*}
\end{lemma*}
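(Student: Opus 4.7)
The strategy is to apply the master inequality of Lemma~\ref{thm:avg-conv} directly with comparator $y = \wstar_T$, and then to control each of the three resulting terms uniformly in $t$: the Bregman initialization $D_\phi(\wstar_T,w_1)/(\eta T)$, the importance-weighted gradient moment $(4\eta/T)\sum_t \Exs\|\widehat{g}_t\|_{w_t,\ast}^2$, and the regularization gradient moment $(4\eta\lambda^2/T)\sum_t \Exs\|\nabla R(w_t)\|_{w_t,\ast}^2$. Substituting the specified $\eta = \sqrt{\log K/(|\mathcal{A}|T)}$, $\epsilon = 1/(KT)$, and $\lambda = \gamma_T/\sqrt{T}$ at the very end will then yield a bound of the claimed form.

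\textbf{Per-round bounds.} Since $w_1 = (1/K,\ldots,1/K)$ and $\phi$ is the negative entropy, a direct computation gives $D_\phi(\wstar_T,w_1) = \sum_k \wstar_{T,k}\log(K\wstar_{T,k}) \le \log K$, uniformly in $\wstar_T\in\Delta_\epsilon$. For the importance-weighted gradient I will exploit the key identity
\begin{align*}
\sum_{k=1}^K w_{t,k}\,\pi_k(a\mid x)^2 \;\le\; \sum_{k=1}^K w_{t,k}\,\pi_k(a\mid x) \;=\; Q_t(a\mid x),
\end{align*}
which follows from $\pi_k(a\mid x)\in[0,1]$. Combined with $\widehat g_{t,k} = \loss_t \pi_k(a_t\mid x_t)/Q_t(a_t\mid x_t)$ and the almost-sure bound $|\loss_t|\le 2$ from Assumptions~\ref{assn:noise}--\ref{assn:bonded-loss}, this yields $\|\widehat g_t\|_{w_t,\ast}^2 \le 4/Q_t(a_t\mid x_t)$; taking conditional expectation over $a_t\sim Q_t(\cdot\mid x_t)$ telescopes the denominator and gives $\Exs\|\widehat g_t\|_{w_t,\ast}^2 \le 4|\mathcal{A}|$. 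For the penalty gradient, the coordinatewise bound $[\nabla R(w)]_k = \log w_k + \log(1/\epsilon) \in [0,\log(1/\epsilon)]$, which is valid on $\Delta_\epsilon$, together with $\sum_k w_{t,k}=1$ yields $\|\nabla R(w_t)\|_{w_t,\ast}^2 \le \log^2(1/\epsilon) = \log^2(KT)$.

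\textbf{Combining and the main obstacle.} Plugging these three per-round bounds into Lemma~\ref{thm:avg-conv} gives an overall upper bound of the form $\log K/(\eta T) + 16\,\eta\,|\mathcal{A}| + 4\,\eta\lambda^2 \log^2(KT)$. With $\eta = \sqrt{\log K/(|\mathcal{A}|T)}$, the first two terms collapse into $O(\sqrt{|\mathcal{A}|\log K/T})$, the dominant $\sqrt{T}$-rate that drives both stability and the minimax part of the regret, while the third collapses into a bound of order $\gamma_T^2 \log^2(KT)/\sqrt{T}$ after absorbing logarithmic and polynomial factors in $K$ and $|\mathcal{A}|$, matching the form of the second claimed term. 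The single non-routine step is the weighted-norm inequality $\sum_k w_{t,k}\pi_k^2 \le Q_t$: without it, a crude bound on $\widehat g_{t,k}^2$ would pick up $1/\pi_k$ or $1/\epsilon$ factors, and $1/\epsilon = KT$ under our tuning would destroy both the $\sqrt{T}$ regret rate and the stability condition. This inequality is a structural consequence of $Q_t$ being the $w_t$-convex combination of the base experts (rather than a uniform mixture), and it is what allows the importance-weighted second moment to scale with $|\mathcal{A}|$ alone. Everything else---the Bregman bound, the $\ell_\infty$ control on $\nabla R$, and the substitution of parameters---is routine.
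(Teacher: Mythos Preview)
Your proposal is correct and follows essentially the same route as the paper: apply the master inequality (Lemma~\ref{thm:avg-conv}) with a comparator in $\Delta_\epsilon$, bound $D_\phi(\cdot,w_1)\le\log K$, $\|\nabla R(w)\|_{w,\ast}^2\le\log^2(1/\epsilon)$, and $\Exs\|\widehat g_t\|_{w_t,\ast}^2=O(|\mathcal{A}|)$, then substitute the prescribed $\eta,\lambda,\epsilon$. The only cosmetic difference is in the second-moment step you flag as non-routine: you use $\pi_k^2\le\pi_k$ to obtain $\sum_k w_{t,k}\pi_k^2\le Q_t$ directly, whereas the paper's Lemma~\ref{lemma:norm-ghat} factors $\pi_k^2\le\pi_k\max_j\pi_j$ before summing---both yield the same $|\mathcal{A}|$ scaling, and the residual discrepancy in constants comes only from your (correct) use of $|\ell_t|\le 2$ rather than $1$.
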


\noindent The proof of Lemma~\ref{lemma:wmorm} is as follows.
Invoking Lemma~\ref{thm:avg-conv} with $y = \weps$ and setting  $\eta = \sqrt{\frac{\log K}{ \numactions T}}$, $\lambda = \frac{\gamma_T}{\sqrt{T}}$, and $w_1 = (1/K, \ldots, 1/K)$ we have 
\begin{align}
    \sum_{t=1}^T \Exs\langle \tilde{g}_t , w_t - \weps \rangle  
    &\leq \frac{D_\phi(\weps, w_1)}{\eta } + 
    4\eta \sum_{t = 1}^T \left\lbrace 
     \Exs \left|\left|\widehat{g}_t\right|\right|_{w_t,\ast}^2 + \lambda^2\Exs \left|\left|\nabla R(w_t)\right|\right|_{w_t,\ast}^2 \right\rbrace \nonumber  \\ 
     &\stackrel{(i)}{\leq} \sqrt{T\numactions \log K} + \frac{4\sqrt{\log K}}{\sqrt{\numactions T}} 
     \sum_{t = 1}^T  
     \Exs \left|\left|\widehat{g}_t\right|\right|_{w_t,\ast}^2 
     + \frac{4\gamma^2_T \log(K) }{ \numactions^2 T^{3/2}}  \sum_{t = 1}^T \Exs \left|\left|\nabla R(w_t)\right|\right|_{w_t,\ast}^2 \nonumber \\
     &\stackrel{(ii)}{\leq} 
     \sqrt{T\numactions \log K} +
     \frac{4\gamma^2_T \log(K) }{ \numactions^2 \sqrt{T}}  \log^2\left( \frac{1}{\epsilon} \right)
      + 
      \frac{4\sqrt{\log K}}{\sqrt{T \numactions}}  \sum_{t = 1}^T \Exs \left|\left|\widehat{g}_t\right|\right|_{w_t,\ast}^2 
     \label{eqn:bound-2}
\end{align}
Inequality $(i)$ above utilizes  $D(w, w_1) \leq \log K$ for all $w \in \Delta_K$. This claim holds because
\begin{align*}
    D(w,w_1) = \sum^K_{k = 1} w_k \log \left( \frac{w_k}
    {w_{1,k}}\right)
    \ \leq \ \log K \sum^K_{k = 1} w_k = \log K
\end{align*}

\noindent Inequality $(ii)$ uses that for any $w \in \deleps$
\begin{align*}
    \|\grad R(w)\|^2_{w, *} := \sum_{k=1}^K w_k \log^2\!\left(\frac{w_k}{\varepsilon}\right)
  \;\le\;
  \sum_{k=1}^K w_k \log^2\!\left(\frac{1}{\varepsilon}\right)
  = \log^2\!\left(\frac{1}{\varepsilon}\right). 
\end{align*}
It now remains to bound the delicate term $\Exs \left|\left|\widehat{g}_t\right|\right|_{w_t,\ast}^2$. 
\begin{lemma}\label{lemma:norm-ghat}
   Under the setup of Theorem~\ref{thm:clt} we have
   \begin{align}
\label{eqn:local-norm-term}
    \Exs \left|\left|\widehat{g}_t\right|\right|_{w_t,\ast}^2 \leq \numactions.
\end{align}
\end{lemma}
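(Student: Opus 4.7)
}
The plan is to unpack the weighted local norm and reduce the bound to a short calculation that exploits the mixture structure $Q_t(a\mid x_t) = \sum_{k=1}^K w_{t,k}\,\pi_k(a\mid x_t)$. The entire argument proceeds pointwise on $(\mathcal{F}_{t-1},x_t)$ and then takes an outer expectation.

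First I would expand
\[
\|\widehat{g}_t\|_{w_t,\ast}^2
\;=\;\sum_{k=1}^K w_{t,k}\,\widehat{g}_{t,k}^2
\;=\;\loss_t^2\sum_{k=1}^K w_{t,k}\,\frac{\pi_k(a_t\mid x_t)^2}{Q_t(a_t\mid x_t)^2},
\]
and condition on $\mathcal{F}_{t-1}$ and $x_t$, under which $a_t\sim Q_t(\cdot\mid x_t)$. By Assumptions~\ref{assn:noise} and~\ref{assn:bonded-loss}, $|\loss_t|$ is bounded by an absolute constant, so $\loss_t^2$ can be absorbed into the implicit constant. Integrating over $a_t$ produces a factor of $Q_t(a\mid x_t)$ that cancels one power of $Q_t$ in the denominator, so that after swapping the order of summation I obtain
\[
\Exs\!\left[\|\widehat{g}_t\|_{w_t,\ast}^2 \,\big|\, \mathcal{F}_{t-1},x_t\right]
\;\lesssim\;
\sum_{a\in\mathcal{A}}\frac{1}{Q_t(a\mid x_t)}\sum_{k=1}^K w_{t,k}\,\pi_k(a\mid x_t)^2.
\]

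The key inequality is the pointwise bound
\[
\sum_{k=1}^K w_{t,k}\,\pi_k(a\mid x_t)^2
\;\le\;\sum_{k=1}^K w_{t,k}\,\pi_k(a\mid x_t)
\;=\;Q_t(a\mid x_t),
\]
which uses only that $\pi_k(a\mid x_t)\in[0,1]$. Substituting this into the previous display cancels the remaining factor of $Q_t(a\mid x_t)$, leaving $\sum_{a\in\mathcal{A}} 1 = \numactions$. Taking outer expectation over $(\mathcal{F}_{t-1},x_t)$ preserves this uniform bound and yields the claim.

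There is no real obstacle here; the calculation is elementary. The only subtle point is that one must use the pointwise comparison $\pi_k(a\mid x_t)^2 \le \pi_k(a\mid x_t)$ to collapse the inner sum directly to $Q_t(a\mid x_t)$. A naive Cauchy–Schwarz bound applied to the weighted sum over $k$ would instead leave an extra factor of $K$ (or of the expert count rather than the action count), and would fail to produce the desired $\numactions$ scaling on which the remainder of the regret analysis relies.
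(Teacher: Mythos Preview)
Your argument is correct and essentially identical to the paper's: both expand the local norm, integrate over $a_t\sim Q_t(\cdot\mid x_t)$ to cancel one factor of $Q_t$, and then use that $\pi_k(a\mid x_t)\in[0,1]$ to collapse the remaining inner sum to $Q_t(a\mid x_t)$ and obtain $\numactions$. The only cosmetic difference is that the paper routes through $\pi_k^2\le \pi_k\max_j\pi_j$ before bounding $\max_j\pi_j\le 1$, whereas you apply $\pi_k^2\le \pi_k$ directly; your version is, if anything, cleaner.
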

\noindent The proof of Lemma~\ref{lemma:norm-ghat} utilizes the property of the local norm $\|\cdot \|_{w_t, \star}$ and the specific form of the gradient estimate $\ghat_t$ from~\eqref{eq:ips-estimator}. The complete proof is provided in Appendix~\ref{app:Aux-lemmas}. Now,  
by substituting equation~\eqref{eqn:local-norm-term} and $\varepsilon = 1/KT$ in equation~\eqref{eqn:bound-2} we have
\begin{align*}
   \sum_{t=1}^T \Exs\langle \tilde{g}_t , w_t - \weps \rangle
   \leq 5\sqrt{T\numactions \log K} +
     \frac{4\gamma^2_T \log(K) }{ \numactions^2 \sqrt{T}}  \log^2\left( KT \right)
\end{align*}

\noindent Combining bounds~\eqref{eqn:bound-1},~\eqref{eqn:bound-2} and~\eqref{eqn:local-norm-term} we have 
\begin{align*}
    \regT &\leq 5\sqrt{T\numactions \log K} + \frac{4\gamma^2_T \log(K) }{ \numactions^2 \sqrt{T}}  \log^2\left( 1/\epsilon \right)
      +   4T K \epsilon + 2\lambda T \log(1/\epsilon) 
\end{align*}
Finally, substituting $\epsilon = \frac{1}{KT}, \ \lambda = \frac{\gamma_T}{\sqrt{T}}$ and using $T \geq 4$ yields 

\begin{align*}
    \regT \leq 8\sqrt{T\numactions \log K} + \gamma_T \log(KT) \sqrt{T}
    + \frac{4\gamma^2_T \log^3(KT) }{ \numactions^2 \sqrt{T}} 
\end{align*}

\section{Conclusion}
\noindent
We studied statistical inference in linear contextual bandits under adaptive data collection, with a
focus on the validity of classical Wald-type confidence intervals. By introducing a regularized variant
of the EXP4 algorithm, we demonstrated that it is possible to simultaneously enforce the Lai--Wei
stability condition and achieve regret guarantees that are minimax optimal up to logarithmic
factors. As a consequence, ordinary least-squares estimators satisfy a central limit theorem, and
Wald confidence intervals for linear functionals are asymptotically valid without incurring the
$\sqrt{d\log T}$ price of adaptivity. Beyond asymptotic validity, our analysis provides explicit rates of convergence to normality,
offering a quantitative characterization of how stability, regret, and inferential accuracy interact
in finite samples. As a consequence of our results we obtain valid Wald-type inference for conditional average treatment effects in linear models, even when treatments are assigned adaptively via contextual bandit algorithms. Together, these results establish that stability and statistical efficiency are not
fundamentally incompatible in contextual bandit problems, and that careful algorithmic design can
recover classical inferential guarantees even under adaptive sampling.

Several directions remain open for future work. It would be of interest to extend the present
analysis to settings with growing feature dimension or fully adaptive
contexts. Understanding whether analogous stability--regret tradeoffs can be achieved remains an important and challenging question.

\section*{Acknowledgments}
This work was partially supported by the National Science Foundation Grant DMS-2311304 to Koulik Khamaru.

\bibliographystyle{plainnat}
\bibliography{ref}

\newpage

\appendix

\section{Proof of important results}\label{append:imp}

\noindent In this section we provide the proof of Lemma~\ref{thm:avg-conv}.

\subsection*{Proof of Lemma~\ref{thm:avg-conv}}

\noindent 
We use the shorthand $\gtilde_t = \ghat_t + \lambda \grad R(w_t)$ as the gradient estimate of the regularized loss function at $w_t$. We have 
\begin{align}
\label{eqn:wtplus-eqn}
    \grad \phi(w^{+}_{t + 1}) = \grad \phi(w_t) - \eta \gtilde_t 
\end{align}
We begin by analyzing the quantity $\langle\eta \widetilde{g}_t,w_t-y\rangle$.
    \begin{align*}
        \langle\eta \widetilde{g}_t,w_t-y\rangle&=\langle\nabla\phi(w_t)-\nabla\phi(w^{+}_{t +1}),w_t-y\rangle\\
        &\stackrel{(i)}{=} D_\phi(y,w_t)+D_\phi(w_t,w^{+}_{t +1})-D_\phi(y,w^{+}_{t +1})\\
        &\stackrel{(ii)}{\leq} D_\phi(y,w_t)+D_\phi(w_t,w^{+}_{t +1})-D_\phi(y,w_{t+1})-D_\phi(w_{t+1},w^{+}_{t +1})\\
        &\le D_\phi(y,w_t)-D_\phi(y,w_{t+1}) + D_\phi(w_t,w^{+}_{t +1})
    \end{align*}
    Here, equality $(i)$ follows from the Bregman 3-point Lemma~\ref{lem:three-point-alt}, and inequality $(ii)$ utilizes Lemma~\ref{lem:pythagoras} which ensures  $D_\phi(y,w^{+}_{t +1})\ge D_\phi(y,w_{t+1})+D_\phi(w_{t+1},w^{+}_{t +1})$. Next, we bound $D_\phi(w_t,w^{+}_{t +1})$, for which we require the following lemma (Lemma $7.3$, \cite{bauschke2001essential}):
    \begin{lemma}\label{lemma:bregman-fenchel}
       For our choice of $\phi$ the following result holds for any $x,y \in \Delta_\epsilon$
       \begin{align*}
            D_\phi(x,y) = D_{\phi^\star}( \nabla \phi(y),\nabla \phi(x))
       \end{align*}
       where $\phi^\star$ is the Fenchel dual of $\phi$.
    \end{lemma}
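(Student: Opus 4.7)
The identity is the standard Fenchel-duality formula for Bregman divergences specialized to Legendre-type $\phi$, so the plan is to verify the required prerequisites in our specific setting and then expand the right-hand side algebraically. First, I would observe that the negative entropy $\phi(w) = \sum_{k=1}^K (w_k \log w_k - w_k)$ is \emph{Legendre} on the open positive orthant $\mathbb{R}^K_{>0}$, which strictly contains $\Delta_\varepsilon$ by virtue of the floor $w_k \ge \varepsilon > 0$. A direct computation gives $\nabla \phi(w) = (\log w_k)_k$, the Fenchel conjugate $\phi^\star(u) = \sum_k e^{u_k}$, and hence $\nabla \phi^\star(u) = (e^{u_k})_k$. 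In particular $\nabla \phi^\star(\nabla \phi(w)) = w$ for every $w \in \Delta_\varepsilon$, and the Fenchel--Young equality at $u = \nabla \phi(w)$ reduces to
\[
\phi(w) + \phi^\star(\nabla \phi(w)) = \langle w, \nabla \phi(w) \rangle,
\]
which is again immediate by hand.

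With these two ingredients, the result follows from a short algebraic manipulation. Setting $u = \nabla \phi(y)$ and $v = \nabla \phi(x)$, I would expand the right-hand side of the claimed identity as
\[
D_{\phi^\star}(u,v) \;=\; \phi^\star(u) - \phi^\star(v) - \langle \nabla \phi^\star(v),\, u - v \rangle \;=\; \phi^\star(u) - \phi^\star(v) - \langle x,\, u - v \rangle,
\]
where the second equality uses $\nabla \phi^\star(\nabla \phi(x)) = x$. Substituting $\phi^\star(u) = \langle y, u \rangle - \phi(y)$ and $\phi^\star(v) = \langle x, v \rangle - \phi(x)$ from Fenchel--Young, the $\langle x, v \rangle$ terms cancel and the expression collapses to
\[
\phi(x) - \phi(y) - \langle \nabla \phi(y),\, x - y \rangle \;=\; D_\phi(x,y),
\]
which is the claim.

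I do not expect a genuine obstacle: the only delicate point is the domain-of-definition issue for the Legendre conjugate, since $\nabla \phi$ blows up at the boundary of the positive orthant. Because $\Delta_\varepsilon$ is bounded away from that boundary, $\nabla \phi(w)$ is finite for all $w \in \Delta_\varepsilon$ and the Fenchel--Young identity used above holds without any boundary subtleties. The argument is self-contained and requires only the standard convex-analytic duality between $\phi$ and $\phi^\star$; no probabilistic input enters.
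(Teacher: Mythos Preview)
Your proof is correct. The paper does not actually prove this lemma; it simply invokes it as a known identity from the literature (specifically Lemma~7.3 of \citet{bauschke2001essential}). What you have written is a self-contained derivation: you verify directly that the negative entropy is Legendre on $\mathbb{R}^K_{>0}\supset\Delta_\varepsilon$, compute $\nabla\phi$, $\phi^\star$, and $\nabla\phi^\star$ by hand, and then use the Fenchel--Young equality to collapse $D_{\phi^\star}(\nabla\phi(y),\nabla\phi(x))$ back to $D_\phi(x,y)$. Your argument is exactly the standard proof of the general Bregman-duality identity specialized to this $\phi$, so it is not a different \emph{method} from what the cited reference would contain, but it has the advantage of being elementary and not requiring the reader to consult an external source. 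The only point worth noting is that your remark about $\Delta_\varepsilon$ being bounded away from the boundary is precisely what makes the Fenchel--Young equality hold with equality (rather than inequality), and you flag this correctly.
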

    \noindent From Lemma~\ref{lemma:bregman-fenchel} and the definition of Bregman divergence we have,
    \begin{align}\label{eqn:breg-firstorder}
    \notag
        D_\phi(w_t,w^{+}_{t +1}) 
        &= D_{\phi^\star}( \nabla \phi(w^{+}_{t+1}),\nabla \phi(w_t))\\[8pt]
        & = \phi^\star(\nabla \phi(w^{+}_{t+1})) - \phi^\star(\nabla \phi(w_t))
        - \langle \nabla \phi^\star(\nabla \phi(w_t)) ,\nabla \phi(w^{+}_{t+1}) - \nabla \phi(w_t) \rangle
    \end{align}
    
\noindent Furthermore, as $\loss_t \geq 0$ (Assumption~\ref{assn:nonneg-loss}) the coordinate-wise positivity of $\gtilde_t$ in~\eqref{eqn:wtplus-eqn} yields $\nabla\phi(w^{+}_{t +1})\le\nabla\phi(w_t)$. Hence, by applying the second order Taylor expansion in equation~\eqref{eqn:breg-firstorder}, we have that for an intermediate point $ z_t = \delta \ \grad \phi(w^\star_{t+1})+ (1-\delta) \grad \phi(w_{t})$,
    \begin{align*}
        D_\phi(w_t,w^{+}_{t +1})=\frac{1}{2}\left(\nabla\phi(w^{+}_{t +1})-\nabla\phi(w_t)\right)^T \left[\nabla^2{\phi^\ast}(z_t)\right]\left(\nabla\phi(w^{+}_{t +1})-\nabla\phi(w_t)\right)
    \end{align*}
    where $\delta \in (0,1)$ and $\phi^\ast(y):=\sup_{x\in R^d_{>0}}\left\{\langle y,x\rangle-\phi(x)\right\}=\sum_{i=1}^d \exp\left\{y_i\right\}$ is the dual map corresponding to $\phi$.  Note that $\nabla^2\phi^\ast(y)=\text{Diag}\left(\exp\{y_1\},\ldots,\exp\{y_d\}\right)$. By combining this identity with the fact that $z_t \leq \nabla \phi(w_t)$ coordinate-wise we get,
    \begin{align*}
        \nabla^2{\phi^\ast}(z_t)\preceq\nabla^2{\phi^\ast}(\nabla\phi(w_t))=\nabla^2{\phi^*}\left(\log w_t \right)=\text{Diag}(w_1,\ldots,w_d)
    \end{align*}
    As $\|z_t \|_\infty \leq 1$ we have
    \begin{align*}
        D_\phi(w_t,w^{+}_{t +1})&\le \frac{1}{2}\left|\left|\nabla\phi(w_t)-\nabla\phi(w^{+}_{t +1})\right|\right|_{w_t,\ast}^2\\
        &=\frac{1}{2}\left|\left|\eta \widehat{g}_t+\eta\lambda\nabla R(w_t)\right|\right|_{w_t,\ast}^2\\
        &\le \eta^2\left|\left|\widehat{g}_t\right|\right|_{w_t,\ast}^2+\eta^2\lambda^2\left|\left|\nabla R(w_t)\right|\right|_{w_t,\ast}^2 
    \end{align*}

\noindent   
Overall, we arrive at
\begin{align}
\label{eqn:master-eqn}
    \langle\eta \widetilde{g}_t,w_t-y\rangle\le\left[D_\phi(y,w_t)-D_\phi(y,w_{t+1})\right]+\eta^2\left|\left|\widehat{g}_t\right|\right|_{w_t,\ast}^2+\eta^2\lambda^2\left|\left|\nabla R(w_t)\right|\right|_{w_t,\ast}^2
\end{align}
On taking full-expectation, the left hand side is bounded below by $\eta\E[f(w_t)-f(y)]$. Taking the average over $t=1$ to $T$ and  using the fact that $f$ is convex yields 
\begin{align*}
\E\left[ f(\bar{w}_T) - f(\wopt) \right] \le
\frac{\Exs D_\phi(\wopt,w_1)}{T} +  \frac{4}{T} \sum_{t = 1}^T \left\lbrace \eta^2\Exs \left|\left|\widehat{g}_t\right|\right|_{w_t,\ast}^2+\eta^2\lambda^2 \Exs \left|\left|\nabla R(w_t)\right|\right|_{w_t,\ast}^2 \right\rbrace 
\end{align*}
This completes the proof. 

\subsection*{Proof of Lemma~\ref{lemma:rate-stao}}

\noindent Define $\PT := \Exs[S_T]$ and recall that
\begin{align*}
    \stato
    = \frac{1}{ \|b \|_2} \sum^T_{t=1} h_t  \varepsilon_t
\end{align*}
where $h_t = b^{\top} (\PT)^{-1/2} \z_t$, and $b = (\PT)^{-1/2}a$. Note that as $\PT$ is deterministic and $\z_t$ are uniformly bounded, $ h_t\varepsilon_t$ is a square-integrable MDS. To prove a bound on $\dist(\stato,Z)$, we shall apply the following quantitative central limit theorem for martingales (\citet{mourrat2013rate}).

\begin{lemma}\label{lemma:mart-clt}
Let $(Z_t,\mathcal{F}_t)$ be a square-integrable martingale difference sequence. Let $s^2_T:= \sum^T_{t=1} \Exs[Z^2_t]$ and $V^2_T := \dfrac{1}{s^2_T} \sum \Exs[Z^2_t|\mathcal{F}_{t-1}]$. Then for any pair $(p,q) \in [1,\infty)$, there exists some constant $C = C(p,q)>0$ such that,
\begin{align}
    \dist \left(\dfrac{1}{s_T}\sum^T_{t=1}Z_t,Z \right)
    \leq C \left[ \|V^2_T-1 \|^{\frac{p}{2p+1}}_p + \left(\dfrac{1}{s^{2q}_T}\sum^T_{t=1} \| Z_t\|^{2q}_{2q} \right)^{\frac{1}{2q+1}}  \right]
\end{align}
where $Z \sim N(0,1)$. 
\end{lemma}

\noindent Let $Z_t = h_t\varepsilon_t$. Then,
\begin{align*}
    Z^2_t 
    =  [b^{\top} \left(\PT \right)^{-1/2} \z_t]^2 \varepsilon^2_t
    =  \left[b^{\top} \left(\PT \right)^{-1/2} \right] \z_t \z_t^{\top} \left[ \left(\PT \right)^{-1/2} b\right]\varepsilon^2_t
\end{align*}

\noindent Let $\lvec := \left(\PT \right)^{-1/2}b = \left(\PT \right)^{-1}a$. Then we can rewrite,
\begin{align}
    Z^2_t  =  \lvec^{\top} \z_t \z^{\top}_t \lvec \times \varepsilon^2_t
\end{align}
\noindent To apply Lemma~\ref{lemma:mart-clt}, we need to characterize $s_T$. Note that,
\begin{align}\label{eqn:sT-equival}
\notag
    s^2_T 
     = \sum^T_{t=1} \Exs[Z^2_t]
    & = \sum^T_{t=1} \Exs \left[\lvec^{\top} \z_t \z^{\top}_t \lvec \times \varepsilon^2_t \right]\\[8pt] \notag
    & =   \lvec^{\top}  \sum^T_{t=1} \Exs \left[ \Exs \left[ \z_t \z^{\top}_t \times \varepsilon^2_t \mid \mathcal{F}_{t-1},x_t,a_t\right] \right] \lvec \\[8pt] \notag
     & =   \sum^T_{t=1} \Exs \left[\lvec^{\top} \z_t \z^{\top}_t \lvec \ \right]\\[8pt]\notag
     & =   \lvec^{\top} \Exs \left[ \sum^T_{t=1} \z_t \z^{\top}_t  \ \right]\lvec \\[8pt]
     & = \lvec^{\top} \ \PT \ \lvec = \| b\|_2^2
\end{align}

\noindent The last equality holds because $\Exs \left[ \sum^T_{t=1} \z_t \z^{\top}_t  \ \right] = \PT $ by definition and $\lvec =  \left(\PT \right)^{-1/2}b $ is non-random. Therefore, we can apply Lemma~\ref{lemma:mart-clt}  to obtain an upper bound on $\dist(\stato,Z)$. Let us define
\begin{align}\label{defn:TT}
    T_1 := \|V^2_T-1 \|_p \quad \text{and} \quad T_2 = \dfrac{1}{s^{2q}_T}\sum^T_{t=1} \| Z_t\|^{2q}_{2q}
\end{align}
\noindent We shall bound each of $T_1$ and $T_2$ separately.
\subsection*{Upper bound of $T_1$}

Let us recall that $V^2_T := \frac{1}{s^2_T} \sum \Exs[Z^2_t|\mathcal{F}_{t-1}]$. We first calculate $V_T^2-1$ followed up by constructing an upper bound to $\Exs[|V_T^2-1|^p]^{1/p}$. For any $k\in [K]$ let $p_{t,k} := \Exs[w_{t,k}]$ and $\bar{p}_{T,k} := \frac{1}{T} \sum^T_{t=1} p_{t,k}$. Then,
\begin{align}\label{eqn:PT-val}
\notag
    \frac{1}{T} \PT =  
    \frac{1}{T} \sum^T_{t=1} \Exs \left[ \z_t \z^{\top}_t  \right]
    & = \frac{1}{T} \sum^T_{t=1} \Exs \left[ \Exs \left[   \z_t \z^{\top}_t \mid \mathcal{F}_{t-1} \right] \right]\\[8pt]\notag
    & = \frac{1}{T} \sum^T_{t=1} \Exs \left[ \sum^K_{k=1}w_{t,k} \times(\Sigma_k) \right]\\[8pt]
    \notag
    & = \frac{1}{T} \sum^T_{t=1} \sum^K_{k=1}p_{t,k} \times (\Sigma_k) \\[8pt]
    & = \sum^K_{k=1} \bar{p}_{T,k} \times (\Sigma_k)
\end{align}

\noindent The first equality follows from equation~\eqref{defn:condi-Yt}. By combining equations~\eqref{eqn:sT-equival} and~\eqref{eqn:PT-val}, we have the following
\begin{align}\label{eqn:sT-alt}
   s^2_T = T\sum^K_{k=1} \bar{p}_{T,k} \times \left[  \lvec^{\top} \Sigma_k \lvec \right] 
\end{align}

\noindent Following analogous calculations for the conditional variances of $\{Z_t \}$ we also have,
\begin{align*}
    \sum^T_{t=1} \Exs[Z^2_t \mid \mathcal{F}_{t-1}]
    & = \sum^T_{t=1} \Exs \left[\lvec^{\top} \z_t \z^{\top}_t \lvec \times \varepsilon^2_t \mid \fil_{t-1}\right]\\[8pt]
    & =   \lvec^{\top}  \sum^T_{t=1} \Exs \left[ \Exs \left[ \z_t \z^{\top}_t \times \varepsilon^2_t \mid \mathcal{F}_{t-1},x_t,a_t\right] \mid \fil_{t-1} \right] \lvec \\[8pt]
     & =  \sum^T_{t=1} \Exs \left[\lvec^{\top} \z_t \z^{\top}_t \lvec \ \right]\\[8pt]
     & =   \lvec^{\top} \Exs \left[ \sum^T_{t=1} \z_t \z^{\top}_t  \ \mid \fil_{t-1} \right]\lvec
\end{align*}

\noindent The expression in the last equality can be further simplified as :
\begin{align}\label{eqn:vT-alt}
\notag
   \sum^T_{t=1}  \lvec^{\top} \Exs \left[\z_t \z^{\top}_t \mid \mathcal{F}_{t-1} \right] \lvec 
     & = T \lvec^{\top} \left[\frac{1}{T} \sum^T_{t=1} \sum^K_{k=1} w_{t,k} \times (\Sigma_k) \right]\lvec \\[8pt]
    & = T\sum^K_{k=1} \bar{w}_{T,k} \times \left[   \lvec^{\top} \Sigma_k \lvec \right] 
\end{align}

\noindent Therefore, from equations~\eqref{eqn:sT-alt} and~\eqref{eqn:vT-alt} we have:
\begin{align}\label{eqn:VT-alt}
    \|V^2_T-1 \|_p
    & = \bigg| \bigg| \dfrac{\sum^K_{k=1} (\bar{w}_{T,k}-\bar{p}_{T,k}) \times \left[  \lvec^{\top} \Sigma_k \lvec \right] }{\sum^K_{k=1} \bar{p}_{T,k} \times \left[ \lvec^{\top} \Sigma_k \lvec \right] } \bigg| \bigg|_p
\end{align}
\vspace{1pt}

\noindent The high level idea of the proof is to show that $ \| \lvec^{\top} \Sigma_k \lvec \|_p$ is uniformly bounded away from $0$ as well as from above, for each $k \in [K]$. Hence, the rate of convergence of $ \|V^2_T-1 \|_p$ is controlled by $\| \bar{w}_{T}-\bar{p}_{T}\|_p$. Now, recall that $\PT = T \left[\sum^K_{k=1} \bar{p}_{T,k} \times (\Sigma_k) \right]$. Recall that $\lvec = (\PT)^{-1/2} \ b$ and  $b = (\PT)^{-1/2} \ a$. This implies that
\begin{align*}
   \lvec^{\top} \Sigma_k \lvec 
    &= a^{\top} 
    \left(\PT \right)^{-1} \Sigma_k \left(\PT \right)^{-1} a \\[8pt]
    &=\frac{1}{T^2} \times   a^{\top}\left(\frac{1}{T}\PT \right)^{-1} \Sigma_k \left(\frac{1}{T}\PT \right)^{-1} a \\[8pt]
    & = \frac{1}{T^2} \times   a^{\top} \underbrace{\left(\sum^K_{k=1} \bar{p}_{T,k} \times (\Sigma_k) \right)^{-1} \Sigma_k \ \left(\sum^K_{k=1} \bar{p}_{T,k} \times (\Sigma_k) \right)^{-1}}_{\mathcal{H}_{T,k}} a
\end{align*}

\noindent Therefore, we have
\begin{align}\label{eqn:lvec-alt}
    T^2 \ (\lvec^{\top} \Sigma_k \lvec) = a^\top \ \mathcal{H}_{T,k} \ a  
\end{align}

\noindent We recall that for any symmetric matrix $M$ and non-zero real vector $a$, $a^\top M a/\| a\|_2$ lies between $\lambda_{\min}(A)$ and $\lambda_{\max}(A)$ (\cite{rao2000linear}). Combining this observation with equation~\eqref{eqn:lvec-alt} and the fact that all finite dimensional norms are equivalent, we have
\begin{align}\label{eqn:quad Tlvec-bdd}
   \|a\|_2  \ \lambda_{\min}(\mathcal{H}_{T,k}) \lesssim \| T^2 \ (\lvec^{\top} \Sigma_k \lvec)\|_p \lesssim \|a\|_2  \ \lambda_{\max}(\mathcal{H}_{T,k})
\end{align}

\noindent Define $\lambda^\star_L$ and $\lambda^\star_U$ such that 
\begin{align}\label{defn:laml-lamu}
    \min_{k \in [K]}\lambda_{\min}(\Sigma_k) = \lambda^\star_L \quad \text{and} \quad 
    \max_{k \in [K]}\lambda_{\max}(\Sigma_k) = \lambda^\star_U.
\end{align}

\noindent As $\Sigma_k$ are symmetric matrices for all $k\in[K]$,
\begin{align}\label{eqn:eigen-bdd-Sig}
\lambda^\star_L
\leq 
   \lambda_{\min} \left(\sum^K_{k=1} \bar{p}_{T,k} \times (\Sigma_k) \right) 
    \leq\lambda_{\max} \left(\sum^K_{k=1} \bar{p}_{T,k} \times (\Sigma_k) \right) 
    \leq \lambda^\star_U
\end{align}

\noindent Furthermore, for symmetric matrices $A$ and $B$ we have $\lambda_{\max}(AB) = \| AB \|_{op} \leq \|A\|_{op} \|B\|_{op}$ (\cite{rao2000linear}). Therefore, from equation~\eqref{eqn:eigen-bdd-Sig} and definition of $\mathcal{H}_{T,k}$ it follows that,
\begin{align*}
    &\bigg| \bigg| \left(\sum^K_{k=1} \bar{p}_{T,k} \times (\Sigma_k) \right)^{-1} \Sigma_k \ \left(\sum^K_{k=1} \bar{p}_{T,k} \times (\Sigma_k) \right)^{-1} \bigg| \bigg|_{op}\\[8pt]
    &\leq 
     \bigg| \bigg| \left(\sum^K_{k=1} \bar{p}_{T,k} \times (\Sigma_k) \right)^{-1} \bigg| \bigg|_{op} \ \|\Sigma_k \|_{op}\ \bigg| \bigg|\left(\sum^K_{k=1} \bar{p}_{T,k} \times (\Sigma_k) \right)^{-1} \bigg| \bigg|_{op}\\[8pt]
     & \leq \frac{\lambda^\star_U}{(\lambda^\star_L)^2}
\end{align*}

\noindent Furthermore,
\begin{align*}
    &\bigg| \bigg| \left(\sum^K_{k=1} \bar{p}_{T,k} \times (\Sigma_k) \right) \Sigma_k^{-1} \ \left(\sum^K_{k=1} \bar{p}_{T,k} \times (\Sigma_k) \right) \bigg| \bigg|_{op}\\[8pt]
    &\leq 
     \bigg| \bigg| \left(\sum^K_{k=1} \bar{p}_{T,k} \times (\Sigma_k) \right) \bigg| \bigg|_{op} \ \|\Sigma_k^{-1} \|_{op}\ \bigg| \bigg|\left(\sum^K_{k=1} \bar{p}_{T,k} \times (\Sigma_k) \right)\bigg| \bigg|_{op}\\[8pt]
     & \leq \frac{(\lambda^\star_U)^2}{\lambda^\star_L}
\end{align*}

\noindent Note that since for any symmetric invertible matrix $A$, $\lambda_{\min}(A) =1/ \lambda_{\max}(A^{-1})$ (\citet{rao2000linear}) we have
\begin{align}\label{eqn:eigenH-bdd}
  \frac{\lambda^\star_L}{(\lambda^\star_U)^2} \leq   \lambda_{\min}(\mathcal{H}_{T,k}) \leq \lambda_{\max}(\mathcal{H}_{T,k}) \leq \frac{\lambda^\star_U}{(\lambda^\star_L)^2}
\end{align}

\noindent We note that we may divide both numerator and denominator of RHS of equation~\eqref{eqn:VT-alt} by $\|a\|_2$. As a consequence, we assume without loss of generality that $\|a\|_2 = 1$. By combining equations~\eqref{eqn:quad Tlvec-bdd} and~\eqref{eqn:eigenH-bdd} we infer that
\begin{align}\label{eqn:Hquad-bdd}
    \frac{\lambda^\star_L}{(\lambda^\star_U)^2} \leq \| a^\top \ \mathcal{H}_{T,k} \ a \|_p
    \leq \frac{\lambda^\star_U}{(\lambda^\star_L)^2}
\end{align}

\noindent Now, by multiplying $T^2$ in both numerator and denominator of equation~\eqref{eqn:VT-alt} and applying equation~\eqref{eqn:lvec-alt} in equation~\eqref{eqn:vT-alt}, we obtain the following chain of inequalities.
\vspace{5pt}
\begin{align}\label{eqn:VT-bddone}
\notag
    \bigg| \bigg| \dfrac{\sum^K_{k=1} (\bar{w}_{T,k}-\bar{p}_{T,k}) \times \left[  \lvec^{\top} \Sigma_k \lvec \right] }{\sum^K_{k=1} \bar{p}_{T,k} \times \left[ \lvec^{\top} \Sigma_k \lvec \right] } \bigg| \bigg|_p 
    &= \bigg| \bigg| \dfrac{\sum^K_{k=1} (\bar{w}_{T,k}-\bar{p}_{T,k}) \times \left[  a^\top \ \mathcal{H}_{T,k} \ a  \right] }{\sum^K_{k=1} \bar{p}_{T,k} \times \left[ a^\top \ \mathcal{H}_{T,k} \ a  \right] } \bigg| \bigg|_p \\[8pt]\notag
    & \overset{(i)}{\leq} \frac{(\lamu)^2}{\laml} \bigg| \bigg|\sum^K_{k=1} (\bar{w}_{T,k}-\bar{p}_{T,k}) \times \left[ a^\top \ \mathcal{H}_{T,k} \ a  \right] \bigg| \bigg|_{2p}\\[8pt]
    \notag
    & \overset{(ii)}{\leq} \frac{(\lamu)^2}{\laml} \sum^K_{k=1}  \| \bar{w}_{T,k}-\bar{p}_{T,k} \|_{4p} \times \bigg| \bigg| a^\top \ \mathcal{H}_{T,k} \ a   \bigg| \bigg|_{4p} \\[8pt]
    &\overset{(iii)}{\leq} \left( \frac{\lamu}{\laml}\right)^3 \sum^K_{k=1}  \| \bar{w}_{T,k}-\bar{p}_{T,k} \|_{4p} \notag\\[8pt]
    &\overset{(iv)}{\lesssim}  \left( \frac{\lamu}{\laml}\right)^3  \sum^K_{k=1}  \Exs[| \bar{w}_{T,k}-\bar{p}_{T,k} |]  \notag\\[8pt]
    &=  \left( \frac{\lamu}{\laml}\right)^3 \Exs[ \| \bar{w}_{T}-\bar{p}_{T} \|_{1}]
\end{align}

\noindent Inequalities $(i)$ and $(iii)$ in the above chain follows from equation~\eqref{eqn:Hquad-bdd}. Inequality $(ii)$ follows from Holder's inequality. Finally, inequality $(iv)$ is true as $l_p$ and $l_1$ norm are equivalent. Now, by convexity of the $l_p$ norm it follows that $\|\bar{w}_{T}-\bar{p}_{T} \|_1 \leq 2 \| \bar{w}_{T}-w^\star_{T} \|_1$.  Therefore, by suppressing the contribution of the condition number $\lamu/\laml $ it follows from equation~\eqref{eqn:wlone-err} that,
\begin{align}
     \|V^2_T-1 \|_p  \lesssim \Exs[ \|\bar{w}_T - w^\star_T \|_1] \lesssim \Psi(\gamma_T)
\end{align}

\subsection*{Upper bound of $T_2$}

\noindent From equation~\eqref{defn:TT} we recall that $T_2 = \dfrac{1}{s^{2q}_T}\sum^T_{t=1} \| Z_t\|^{2q}_{2q}$, where $Z_t = a^\top (\PT)^{-1}z_t\varepsilon_t$. Furthermore, note that $\lvec = (\PT)^{-1}\ a$ in equation~\eqref{eqn:sT-alt} which states that
\begin{align}\label{eqn:sT-repeat}
    \frac{1}{T}s^2_T = \sum^K_{k=1} \bar{p}_{T,k} \times \left[  \lvec^{\top} \Sigma_k \lvec \right] 
\end{align}

\noindent Therefore, it follows that 
\begin{align*}
    T \ Z_t = a^\top \left( \frac{1}{T}\PT \right)^{-1}z_t \varepsilon_t
    \quad \text{and,} \quad
    T \ \lvec = \left( \frac{1}{T}\PT \right)^{-1} a
\end{align*}

\noindent As we can divide both numerator and denominator of $T_2$ with $1/\|a\|^2_2$, we can assume without loss of generality that $\|a\|_2 = 1$. Hence, by applying the Cauchy-Schwarz inequality we have
\begin{align}
    | T \ Z_t|
    \leq \left| \left| \left( \frac{1}{T}\PT \right)^{-1} \right| \right|_{op} \|z_t\|_2 \|\varepsilon_t\|_2
    \leq \frac{1}{\laml}
\end{align}

\noindent Hence, $\|TZ_t\|^{2q}_{2q} \leq \frac{1}{(\laml)^{2q}}$ for each $t \in [T]$. Note that for any matrix $M$ and vector $v$ we have $v^\top M a \geq \lambda_{\min}(M) \|v\|^2_2$. By substituting $v$ with $a$ and $M$ with $(\PT)^{-2}$, we have $\|Tu\|^2_2 \geq \frac{1}{(\lamu)^2}$. Now, by dividing both numerator and denominator of $T_2$ by $T^{2q}$ we have
\begin{align*}
    T_2 = \dfrac{1}{T^{2q} \cdot s^{2q}_T}\sum^T_{t=1} \| TZ_t\|^{2q}_{2q}
\end{align*}

\noindent Now,
\begin{align}\label{eqn:Tto-numer}
    \frac{1}{T^{q}} \sum^T_{t=1} \| TZ_t\|^{2q}_{2q} \ \leq \ \frac{1}{T^{q-1}} \frac{1}{(\laml)^{2q}}
\end{align}

\noindent The denominator of $T_2$ when multiplied by $T^q$ is equivalent to the equation below by applying  equation~\eqref{eqn:sT-repeat},
\begin{align}\label{eqn:Tto-denom}
    T^q \cdot s^{2q}_T
    = T^{2q} \frac{1}{T^q}\cdot s^{2q}_T
    = \left\{ \sum^K_{k=1} \bar{p}_{T,k} \times \left[  (T\lvec)^{\top} \  \Sigma_k \ (T\lvec) \right]
    \right\}^q
\end{align}

\noindent Therefore, from equations~\eqref{eqn:Tto-numer} and~\eqref{eqn:Tto-denom} we have
\begin{align*}
    \frac{1}{s_T^{2q}}\sum^T_{t=1} \| Z_t\|^{2q}_{2q}
    \ \leq \ \left[ \sum^K_{k=1} \bar{p}_{T,k} \times \left[  (T\lvec)^{\top} \Sigma_k (T\lvec) \right] \right]^{-q} \ \frac{1}{(\laml)^{2q}} \ \frac{1}{T^{q-1}}
\end{align*}

\noindent As $\lvec^{\top} \Sigma_k \lvec$ are uniformly bounded below by $\laml \|u\|^2_2$ for each 
$k \in [K]$, it follows that
\begin{align*}
    \left[ \sum^K_{k=1} \bar{p}_{T,k} \times \left[  \lvec^{\top} \Sigma_k \lvec \right] \right]^{-q} \ \leq \ \left(\frac{1}{\laml \|Tu\|^2_2} \right)^q
    \ \leq \ \left(\frac{(\lamu)^2}{\laml} \right)^q
\end{align*}

\noindent The above  observation leads us to the following conclusion.
\begin{align}\label{eqn:Tto-upp-bdd}
    \left(\dfrac{1}{s^{2q}_T}\sum^T_{t=1} \| Z_t\|^{2q}_{2q} \right)^{\frac{1}{2q+1}} \lesssim  \ \frac{1}{T^{\frac{q-1}{2q+1}} } 
\end{align}

\noindent Therefore, for sufficiently large $q>1$ (free of $T$), the term $T_2$ exhibits polynomial decay. To ensure that our algorithm is near minimax optimal, we choose $\gamma_T = \sqrt{\log T}$. Under this choice, the error term $\err$ is of order $1/\sqrt{\log T}$. Consequently, the overall rate is dominated by $\err$. By fixing $q = 4$ it follows that the upper bound of inequality~\eqref{eqn:Tto-upp-bdd} simplifies to $1/T^{1/3}$.

\subsection*{Proof of Lemma~\ref{lemma:ratto}}

\newcommand{\stath}{\mathcal{S}_3(T)}

\noindent Recall that,
\begin{align*}
    \ratto :=  \frac{a^{\top} \left(\MTP-\Id  \right)}{ \| b\|_2} \sum^T_{t=1} \SigPinv\z_t  \varepsilon_t
\end{align*}
where $\MTP := k_T \mtp$ such that $ k_T = \sqrt{ a^{\top} \SigPinv a / a^{\top} S_T^{-1} a}  
$

\noindent Define the vector $v_T(a) :=  (\frac{1}{T}\PT)^{-1/2} \left(\MTP-\Id  \right)\ a $. Then we can rewrite $\ratto$ as follows
\begin{align*}
    \ratto = \frac{v_T^\top (a)}{  \sqrt{a^\top (\frac{1}{T}\PT)^{-1}  a}} \frac{1}{\sqrt{T}}\sum^T_{t=1} \left(\frac{1}{T}\PT \right)^{-1/2} \z_t  \varepsilon_t \equiv \frac{v_T^\top (a)}{  \sqrt{a^\top (\frac{1}{T}\PT)^{-1}  a}} \cdot \stath
\end{align*}

\noindent where 
\begin{align*}
    \stath := \frac{1}{\sqrt{T}}\sum^T_{t=1} \left(\frac{1}{T}\PT \right)^{-1/2} \z_t  \varepsilon_t
\end{align*}

\noindent We define a new event $\Etwo$ as follows
\begin{align}\label{eqn:Etwo}
    \Etwo := \left\{ \lambda_{\min} \left ( \frac{1}{T}S_T \right) \geq \frac{\laml}{2} \right\}
\end{align}

\noindent On event $\Etwo$, $\ratto$ can be approximated in terms of $\stath$ and $\natto$, where $\natto = \left[\frac{1}{T} \PT \right]^{-1} \times \left[ \frac{M_T}{T} + \frac{B_T}{T}-\frac{1}{T} \PT\right]$. We formalize this notion in Lemma~\ref{lemma:RT-Etwo-approx} stated below.
\begin{lemma}\label{lemma:RT-Etwo-approx}
    On event $\Etwo$, for any vector $a \in \real^d \setminus \{0\}$, there exists a constant $C(d, \laml,\ \lamu)$ depending only on $d, \laml,\ \lamu$  for which we have,
    \begin{align}
       |\ratto | \leq C(d, \laml,\ \lamu) \times \|\natto\|_{op} \times \| \stath \|_2 
    \end{align}
\end{lemma}

\noindent By applying Lemma~\ref{lemma:RT-Etwo-approx} we obtain the following string of inequalities
\begin{align}\label{eqn:Etwo-ctrA}
\notag
    \Prob(\Eone^c) 
    &= \Prob(|\ratto|>\delo) \\[8pt] \notag
    &= \Prob(|\ratto|>\delo,\ \Etwo) \ + \ \Prob(|\ratto|>\delo,\ \Etwo^c)
    \\[8pt] \notag
    & \overset{(i)}{\leq} \Prob \left( C(d, \laml,\ \lamu) \times \|\natto\|_{op} \times \| \stath \|_2 \ >\delo,\ \Etwo \right) \ + \ \Prob \left(|\ratto|>\delo,\ \Etwo^c \right)\\[8pt] \notag
    &  \leq \Prob \left( C(d, \laml,\ \lamu) \times \|\natto\|_{op} \times \| \stath \|_2  \ >\delo \right) \ + \ \Prob( \Etwo^c)\\[8pt] \notag
    & \overset{(ii)}{\leq}  \ \frac{C(d, \laml,\ \lamu)}{\delo} \ \Exs \bigg[  \ \| \natto \|_{op} \  \|\stath \|_2  \bigg] \ + \ \Prob( \Etwo^c) \\[8pt]
     &\overset{(iii)}{\leq}  \frac{C(d, \laml,\ \lamu)}{\delo} \ \sqrt{\Exs \bigg[ \ \| \natto \|^2_{op}  \bigg] \Exs \left[ \|\stath \|^2_2 \right]} \ + \ \Prob( \Etwo^c )
\end{align}

\noindent Inequality $(i)$ holds by applying Lemma~\ref{lemma:RT-Etwo-approx}, whereas $(ii)$ and $(iii)$ follow by applying Markov and the CS inequality, respectively.

\subsubsection*{Bound on $\Exs \left[ \|\stath \|^2_2 \right]$:}
Note that the term $\stath$ is a standardized sum of a martingale difference sequence, and we claim that $\Exs [ \|\stath \|^2_2 ]$ is uniformly bounded above. The justification is the following: 
\begin{align}\label{eqn:stath-ctr}
\notag
   \Exs[\|\stath \|^2_2] 
   & = \Exs \left[ \frac{1}{T} \sum^T_{t=1} \varepsilon_t^2 \z_t^{\top} \left( \frac{1}{T} \PT\right)^{-1} \z_t \right]\\[8pt] \notag
   & \leq \Exs \left[ \frac{1}{T} \sum^T_{t=1} \varepsilon_t^2  \bigg| \bigg|\left( \frac{1}{T} \PT\right)^{-1} \bigg| \bigg|_{op} \|\z_t\|_2^2 \right]\\[8pt]
   &\leq \frac{1}{\laml}
\end{align}

\noindent The first equality holds because the cross product terms vanish as $\stath$ is a sum of martingale difference sequence, and the last inequality above utilizes the fact that $|\varepsilon_t| \leq 1, \|z_t\|_2 \leq 1$ (see Assumptions~\ref{assn:noise} and~\ref{assn:context}) Therefore, by combining equation~\eqref{eqn:Etwo-ctrA}  with equation~\eqref{eqn:stath-ctr} we obtain
\begin{align}
   \Prob(\Eone^c) \leq  \ \frac{C(d, \laml,\ \lamu)}{\delo} \sqrt{\Exs \bigg[ \| \natto \|^2_{op} \bigg]} \ + \ \Prob( \Etwo^c)   
\end{align}

\subsubsection*{Bound on $\Exs \bigg[ \ \| \natto \|^2_{op}  \bigg]$:}
\noindent Recall that $\natto = \left[\frac{1}{T} \PT \right]^{-1} \times \left[ \frac{M_T}{T} + \frac{B_T}{T}-\frac{1}{T} \PT\right]$. Then as $\|\left[\frac{1}{T} \PT \right]^{-1} \|^2_{op} \ \leq \ \frac{1}{(\laml)^2} $ it follows that
\begin{align}
\notag
 \Exs \bigg[ \| \natto \|^2_{op} \bigg] 
 &\leq \frac{1}{(\laml)^2} \cdot \Exs \left[  \left| \left| \frac{M_T}{T} + \frac{B_T}{T}-\frac{1}{T} \PT \right| \right|^2_{op} \right] \\[8pt] \notag
  &\overset{(i)}{\leq} \frac{1}{(\laml)^2} \cdot \Exs \left[   \left| \left| \frac{M_T}{T} \right| \right|^{2}_{op} + \left| \left|\frac{B_T}{T}-\frac{1}{T} \PT \right| \right|^{2}_{op} \ + \ 2 \times \left| \left| \frac{M_T}{T} \right| \right|_{op} \times \left| \left|\frac{B_T}{T}-\frac{1}{T} \PT \right| \right|_{op}    \right] \\[8pt] \notag
  &\overset{(ii)}{\lesssim} \ \frac{1}{(\laml)^2} \cdot  \left[   \frac{1}{T} + (\lamu)^2 \  \err^2 \ + \ 2 \lamu \times \frac{\err}{\sqrt{T}}    \right] \\[8pt]
   &\overset{(iii)}{\lesssim} \ \left(\frac{\lamu}{\laml} \right)^2 \cdot   \err^2  
\end{align}

\noindent Inequality $(i)$ follows from triangle inequality. We note that $\left| \left|\frac{B_T}{T}-\frac{1}{T} \PT \right| \right|_{op} \leq \lamu \|\bar{w}_T - w^\star_T\|_{1}$ (see proof of Lemma~\ref{lemma:MB-inprob} in Appendix~\ref{app:Aux-lemmas} for details). Hence,  Inequality $(ii)$ holds by applying equation~\eqref{eqn:master-ineqthree} along with Lemma~\ref{lemma:wmorm}, while $(iii)$ is a consequence of ignoring lower order terms. Recall that
\begin{align*}
   \err :=  \sqrt{10{\frac{\sqrt{\numactions \log K}}{\gamma_T}} +
     \frac{8\gamma_T \log(K) \log^2\left( KT \right) }{ \numactions^2 T  }  }
\end{align*} Therefore, $\err$ converges to $0$ at a rate much slower than $1/\sqrt{T}$. For example the proposed algorithm is near minimax optimal by choosing $\gamma_T = \sqrt{\log T}$, in which case $\err$ exhibits logarithmic decay.  Due to this reason we can ignore the lower order terms in inequality $(iii)$. Hence,
\begin{align}\label{eqn:RT-ctrB}
    \Prob(\Eone^c) \lesssim  \ \frac{\err}{\delo} \ + \ \Prob( \Etwo^c)   
\end{align}

\noindent By combining equation~\eqref{eqn:RT-ctrB} with Lemma~\ref{lemma:lmin-ST} stated below, we obtain our result.

\begin{lemma}\label{lemma:lmin-ST}
Suppose that the martingale difference sequence $D_t$ are uniformly bounded above by $L$. Then we have the following:
\begin{align}
        \Prob(\Etwo^c) \leq  d \cdot \expower\left\{ - \frac{(\laml)^2}{32 +  8\laml/3} \cdot  T\right\}
    \end{align}
\end{lemma}

\noindent This completes our proof.

\newpage

\section{Statistical Inference with Ridge Estimator}\label{appnd:ridge}

\newcommand{\rao}{\mathcal{V}_1(T)}
\newcommand{\brid}{\hat{\beta}_{rid}}
\newcommand{\rbase}{\sqrt{a^\top R_T^{-1} a}}
\newcommand{\rbaseto}{\sqrt{a^\top [R_T^{-1} \BT ] \left( \frac{1}{T} \BT \right)^{-1}  a}}

\newcommand{\rmatto}{\mathcal{M}_2(T)}
\newcommand{\rratto}{\mathcal{V}_2(T)}

We consider a linear contextual bandit problem with a finite action set $\mathcal{A}$. 
At each round $t = 1, \ldots, T$, the learner observes a context vector $x_t \in \mathcal{X}$, drawn iid from a distribution $\mathcal{P_X}$, selects an action $a_t \in \mathcal{A}$, and receives a random loss
\begin{equation*}
\loss_t \;=\; \langle \beta^\star,\, c(x_t, a_t) \rangle + \varepsilon_t,
\end{equation*}
where $c(x_t, a_t) \in \mathbb{R}^d$ is a known feature representation of the pair $(x_t,a_t)$, 
$\beta^\star \in \mathbb{R}^d$ is an unknown parameter vector. We use $\mathcal{F}_t := \sigma(x_1, a_1, \loss_1, \ldots, x_{t}, a_t, \loss_t)$ to denote the $\sigma$-field generated by observation up to time $t$. We assume that the noise sequence $\{\varepsilon_t\}_{t=1}^T$ satisfies
\begin{equation*}
\mathbb{E}[\varepsilon_t \mid \mathcal{F}_{t-1}, x_t, a_t] = 0.
\end{equation*}

\noindent In this section we show that if we consider a ridge estimator, with penalty term $\rpen $, then Theorem~\ref{thm:clt} holds. Let $R_T := S_T + \rpen \ \Id$ and consider the ridge estimator 
$\brid:= R_T^{-1}S_T \ \beta^\star + \ R_T^{-1} \sum^T_{t=1} \z_t \varepsilon_t$, where $\z_t = c(x_t, a_t)$. Our statistic of interest is the following
\begin{align}\label{defn:rao}
   \rao := \dfrac{a^\top(\brid - \beta^\star)}{\rbase} 
\end{align}

\noindent Theorem~\ref{thm:ridge-clt} below states that for the ridge estimator $\rao$ defined above, both stability and  CLT hold with same rate of convergence as in Theorem~\ref{thm:clt}. We prove Theorem~\ref{thm:ridge-clt} in Section~\ref{sec:ridge-proof} and provide additional simulation details in Section~\ref{sec:ride-sim}.  

\begin{subequations}
\begin{theorem}
\label{thm:ridge-clt}
Suppose that Assumptions~\ref{assn:noise},~\ref{assn:bonded-loss},~\ref{assn:context} and~\ref{assn:Q-bdd} hold. Then the regularized-EXP4 algorithm (\ref{alg:penexp4-finite}) with step size $\eta = \sqrt{\frac{\log K}{ \numactions T}}$, and tuning parameters  $\epsilon = \frac{1}{KT}$, $\lambda = \frac{\gamma_T}{\sqrt{T}}$ is stable, and for any $a \in \real^d$ satisfies 
\begin{align}
    \distt\!\left(
        \rao,\;
        Z
    \right)
    \;\lesssim\;
     \ \Psi(\gamma_T)^{1/3} 
\end{align}
where $Z \sim \mathcal{N}(0,1)$,  $C$ is a constant independent of $T$, and 
\begin{align}
        \err := \sqrt{10{\frac{\sqrt{\numactions \log K}}{\gamma_T}} +
        \frac{8\gamma_T \log(K) \log^2\left( KT \right) }{ \numactions^2 T  }  } 
\end{align}
\end{theorem}
\end{subequations}

\subsection{ Proof of Theorem~\ref{thm:ridge-clt}}\label{sec:ridge-proof}

We split the proof into two parts. We first prove that stability property (Definition~\ref{defn:stability}) is satisfied which leads to asymptotic normality of our $\rao$, followed by the proof for rate of convergence.

\subsubsection*{Part (a) : Proof of stability property of the ridge estimator}
\label{sec:ridgre-stability}

\noindent In this section we show that if we consider a ridge estimator, with penalty term $\rpen \ll \sqrt{T} $, then Theorem~\ref{thm:clt} holds. Let $R_T := S_T + \rpen \ \Id$ and consider the ridge estimator 
$\brid:= R_T^{-1}S_T \ \beta^\star + \ R_T^{-1} \sum^T_{t=1} \z_t \varepsilon_t$, where $\z_t = c(x_t, a_t)$. Our statistic of interest is the following
\begin{align}\label{defn:rao}
   \rao := \dfrac{a^\top(\brid - \beta^\star)}{\rbase} 
\end{align}

\noindent We are interested to show that for the design matrix $R_T = S_T + \rpen \Id$, there exists a sequence of deterministic, positive definite matrices $\{\tilde{\Sigma}_T \}$ such that
\begin{align}\label{eqn:ridge-stab}
    \tilde{\Sigma}^{-1}_T R_T \xrightarrow{\Prob} \Id
 \end{align}

\noindent Now, if we choose our penalty term $\lambda_{rid} \ll T$ then for the ridge estimator we have the decomposition of $R_T$ :
\begin{align}
    \dfrac{R_T}{T} 
     = \underbrace{\dfrac{M_T}{T}}_{o_{\Prob}(1)} + \underbrace{\left[\dfrac{B_T}{T}-\frac{1}{T} \BT \right]}_{o_{\Prob}(1)} + \underbrace{\frac{\rpen}{T}}_{o_\Prob(1)} \Id+ \frac{1}{T} \BT  
\end{align}

\noindent Therefore, applying Lemma~\ref{lemma:MB-inprob} along with the fact that $\lambda_{\min} \left(\frac{1}{T} \BT \right) > 0$ shows that property~\eqref{eqn:ridge-stab} is satisfied by choosing $\tilde{\Sigma}_T = \BT$, where $\BT$ is as defined in equation~\eqref{defn:Sigma-star}.

\noindent By substituting the value of $\brid$ in equation~\eqref{defn:rao}, we obtain the following decomposition
\begin{align}\label{eqn:rao-decom}
   \rao = \underbrace{\dfrac{a^\top (R_T^{-1}S_T - \Id)}{\rbase}  \beta^\star}_{\mathcal{T}_1(T)}  \ \  + \  \ \underbrace{\dfrac{a^\top R_T^{-1}}{\rbase}  \sum^T_{t=1} \z_t \varepsilon_t}_{\mathcal{T}_2(T)}
\end{align}

\noindent The term $\mathcal{T}_1(T)$ in equation~\eqref{eqn:rao-decom} is the bias induced by the regularization. As $R_T := S_T + \rpen \ \Id$ we note that $R_T^{-1}S_T - \Id = R_T^{-1}(S_T - R_T)$, which is equal to $-\rpen \ R_T^{-1}$. Therefore, $\mathcal{T}_1(T)$ can be rewritten as
\begin{align}\label{eqn:taoo-simp}
  \mathcal{T}_1(T) 
  = -\rpen \dfrac{a^\top R_T^{-1}}{\rbase}  \beta^\star
\end{align}

\noindent Let $k_T := \sqrt{a^\top  \left( \frac{1}{T} \BT \right)^{-1}  a} / \sqrt{a^\top  R_T^{-1} a}$ and $\mathcal{M}_T = k_T \times  R_T^{-1}\BT$. Then, 
\begin{align}\label{eqn:taoo-decom}
   \mathcal{T}_1(T)  
   =\frac{-\rpen}{\sqrt{T}} \dfrac{a^\top  \left( \frac{1}{T} \BT \right)^{-1}}{\sqrt{a^\top  \left( \frac{1}{T} \BT \right)^{-1}  a}}  \beta^\star
   + \frac{-\rpen}{\sqrt{T}} \dfrac{a^\top [\mathcal{M}_T - \Id] \left( \frac{1}{T} \BT  \right)^{-1} }{\sqrt{a^\top  \left( \frac{1}{T} \BT \right)^{-1}  a}}  \beta^\star
\end{align}

\noindent Therefore, as $\lambda_{\min}(\frac{1}{T} \BT)$ is uniformly bounded away from zero and $\lambda_{\max}(\frac{1}{T} \BT)$ is uniformly bounded above, it follows from stability (equation~\eqref{eqn:ridge-stab}) that by choosing $\rpen \ll \sqrt{T}$, $\mathcal{M}_T \xrightarrow{\Prob} \Id$ and consequently, the bias $\mathcal{T}_1(T) $ converges to $0$ in probability, as $T \rightarrow \infty$. 

\noindent Now, for $\mathcal{T}_2(T)$ we have a similar decomposition:
\begin{align}\label{eqn:tautwo-decom}
  \mathcal{T}_2(T) =  \underbrace{\dfrac{a^\top \left( \frac{1}{T} \BT \right)^{-1}}{\sqrt{a^\top  \left( \frac{1}{T} \BT \right)^{-1}a}  } \frac{1}{\sqrt{T}} 
  \sum^T_{t=1} \z_t \varepsilon_t }_{I_1(T)}
  \ \ + \ \
  \underbrace{\dfrac{a^\top [\mathcal{M}_T - \Id]\left( \frac{1}{T} \BT \right)^{-1}}{\sqrt{a^\top  \left( \frac{1}{T} \BT \right)^{-1}a}  } \frac{1}{\sqrt{T}} 
  \sum^T_{t=1} \z_t \varepsilon_t}_{I_2(T)}
\end{align}

\noindent From the central limit theorem for sum of martingale difference sequence we have (\citet{dvoretzky1972asymptotic}), $I_1(T) \xrightarrow{d} \mathcal{N}(0,1)$ . This result, combined with the fact that $\mathcal{M}_T \xrightarrow{\Prob} \Id$ implies $I_2(T)$ converges to $0$ in probability. This completes the proof. 

\subsubsection*{Part (b) : Proof of quantitative CLT of the ridge estimator}
\label{sec:ridgre-berry}

The proof of the quantitative CLT for the ridge estimator $\rao$ is similar to that for the OLS estimator $\stao$. In this section we highlight the main steps of the proof. Recall that $\PT := \Exs[S_T]$. From equation~\eqref{eqn:PT-val} and proof of Lemma~\ref{lemma:MB-inprob} it follows that $\frac{B_T}{T}-\frac{1}{T} \PT \xrightarrow{\Prob} 0 $. Choose $\rpen \ll T$ and consider the following decomposition of $R_T$:
\begin{align}
    \dfrac{R_T}{T} 
     = \underbrace{\dfrac{M_T}{T}}_{o_{\Prob}(1)} + \underbrace{\left[\dfrac{B_T}{T}-\frac{1}{T} \PT \right]}_{o_{\Prob}(1)} + \underbrace{\frac{\rpen}{T}}_{o_\Prob(1)} \Id+ \frac{1}{T} \PT  
\end{align}

\noindent As $\lambda_{\min} (\PT) > \laml$, we have
\begin{align}\label{eqn:RT-stabtwo}
    (\PT)^{-1}R_T \xrightarrow{\Prob} \Id
\end{align}

\noindent Let $k_{2,T} := \sqrt{a^\top  \left( \frac{1}{T} \PT \right)^{-1}  a} / \sqrt{a^\top  R_T^{-1} a}$ and define $\mathcal{M}_{2,T} = k_{2,T} \times  R_T^{-1}\PT$. By replacing $\BT$ with $\PT$ in equations~\eqref{eqn:rao-decom},~\eqref{eqn:taoo-simp} and~\eqref{eqn:tautwo-decom} we obtain,
\begin{align}\label{eqn:vto-decom}
  \rao =  \underbrace{\frac{a^\top \left( \frac{1}{T} \PT \right)^{-1}}{\sqrt{a^\top  \left( \frac{1}{T} \PT \right)^{-1}a}  } \frac{1}{\sqrt{T}} 
  \sum^T_{t=1} \z_t \varepsilon_t }_{\stato}     
  +  \underbrace{(-\rpen) \frac{a^\top R_T^{-1}}{\rbase}  \beta^\star}_{ J_1(T)} 
    + 
  \underbrace{\frac{a^\top [\mathcal{M}_{2,T} - \Id]\left( \frac{1}{T} \PT \right)^{-1}}{\sqrt{a^\top  \left( \frac{1}{T} \PT \right)^{-1}a}  } \frac{1}{\sqrt{T}} 
  \sum^T_{t=1} \z_t \varepsilon_t}_{J_2(T)} 
\end{align}

\noindent We note that since $\ratto = \stato \ + \  J_1(T) + J_2(T)$, and from Lemma~\ref{lemma:rate-stao} we have $\dist \left(\stato,Z \right)
        \leq  \ C \left[ \Psi(\gamma_T)^{1/3} \ + \ \frac{1}{T^{1/3}}  \right]$. Therefore if we substitute $\stao$ with $\rao$ and $\ratto$ with $J_1(T) + J_2(T)$ in the proof of Theorem~\ref{thm:clt} in Section~\ref{sec:proof_of_theorems}, all the arguments continue to remain valid, provided the following claim holds:
\begin{subequations}
    \begin{align}
    &\Prob \left(|J_1(T)| > \frac{\delo}{2} \right) \lesssim \frac{\rpen}{\delo \ \sqrt{T}} \label{eqn:jo-approx}\\[8pt]
    & \Prob \left(|J_2(T)| > \frac{\delo}{2} \right) \lesssim \frac{\err}{\delo} \ + \  d \cdot \expower\left\{ - \frac{(\laml)^2}{32 +  8\laml/3} \cdot  T\right\} \label{eqn:jto-approx}
\end{align}
\end{subequations}

\noindent Therefore, assuming these claims are valid from equations~\eqref{eqn:jo-approx} and~\eqref{eqn:jto-approx} we have
\begin{align}\label{eqn:Jo-Jto-bdd}
\notag
    \Prob(| J_1(T) \ + \ J_2(T)| >\delo)
    &\ \leq \  \Prob(| J_1(T)| \ + \ |J_2(T)| >\delo) \\[8pt] \notag
    &\ \leq  \ \Prob \left(|J_1(T)| > \frac{\delo}{2} \right) 
    +  \Prob \left(|J_2(T)| > \frac{\delo}{2} \right)\\[8pt] 
    & \  \lesssim \  \frac{\err}{\delo} \ + \  d \cdot \expower\left\{ - \frac{(\laml)^2}{16 +  4\laml/3} \cdot  T\right\}
    + \frac{\rpen}{\delo \ \sqrt{T}} 
\end{align}

\noindent Therefore, by replacing equation~\eqref{eqn:ratto} of Lemma~\ref{lemma:ratto} in Section~\ref{sec:proof_of_theorems} with equation~\eqref{eqn:Jo-Jto-bdd} we obtain, 
\begin{align*}
    \sup_{x\in \real} |\Prob(\rao \leq x) - \Phi(x) |
    \lesssim \  \beta_T +\delta_{1,T} + \frac{\err}{\delo} \ + \expower\left\{ - \frac{(\laml)^2}{32 +  8\laml/3} \cdot  T\right\}
    + \frac{\rpen}{\delo \ \sqrt{T}} 
\end{align*}

\noindent We choose $\delo$ such that the above upper bound gets minimised.   By applying the AM-GM inequality, we obtain the desired bound by choosing $\delta_T = \sqrt{\alpha_T}$ and recalling that $\beta_T = \err^{1/3}$,
\begin{align*}
    \sup_{x\in \real} |\Prob(\rao \leq x) - \Phi(x) |
    \lesssim \err^{1/3} + 2 \ (\err)^{1/2}
    + \expower\left\{ - \frac{(\laml)^2}{32 +  8\laml/3} \cdot  T\right\}
    + \frac{\rpen}{ \ \sqrt{\err T}} 
\end{align*}

\noindent Therefore, once we justify claims~\eqref{eqn:jo-approx} and~\eqref{eqn:jto-approx} our proof is complete.

\subsubsection*{Proof of claim~\eqref{eqn:jo-approx}}

\noindent Recall from equation~\eqref{eqn:taoo-decom} that
\begin{align*}
   J_1(T)  
   =\frac{-\rpen}{\sqrt{T}} \dfrac{a^\top  \left( \frac{1}{T} \PT \right)^{-1} \ \beta^\star}{\sqrt{a^\top  \left( \frac{1}{T} \PT \right)^{-1}  a}}  
   + \frac{-\rpen}{\sqrt{T}} \dfrac{a^\top [\mathcal{M}_T - \Id] \left( \frac{1}{T} \PT  \right)^{-1} \ \beta^\star }{\sqrt{a^\top  \left( \frac{1}{T} \PT \right)^{-1}  a}}  
\end{align*}

\noindent Note that for any symmetric pd matrix $M$, $u^\top M u \geq \lambda_{\min}(M) \cdot \| u\|^2_2$ and $|u_1^{\top}M u_2| \leq \| u_1\|_2 \  \lambda_{\max}(M) \ \| u_2\|_2$, where $u, u_1, u_2 \ \in \ \real^d$ (\citet{rao2000linear}). Consequently, we observe that
\begin{align*}
    \left|\frac{a^\top  \left( \frac{1}{T} \PT \right)^{-1} \ \beta^\star}{\sqrt{a^\top  \left( \frac{1}{T} \PT \right)^{-1}  a}}  \right| 
    \ \leq \ \frac{\sqrt{\lamu}}{\laml} \| \beta^\star\|_2
    \quad \text{and,} \quad 
    \left|\frac{a^\top [\mathcal{M}_T - \Id] \left( \frac{1}{T} \PT \right)^{-1} \ \beta^\star}{\sqrt{a^\top  \left( \frac{1}{T} \PT \right)^{-1}  a}}  \right| 
    \ \leq \ \frac{\sqrt{\lamu}}{\laml} \| \beta^\star\|_2 \|\mathcal{M}_T - \Id \|_{op}
\end{align*}

\noindent From the above equation, it follows from the definition of $J_1(T)$ that $|J_1(T)| \ \lesssim \ \frac{\rpen}{\sqrt{T}}.$ Consequently, by applying Markov's inequality
\begin{align}\label{eqn:Jo-bdd}
    \Prob \left(|J_1(T)| > \frac{\delo}{2} \right) \lesssim \frac{\rpen}{\delo \ \sqrt{T}} 
\end{align}

\subsubsection*{Proof of claim~\eqref{eqn:jto-approx}}

\noindent From equation~\eqref{eqn:vto-decom} we observe that,
\begin{align*}
   J_2(T) = \frac{a^\top [\mathcal{M}_{2,T} - \Id]\left( \frac{1}{T} \PT \right)^{-1}}{\sqrt{a^\top  \left( \frac{1}{T} \PT \right)^{-1}a}  } \frac{1}{\sqrt{T}} 
  \sum^T_{t=1} \z_t \varepsilon_t
\end{align*}
where  $\mathcal{M}_{2,T} = k_{2,T} \times  R_T^{-1}\PT$ and $k_{2,T} := \sqrt{a^\top  \left( \frac{1}{T} \PT \right)^{-1}  a} / \sqrt{a^\top  R_T^{-1} a}$. As $R_T = S_T + \rpen/\sqrt{T}$, the lower bound on $\lambda_{\min} \left( \frac{1}{T} R_T \right)$ as follows
\begin{align*}
   \lambda_{\min} \left( \frac{1}{T} R_T\right) \geq  \lambda_{\min} \left( \frac{1}{T} S_T\right) + 
\frac{\rpen}{T} \ > \ \lambda_{\min} \left( \frac{1}{T} S_T\right)
\end{align*}

\noindent Hence, on event $\Etwo := \left\{ \lambda_{\min} \left( \frac{1}{T} S_T\right) \ > \ \laml \right\}$, we have $\left\{ \lambda_{\min} \left( \frac{1}{T} R_T\right) \ > \ \laml \right\}$. Therefore, by replacing $S_T$ with $R_T$ in the proof of Lemmas~\ref{lemma:ratto} and~\ref{lemma:RT-Etwo-approx} and applying Lemma~\ref{lemma:lmin-ST} we have:
\begin{align}\label{eqn:Jto-bdd}
       \Prob \left(|J_2(T)| > \frac{\delo}{2} \right) \lesssim \frac{\err}{\delo} \ + \  d \cdot \expower\left\{ - \frac{(\laml)^2}{32 +  8\laml/3} \cdot  T\right\}
    \end{align}

\subsection{Numerical Experiments}\label{sec:ride-sim}

In this section we evaluate the empirical behavior of the proposed \emph{regularized–EXP4} algorithm in a stochastic contextual bandit environment for the modified ridge estimator.

\subsubsection*{Model and Data Generation}

\noindent The loss model follows a block–sparse linear structure
\begin{align*}
    \loss_t = \langle x_t , \theta_{a_t} \rangle + \varepsilon_t,
\end{align*}
where $\varepsilon_t\sim\text{Unif}(-0.1,0.1)$ and $x_t\in\mathbb{R}^{d_x}$ is a normalized Gaussian context vector with $\|x_t\|_2 \le 1$. Each arm $a\in\{1,\dots,A\}$ possesses an unknown parameter $\theta_a\in\mathbb{R}^{d_x}$, and the global coefficient vector
\begin{align*}
    \beta^\star = (\theta_1,\dots,\theta_A) \in \mathbb{R}^{A d_x}
\end{align*}
is normalized to satisfy $\|\beta^\star\|_2 \le 1$. The learner never observes $\theta_a$; instead, only bandit feedback $\loss_t$ is revealed. We encode actions through a sparse block feature map
\begin{align*}
    c(x,a) = \big(0,\dots,x,\dots,0\big) \in \mathbb{R}^{A d_x},
\end{align*}

\paragraph{Extension of the feature map and parameter space.}
To ensure that the losses are non-negative,  we extend the original feature representation by introducing an intercept term. For each context–action pair $(x_t, a_t)$, the original feature vector
\begin{align*}
c(x_t, a_t) \in \mathbb{R}^{A d_x}
\end{align*}
is augmented as
\begin{align*}
\tilde{c}(x_t, a_t)
&=
\begin{pmatrix}
c(x_t, a_t) \\
1
\end{pmatrix}
\in
\mathbb{R}^{A d_x + 1}.
\end{align*}
\noindent Correspondingly, the unknown parameter vector is extended to
\begin{align*}
\tilde{\beta}^\star
&=
\begin{pmatrix}
\beta^\star \\
2
\end{pmatrix}
\in
\mathbb{R}^{A d_x + 1},
\end{align*}
Under this augmented representation, the loss model becomes
\begin{align*}
\loss_t
&=
\langle \tilde{c}(x_t, a_t), \tilde{\beta}^\star \rangle + \varepsilon_t,
\end{align*}
which is algebraically equivalent to the original linear model but explicitly accounts for a constant offset in the losses. In particular, as $|\varepsilon_t| \leq 0.1,$ and $|c(x_t,a_t)| \leq 1$ adding shift of constant $2$ to the observed loss ensures that the resultant losses are non-negative..

\subsection*{Algorithmic Configuration}

We consider the same simulation environment with $6$ layer neural network based experts as considered in Section~\ref{sec:exps}.  The experts now form neural policies with a six-layer ReLU architecture. The expert policy is a six-layer neural network given by
\begin{align*}
   x \;\longrightarrow\; h_1 \;\longrightarrow\; h_2 \;\longrightarrow\; h_3
\;\longrightarrow\; h_4 \;\longrightarrow\; h_5 \;\longrightarrow\; h_6
\;\longrightarrow\; \operatorname{softmax}(\text{logits}), 
\end{align*}
where the hidden layers satisfy
\begin{align*}
   h_i = \operatorname{ReLU}\!\left( W_i h_{i-1} + b_i \right),
\qquad i = 1,\ldots,6, 
\end{align*}
with \(h_0 = x\). The entries of the weight matrices $W_i$ are i.i.d. draws from $\mathcal{N}(0,0.04)$ distribution. 
The resulting expert policy is
\begin{align*}
    \pi(a \mid x) = \frac{\exp(\text{logits}_a)}{\sum_{a'=1}^A \exp(\text{logits}_{a'})}.
\end{align*}

\noindent Let $R_T = S_T + \lambda_{rid} \Id$, where $S_T$ is the sample covariance matrix. Here we consider the ridge estimator
\begin{align*}
  \widehat{\beta}_{rid} := R_T^{-1} \sum^T_{t=1}c(x_t,a_t)l_t  
\end{align*}

\noindent We draw a random unit direction $a\in\mathbb{R}^{A d_x}$ and for each confidence level $\alpha \in [0.20, 0.01]$, check whether the true parameter lies inside the interval. Concretely we check if the target parameter $a^\top\beta^\star$ lies in the interval 
\begin{align*}
\mathcal{I}^{\mathrm{APS}}_T(a) :=  \big[a^\top\widehat\beta_{\mathrm{ridge}}-\xi_T\sqrt{a^\top \VT^{-1}a},\;
a^\top\widehat\beta_{\mathrm{ridge}}+\xi_T\sqrt{a^\top \VT^{-1}a}\big],
\end{align*}
where $\xi_T$ is
\begin{align}
    \xi_T :=  \sqrt{
2\!\left(
\frac{1}{2}\log\frac{\det(V_t)}{\det(\lambda I)}
\;+\;
\log\frac{1}{\alpha}
\right)
}
\;+\;
\sqrt{\lambda}\,\|\beta^\star\|_2
\end{align}
We note that the confidence intervals $\mathcal{I}^{\mathrm{APS}}_T(a)$ defined above are sharper than the anytime valid confidence interval defined in equation~\eqref{eq:APS-interval}. Wald coverage is measured analogously using the confidence
 interval $ \mathcal{I}^{\mathrm{Wald}}_T(a)$ similar to~\eqref{eq:wald-interval},
 \begin{align} 
    \mathcal{I}^{\mathrm{Wald}}_T(a)
    \;:=\;
    \Bigl[
        a^\top \widehat{\beta}_{rid}
        \;\pm\;
        z_{1-\alpha/2}\,
        \widehat\sigma\, \sqrt{a^\top R_T^{-1} a}
    \Bigr],
\end{align} with $\widehat{\sigma}$ as the sample standard deviation estimate~\cite[Lemma 3]{lai1982least}. For each $T \in \{500, 3000 \}$, we report empirical coverage of $\mathcal{I}^{\mathrm{APS}}(a)$ vs $\mathcal{I}^{\mathrm{Wald}}(a)$, and their average width.

\noindent In our experiments we set $A  = 3$ and $K = 5$ and $d_x = 50$. At the end of horizon $T$, we compute the ridge estimator
\begin{align*}
    \widehat\beta_{rid} = \big(S_T + \lambda_{rid}I\big)^{-1} b_T, \qquad
    S_T = \sum_{t=1}^T c(x_t,a_t) c(x_t,a_t)^\top,\quad
    b_T = \sum_{t=1}^T c(x_t,a_t) \loss_t.
\end{align*}

\noindent Hyperparameters are selected as
\begin{align*}
\varepsilon = \frac{1}{KT},\qquad
\lambda_{\rm pen}= \frac{\sqrt{\log T}}{\sqrt{T}},\qquad
\eta = \sqrt{\frac{\log K}{ \numactions T}},
\qquad \lambda_{rid} = \frac{1}{T}
\end{align*}
unless stated otherwise.  For each $T$, we report:

\begin{itemize}
    \item Empirical coverage  $\mathcal{I}^{\mathrm{APS}}$ vs $\mathcal{I}^{\mathrm{Wald}}$,
    \item Average width $\mathcal{I}^{\mathrm{APS}}$ vs $\mathcal{I}^{\mathrm{Wald}}$,
    \item Dependence on nominal confidence level $\alpha$.
\end{itemize}

\subsection*{Simulation Plots}

 \begin{figure}[H]\label{fig-ucb}
    \centering

    \begin{subfigure}[t]{0.42\textwidth}
        \includegraphics[width=\textwidth,trim={1cm 0.5cm 1cm 0.5cm}]{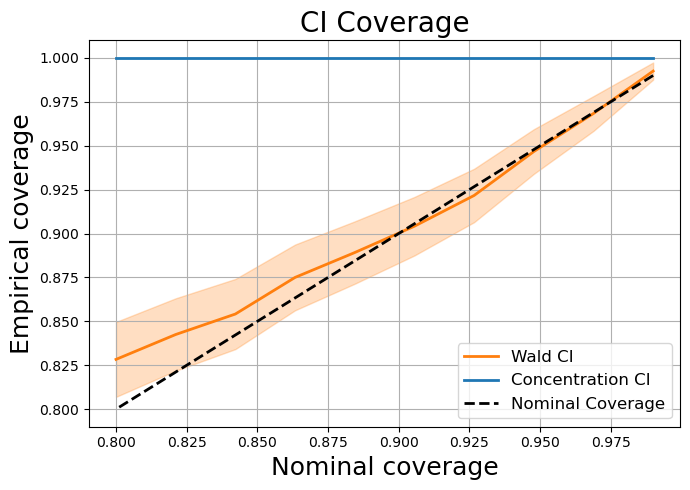}
    \end{subfigure}
    \hfill
    \begin{subfigure}[t]{0.42\textwidth}
        \includegraphics[width=\textwidth,trim={1cm 0.5cm 1cm 0.5cm}]{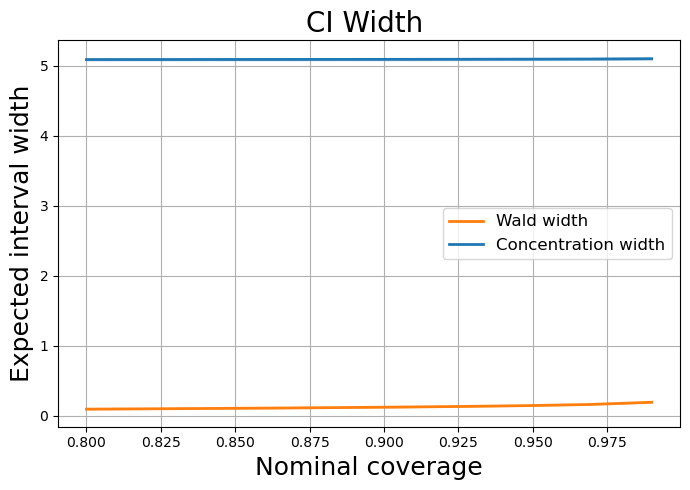}
    \end{subfigure}
    \\[4pt]
    \caption{\textbf{Left}: Coverage of both $\mathcal{I}^{\mathrm{APS}}$ and $\mathcal{I}^{\mathrm{Wald}}$.  \textbf{Right}: Expected confidence width of  both $\mathcal{I}^{\mathrm{APS}}$ and $\mathcal{I}^{\mathrm{Wald}}$. The average CI widths of $\mathcal{I}^{\mathrm{Wald}}$ and $\mathcal{I}^{\mathrm{APS}}$ across all values of $\alpha$ are $0.12$ and $5.09$ respectively.
     Simulations are based on \textbf{$T=3000$} runs.} 
\end{figure}

 \begin{figure}[H]\label{fig-ucb}
    \centering

    \begin{subfigure}[t]{0.42\textwidth}
        \includegraphics[width=\textwidth,trim={1cm 0.5cm 1cm 0.5cm}]{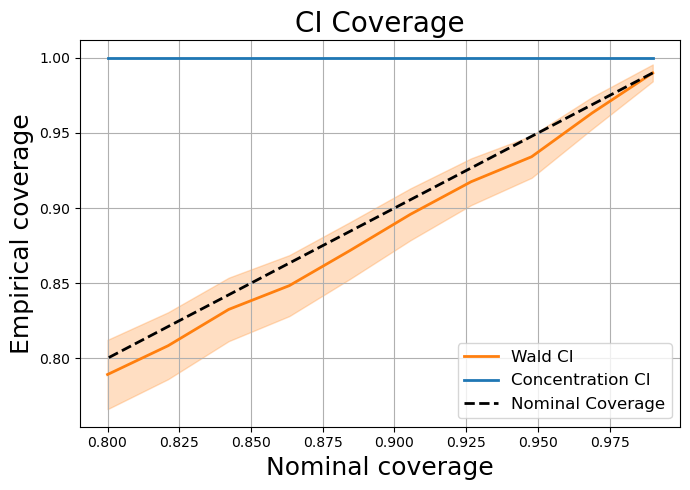}
    \end{subfigure}
    \hfill
    \begin{subfigure}[t]{0.42\textwidth}
        \includegraphics[width=\textwidth,trim={1cm 0.5cm 1cm 0.5cm}]{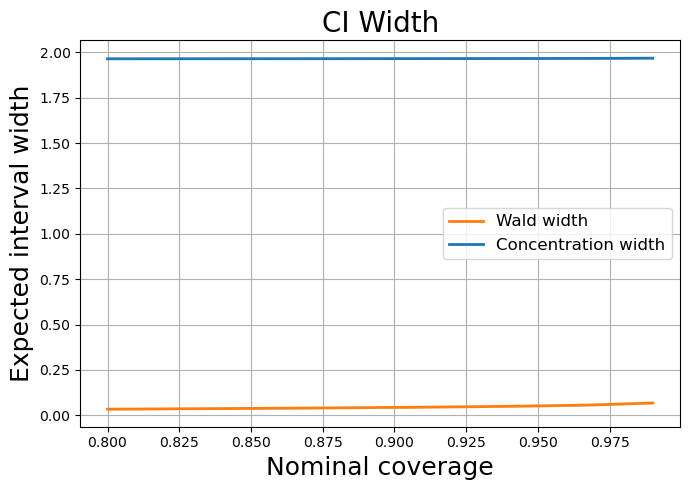}
    \end{subfigure}
    \\[4pt]
    \caption{\textbf{Left}: Coverage of both $\mathcal{I}^{\mathrm{APS}}$ and $\mathcal{I}^{\mathrm{Wald}}$.  \textbf{Right}: Expected confidence width of  both $\mathcal{I}^{\mathrm{APS}}$ and $\mathcal{I}^{\mathrm{Wald}}$. The average CI widths of $\mathcal{I}^{\mathrm{Wald}}$ and $\mathcal{I}^{\mathrm{APS}}$ across all values of $\alpha$ are $0.04$ and $1.96$ respectively.
     Simulations are based on \textbf{$T=3000$} runs.} 
\end{figure}

 \begin{figure}[H]\label{fig-ucb}
    \centering

    \begin{subfigure}[t]{0.42\textwidth}
        \includegraphics[width=\textwidth,trim={1cm 0.5cm 1cm 0.5cm}]{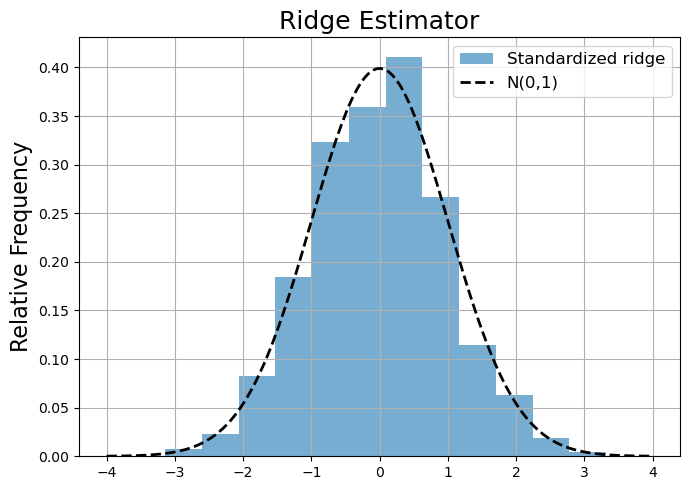}
        \caption*{$\frac{a^{\top}  \left( \hat{\beta}_{rid} - \beta^\star \right)}{ \sqrt{ a^{\top} S_T^{-1} a}}$}
    \end{subfigure}
    \hfill
    \begin{subfigure}[t]{0.42\textwidth}
        \includegraphics[width=\textwidth,trim={1cm 0.5cm 1cm 0.5cm}]{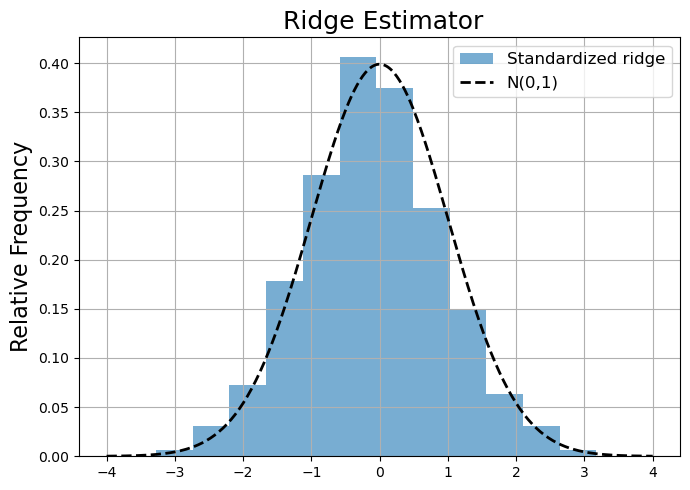}
        \caption*{$\frac{a^{\top}  \left( \hat{\beta}_{rid} - \beta^\star \right)}{ \sqrt{ a^{\top} S_T^{-1} a}}$}
    \end{subfigure}
    \\[4pt]
    \caption{\textbf{Left}: Histogram of the standardized Ridge estimator for $T=500$.  \textbf{Right}: Histogram of the standardized Ridge estimator for $T=3000$.} 
\end{figure}

\newpage

\section{Auxiliary Lemmas}
\label{app:Aux-lemmas}

In this section we present some well-known results on Bregman divergence for the sake of completeness, followed by the proofs of Lemma~\ref{lemma:MB-inprob} and~\ref{lemma:norm-ghat}.  

\subsection*{Some Results on Bregman Divergence}

We begin with two standard identities for Bregman divergences that will be used repeatedly.
Let $\phi:\mathcal{D}\to\mathbb{R}$ be a differentiable, strictly convex function on an open convex set $\mathcal{D}\subset\mathbb{R}^K$.
The Bregman divergence generated by $\phi$ is
\[
D_\phi(x,y) \;=\; \phi(x)-\phi(y)-\langle \nabla\phi(y),\,x-y\rangle,
\qquad x,y\in\mathcal{D}.
\]

\begin{lemma}[Bregman three-point identity]
\label{lem:three-point-alt}
For any differentiable and strictly convex function $\phi:\mathcal{D}\to\mathbb{R}$ 
and any $x, x^+, y \in \mathcal{D}$,
\[
\big\langle \nabla\phi(x)-\nabla\phi(x^+),\, x - y \big\rangle
\;=\;
D_\phi(y,x)\;-\;D_\phi(y,x^+)\;+\;D_\phi(x,x^+),
\]
where $D_\phi(u,v)=\phi(u)-\phi(v)-\langle\nabla\phi(v),\,u-v\rangle$ 
is the Bregman divergence generated by $\phi$.
\end{lemma}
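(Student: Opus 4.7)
This is a purely algebraic identity; there is no deep obstacle, and the plan is simply to expand each of the three Bregman divergences on the right-hand side using the defining formula $D_\phi(u,v)=\phi(u)-\phi(v)-\langle\nabla\phi(v),\,u-v\rangle$ and then cancel terms.

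First, I would write out
\[
D_\phi(y,x) \;=\; \phi(y) - \phi(x) - \langle \nabla\phi(x),\, y-x\rangle,
\]
\[
D_\phi(y,x^+) \;=\; \phi(y) - \phi(x^+) - \langle \nabla\phi(x^+),\, y-x^+\rangle,
\]
\[
D_\phi(x,x^+) \;=\; \phi(x) - \phi(x^+) - \langle \nabla\phi(x^+),\, x-x^+\rangle.
\]
On forming the combination $D_\phi(y,x) - D_\phi(y,x^+) + D_\phi(x,x^+)$, every function value of $\phi$ drops out: the two copies of $\phi(y)$ appear with opposite signs, the $\pm\phi(x)$ terms in the first and third divergences cancel, and likewise for $\pm\phi(x^+)$ in the second and third.

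It then remains to reorganize the three inner-product contributions. Two of them share the gradient $\nabla\phi(x^+)$, and linearity of the inner product gives
\[
\langle\nabla\phi(x^+),\,y-x^+\rangle \;-\; \langle\nabla\phi(x^+),\,x-x^+\rangle \;=\; \langle\nabla\phi(x^+),\,y-x\rangle,
\]
so the $x^+$ pieces disappear. What is left is $-\langle\nabla\phi(x),\,y-x\rangle + \langle\nabla\phi(x^+),\,y-x\rangle = \langle\nabla\phi(x^+)-\nabla\phi(x),\,y-x\rangle$, which is exactly $\langle\nabla\phi(x)-\nabla\phi(x^+),\,x-y\rangle$ after flipping both signs, matching the left-hand side.

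The proof uses only differentiability of $\phi$; strict convexity plays no role in the algebra itself and is needed only so that $D_\phi$ behaves as a genuine discrepancy. Consequently there is no substantive obstacle — the only care required is careful sign-tracking in the cancellation step, which I would present line by line to make the bookkeeping transparent.
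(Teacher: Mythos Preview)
Your proposal is correct and follows essentially the same approach as the paper: expand all three Bregman divergences, observe that every $\phi(\cdot)$ term cancels, and then combine the remaining inner products (the paper uses $y-x^{+}=(y-x)+(x-x^{+})$ where you use direct linearity, which is the same manipulation) to obtain $\langle\nabla\phi(x)-\nabla\phi(x^{+}),\,x-y\rangle$. Your remark that strict convexity is irrelevant to the algebra is also apt.
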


\begin{proof}
By the definition of the Bregman divergence,
\begin{align*}
&D_\phi(y,x)-D_\phi(y,x^+)+D_\phi(x,x^+) \\
&= \big(\phi(y)-\phi(x)-\langle\nabla\phi(x),y-x\rangle\big)
   -\big(\phi(y)-\phi(x^+)-\langle\nabla\phi(x^+),y-x^+\rangle\big) \\
&\quad\;\;+\big(\phi(x)-\phi(x^+)-\langle\nabla\phi(x^+),x-x^+\rangle\big).
\end{align*}
The $\phi(\cdot)$ terms cancel, giving
\[
-\langle\nabla\phi(x),y-x\rangle
+\langle\nabla\phi(x^+),y-x^+\rangle
-\langle\nabla\phi(x^+),x-x^+\rangle.
\]
Since $y-x^+=(y-x)+(x-x^+)$, the last two inner products combine to
$\langle\nabla\phi(x^+),y-x\rangle$.
Thus the entire expression simplifies to
\[
\langle\nabla\phi(x^+)-\nabla\phi(x),\,y-x\rangle
\;=\;
\langle\nabla\phi(x)-\nabla\phi(x^+),\,x-y\rangle,
\]
which proves the desired identity.
\end{proof}

\begin{definition}[Bregman projection]
Let $X\subset\mathcal{D}$ be closed and convex.
For $y\in\mathcal{D}$, the (right) Bregman projection of $y$ onto $X$ is
\[
\Pi_X^\phi(y) \;\in\; \arg\min_{x\in X} D_\phi(x,y).
\]
\end{definition}

\begin{lemma}[Bregman Pythagorean theorem]
\label{lem:pythagoras}
Let $X\subset\mathcal{D}$ be nonempty, closed, and convex, and let $x^+=\Pi_X^\phi(y)$.
Then for all $x\in X$,
\[
D_\phi(x,y) \;\ge\; D_\phi(x,x^+) \;+\; D_\phi(x^+,y).
\]
\end{lemma}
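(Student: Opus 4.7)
The plan is to combine two ingredients: the first-order optimality condition satisfied by the Bregman projection $x^+$, and the three-point identity from Lemma~\ref{lem:three-point-alt}. Together, these yield the Pythagorean inequality without any further calculation.

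First I would record the optimality condition. Since $x^+ \in \arg\min_{u \in X} D_\phi(u, y)$ with $X$ closed and convex, and the map $u \mapsto D_\phi(u, y) = \phi(u) - \phi(y) - \langle \nabla\phi(y), u - y\rangle$ is differentiable in $u$ with gradient $\nabla\phi(u) - \nabla\phi(y)$, the standard variational inequality for constrained convex minimization gives
\[
\bigl\langle \nabla\phi(x^+) - \nabla\phi(y),\; x - x^+ \bigr\rangle \;\ge\; 0
\qquad \text{for every } x \in X.
\]
(Here I use that $\phi$ is a Legendre-type function on $\mathcal{D}$ and that $x^+$ lies in the interior of $\mathcal{D}$; in the paper's application this is automatic because the $\varepsilon$-simplex $\Delta_\varepsilon$ sits strictly inside the positive orthant.)

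Next I would invoke Lemma~\ref{lem:three-point-alt} with the substitution $(x, x^+, y) \mapsto (x^+, y, x)$, which produces the algebraic identity
\[
\bigl\langle \nabla\phi(x^+) - \nabla\phi(y),\; x^+ - x\bigr\rangle
\;=\;
D_\phi(x, x^+) - D_\phi(x, y) + D_\phi(x^+, y).
\]
Rearranging,
\[
D_\phi(x, y) \;=\; D_\phi(x, x^+) + D_\phi(x^+, y) + \bigl\langle \nabla\phi(x^+) - \nabla\phi(y),\; x - x^+\bigr\rangle.
\]
Combining this identity with the optimality inequality from the previous step drops the last term and yields $D_\phi(x, y) \ge D_\phi(x, x^+) + D_\phi(x^+, y)$, as required.

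There is essentially no hard step here; the proof is two lines once the right identity is in hand. The only care needed is in justifying the first-order condition, which in turn requires $x^+$ to lie where $\nabla\phi$ is defined. For the paper's mirror map $\phi(w) = \sum_k (w_k \log w_k - w_k)$ this is immediate because every $w \in \Delta_\varepsilon$ satisfies $w_k \ge \varepsilon > 0$, and for the abstract statement one standardly assumes $\phi$ is of Legendre type (so $\|\nabla\phi(u)\| \to \infty$ on $\partial \mathcal{D}$) to force $x^+$ into $\mathrm{int}(\mathcal{D})$. Everything else is bookkeeping.
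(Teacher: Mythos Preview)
Your proof is correct and follows essentially the same approach as the paper: derive the first-order optimality condition $\langle \nabla\phi(x^+)-\nabla\phi(y),\,x-x^+\rangle \ge 0$, apply the three-point identity with the substitution $(x,x^+,y)\mapsto(x^+,y,x)$, and rearrange. Your version is in fact slightly more careful with the sign bookkeeping in the three-point step, and your remark about why $x^+$ lies where $\nabla\phi$ is defined is a welcome addition that the paper's proof leaves implicit.
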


\begin{proof}
By first-order optimality for the convex problem $\min_{x\in X} D_\phi(x,y)$,
\[
\big\langle \nabla_x D_\phi(x,y)\big|_{x=x^+},\, x-x^+ \big\rangle \;\ge\; 0
\quad\text{for all } x\in X.
\]
Since $\nabla_x D_\phi(x,y)=\nabla\phi(x)-\nabla\phi(y)$, we have
\[
\big\langle \nabla\phi(x^+)-\nabla\phi(y),\, x-x^+ \big\rangle \;\ge\; 0.
\]
Apply Lemma~\ref{lem:three-point-alt} with $(x,y,z)=(x^+,y,x)$ to obtain
\[
\big\langle \nabla\phi(x^+)-\nabla\phi(y),\, x-x^+ \big\rangle
= D_\phi(x,x^+) - D_\phi(x,y) + D_\phi(x^+,y).
\]
Rearranging yields the stated inequality.
\end{proof}

\subsection*{Proof of Lemma~\ref{lemma:MB-inprob}}
\noindent 
By construction, $(M_T,\mathcal{F}_T)$ is a mean-zero
\emph{matrix-valued martingale}:
\begin{align*}
  \mathbb{E}[D_t\mid \mathcal{F}_{t-1}] = 0,
\qquad
\mathbb{E}[M_T\mid \mathcal{F}_{t-1}] = M_{t-1}.  
\end{align*}

\noindent Because $\|\cfeat(x,a)\|$ is uniformly bounded,
\begin{align*}
    \|Y_t\|_F^2 \le d^2\,\sup_{x,a}\|\cfeat(x,a)\|^4 < \infty.
\end{align*}
Hence
$\mathbb{E}[\|D_t\|_F^2 \mid \mathcal{F}_{t-1}] \le C$ for some constant $C<\infty$,
and therefore
\begin{align*}
  \mathbb{E}\|M_T\|_F^2 = \sum_{t=1}^T \mathbb{E}\|D_t\|_F^2 \le CT.  
\end{align*}

\noindent By orthogonality of martingale differences,
\begin{equation*}
  \mathbb{E}\|M_T\|_F^2
  = \sum_{t=1}^T \mathbb{E}\|D_t\|_F^2
  = \sum_{t=1}^T \mathbb{E}\big[\mathbb{E}[\|D_t\|_F^2 \mid \mathcal{F}_{t-1}]\big]
  \le CT.
\end{equation*}
\noindent Thus
\begin{equation}\label{eqn:MT-bdd}
  \mathbb{E}\left\|\frac{M_T}{T}\right\|_F^2
  = \frac{1}{T^2} \mathbb{E}\|M_T\|_F^2
  \le \frac{C}{T} \;\to\; 0,
\end{equation}
which shows $M_T/T \to 0$ in $L^2$ and hence in probability. Note that in the above equation we have considered the Frobenius norm. However, as all finite dimensional norms are equivalent, $M_T/T \xrightarrow{\Prob} 0$ with respect to the operator norm as well.
From our assumption it follows that there exists non-random weight vector $w^\star_T$ such that
\begin{align}
    \frac{1}{T}\sum_{t = 1}^T w_{t,k}-\wstar_{T,k}  \inprob 0,
\end{align}
for all experts $i \in [K]$. Now we observe that,
\begin{align*}
    \frac{B_T}{T} = \frac{1}{T}\sum_{t=1}^T \mathbb{E}[Y_t\mid \mathcal{F}_{t-1}] = \sum^K_{k=1} \left(\frac{1}{T}\sum_{t=1}^Tw_{t,k} \right) \Sigma_k
\end{align*}

\noindent As all the elements of $\Sigma_k$ are \emph{uniformly} bounded, we have the following string of inequalities,
\begin{align}\label{eqn:BT-bdd}
\notag
    \left\|\frac{B_T}{T} -\frac{1}{T} \BT  \right\|_{op}
    &=  \left\| \sum^K_{k=1} \left(\frac{1}{T}\sum_{t=1}^Tw_{t,k} - \wstar_{T,k}\right) \Sigma_k \right\|_{op} \\[8pt] \notag
    & \leq \sum^K_{k=1}  \bigg| \frac{1}{T}\sum_{t=1}^Tw_{t,k} - \wstar_{T,k}  \bigg| \left\| \Sigma_k  \right\|_{op} \\[8pt]
    & \leq \lamu  \sum^K_{k=1} \bigg|\frac{1}{T}\sum_{t=1}^Tw_{t,k} - \wstar_{T,k} \bigg| 
     = \lamu \  \| \bar{w}_{T} - w^\star_T \|_1
\end{align}

\noindent Therefore, our proof is complete.

\subsection*{Proof of Lemma~\ref{lemma:norm-ghat}}
Recall that 
\begin{align*}
  \widehat g_{t,k}
  = \,\loss_t\,\frac{\pi_k(a_t\mid x_t)}{Q_t(a_t\mid x_t)},
  \qquad
  Q_t(a\mid x_t) = \sum_{j=1}^K w_{t,j}\,\pi_j(a\mid x_t),
\end{align*}
and assume $|\loss_t|\le 1$ almost surely.
By definition of the local dual norm,
\begin{align*}
  \|\widehat g_t\|_{w_t,*}^2
  = \sum_{k=1}^K w_{t,k}\widehat g_{t,k}^2
  = \loss_t^2 \sum_{k=1}^K
      w_{t,k}\,\frac{\pi_k^2(a_t\mid x_t)}{Q_t^2(a_t\mid x_t)}.
\end{align*}
Conditioning on $x_t$ and summing over $a_t\sim Q_t(\cdot\mid x_t)$ gives
\begin{align*}
  \Exs\big[\|\widehat g_t\|_{w_t,*}^2 \mid x_t\big]
  &= \sum_{a\in\mathcal{A}}
      Q_t(a\mid x_t)\,
      \Big\{
        \loss_t^2 \sum_{k=1}^K
        w_{t,k}\,\frac{\pi_k^2(a\mid x_t)}{Q_t^2(a\mid x_t)}
      \Big\} \\
  &= \sum_{a\in\mathcal{A}}
      \loss_t^2
      \sum_{k=1}^K
      w_{t,k}\,\frac{\pi_k^2(a\mid x_t)}{Q_t(a\mid x_t)} \\
  &\le \sum_{a\in\mathcal{A}}
      \loss_t^2
      \sum_{k=1}^K
      w_{t,k}\,\frac{\pi_k(a\mid x_t)\,\max_j \pi_j(a\mid x_t)}{Q_t(a\mid x_t)} \\
  &= \sum_{a\in\mathcal{A}}
      \loss_t^2\,\max_j \pi_j(a\mid x_t)
      \frac{\sum_{k=1}^K w_{t,k}\pi_k(a\mid x_t)}{Q_t(a\mid x_t)} \\
  &= \sum_{a\in\mathcal{A}}
      \loss_t^2\,\max_j \pi_j(a\mid x_t)
      \;\le\; |\mathcal{A}|\,\max_{x,a} \loss_t^2
      \;\le\; |\mathcal{A}|,
\end{align*}
where we used \(Q_t(a\mid x_t)=\sum_k w_{t,k}\pi_k(a\mid x_t)\) in the third line,
and $\max_{x,a} \loss_t^2\le 1$ in the last inequality. This completes the proof.

\subsection*{ Proof of Lemma~\ref{lemma:RT-Etwo-approx}}

\noindent Let us recall that,
\begin{align}\label{eqn:rat-nat}
    \ratto = \frac{v_T^\top (a) \ \stath}{  \sqrt{a^\top (\frac{1}{T}\PT)^{-1}  a}}
    \quad \text{and,}
    \quad
    \natto = \left[\frac{1}{T} \PT \right]^{-1} \times \left[ \frac{M_T}{T} + \frac{B_T}{T}-\frac{1}{T} \PT\right]
\end{align}

\noindent where vector $v_T(a) :=  (\frac{1}{T}\PT)^{-1/2} \left(k_T S_T^{-1} \PT-\Id  \right)\ a $. Note that for any vector $u$, $u^\top (\frac{1}{T}\PT)^{-1} u \ \geq \ \|u\|^2_2 \ \lambda_{\max} \left( (\frac{1}{T}\PT)^{-1} \right)$ and $\lambda_{\min} \left( (\frac{1}{T}\PT)^{-1} \right) = \frac{1}{\lambda_{\max} \left( \frac{1}{T}\PT \right)}$. Therefore,
\begin{align}\label{eqn:modrat-one}
\notag
     | \ratto | 
    = \frac{|v_T^\top (a) \ \stath|}{  \sqrt{a^\top (\frac{1}{T}\PT)^{-1}  a}} 
    \  &\leq \  |v_T^\top (a) \ \stath| \times \frac{\sqrt{\lambda_{\max} \left( \frac{1}{T} \BT \right)}} {\|a\|_2} \\[8pt]\notag
    & \leq |v_T^\top (a) \ \stath| \times \frac{\sqrt{\lamu}} {\|a\|_2}\\[8pt]
    &\leq \| v_T^\top (a)\|_2 \times \| \stath\|_{op} \times \frac{\sqrt{\lamu}} {\|a\|_2}
\end{align}

\noindent As $v_T(a) :=  (\frac{1}{T}\PT)^{-1/2} \left(k_T S_T^{-1} \PT-\Id  \right)\ a $, by repeated application of the CS inequality it follows that,
\begin{align}\label{eqn:vt-uppbdd}
   \|v_T(a)\|_2 \ \leq \ \|a\|_2 \ \left| \left|\left( \frac{1}{T} \PT \right)^{-1/2} \right| \right|_{op}
    \ \| k_T S_T^{-1} \PT - \Id  \|_{op} 
\end{align}


\noindent If $M$ is any symmetric, positive definite matrix with eigen values $\lambda_1,\ldots,\lambda_b$ then the eigen values of $M^{1/2}$ are $\sqrt{\lambda_1},\ldots,\sqrt{\lambda_b}$ (\citet{rao2000linear}). Hence,
\begin{align*}
    \left| \left|\left( \frac{1}{T} \PT \right)^{-1/2} \right| \right|_{op} \leq \frac{1}{\sqrt{\laml}}
\end{align*}

\noindent Consequently, from equation~\eqref{eqn:vt-uppbdd} we have
\begin{align}\label{eqn:rt-bddtwo}
    | \ratto | \leq  \sqrt{\frac{\lamu}{\laml}} \times
    \ \| k_T S_T^{-1} \PT - \Id  \|_{op}  \times \| \stath\|_{op} \
\end{align}

\newcommand{\natoto}{\mathcal{N}_2(T)}

\noindent Now, by an application of the triangle and CS inequality we have
\begin{align}\label{eqn:MTP-bdd}
\notag
    \| k_T S_T^{-1} \PT - \Id  \|_{op} 
    &\leq | k_T - 1| \| S_T^{-1} \PT\|_{op} \ + \ \| S_T^{-1} \PT - \Id \|_{op} \\[8pt]
    & \leq | k_T - 1| \ \frac{\lamu}{\lambda_{\min} \left ( \frac{1}{T}S_T \right)} 
    \ + \ \| S_T^{-1} \PT - \Id \|_{op}
\end{align}

\noindent Now, recall that  $k_T = \sqrt{ a^{\top} \SigPinv a / a^{\top} S_T^{-1} a}$. Therefore,
\begin{align}\label{eqn:kt-decomo}
    \left(\frac{1}{T} \PT \right)^{-1}
    = \ \left(\frac{1}{T} S_T \right)^{-1}
    + \left[\left(\frac{1}{T} \PT \right)^{-1}
    \left(\frac{1}{T} S_T \right) - \Id \right]
    \left(\frac{1}{T} S_T \right)^{-1}
\end{align}

\noindent Let us define $ \natoto:= \left[\left(\frac{1}{T} \PT \right)^{-1}
    \left(\frac{1}{T} S_T \right) - \Id \right]
    \left(\frac{1}{T} S_T \right)^{-1}$. Then due to sub-multiplicity of the operator norm it follows that
\begin{align}\label{eqn:kt-decomto}
    \|\natoto \|_{op}
    \leq \left| \left| \left(\frac{1}{T} S_T \right)^{-1}\right| \right|_{op} \times \ \|(\PT)^{-1} S_T - \Id \|_{op}
\end{align}

\noindent  By applying equations~\eqref{eqn:kt-decomo} and~\eqref{eqn:kt-decomto} we bound $ |k_T -1|$ below:
\begin{align}\label{eqn:kt-MT-pbdd}
\notag
     | k_T - 1 |   
    &=  \left| \sqrt{1 + \dfrac{a^{\top}\natoto a}{a^{\top} \left( \frac{1}{T} S_T \right)^{-1}a}} -1 \right| \\[8pt]\notag
    & =  \left| \dfrac{1}{ \sqrt{1 + \frac{a^{\top} \natoto a}{a^{\top} \left( \frac{1}{T} S_T \right)^{-1}a}} +1} \times   \dfrac{a^{\top} \natoto a}{a^{\top} \left( \frac{1}{T} S_T \right)^{-1}a} \right|  \\[10pt]\notag
    &\leq  \left|   \frac{a^{\top} \natoto a}{a^{\top} \left( \frac{1}{T} S_T \right)^{-1}a} \right|  \\[10pt]\notag
    & \leq \|a\|^2_2 \times \| \natoto\|_{op} \times \frac{\lambda_{\max} \left( \frac{1}{T} S_T \right)}{\| a\|^2_2} \\[8pt] \notag
    & \leq  \ \frac{\lambda_{\max} \left( \frac{1}{T} S_T \right)}{\lambda_{\min} \left( \frac{1}{T} S_T \right)} \times  \| (\PT)^{-1}S_T - \Id)\|_{op}
    \\[8pt]
    & \lesssim \frac{1}{\lambda_{\min} \left( \frac{1}{T} S_T \right)} \| \natto\|_{op}
\end{align}

\noindent The last inequality follows directly from the definition of $\natto$ (see equation~\eqref{eqn:rat-nat}). We note that for any invertible matrix $B$ such that $B = \Id + A$, we have $B^{-1} = \Id - B^{-1}A $. By rearranging the terms, it follows that $\Id -B^{-1} = B^{-1}(B-\Id) $. By substituting $B = \PT^{-1}S_T$ and applying the sub-multiplicative property of the operator norm we obtain 
\begin{align}\label{eqn:st-Inv-decom}
     \| S_T^{-1} \PT - \Id\|_{op} 
     \leq  \| S_T^{-1} \PT \|_{op} \|(\PT)^{-1} S_T - \Id\|_{op} 
     = \| S_T^{-1} \PT \|_{op} \times \| \natto\|_{op}
\end{align}

\noindent By combining equations~\eqref{eqn:MTP-bdd}, ~\eqref{eqn:kt-MT-pbdd} and~\eqref{eqn:st-Inv-decom} we have the following chain of inequalities:
\begin{align*}
    \| k_T S_T^{-1} \BT - \Id  \|_{op} 
    & \lesssim  \ \frac{\lamu}{\lambda_{\min} \left ( \frac{1}{T}S_T \right)} \times \frac{1}{\lambda_{\min} \left( \frac{1}{T} S_T \right)} \| \natto\|_{op}
    \ + \ \| S_T^{-1} \PT \|_{op} \times \| \natto\|_{op} \\[8pt]
    & \leq  \ \left[ \frac{\lamu}{\lambda_{\min} \left ( \frac{1}{T}S_T \right)^2} \ + \ \frac{\lamu}{\lambda_{\min} \left ( \frac{1}{T}S_T \right)} \right] \times \| \natto\|_{op} 
\end{align*}

\noindent Therefore, on event  $\Etwo$ we have
\begin{align}\label{eqn:mtp-upbddtwo}
    \| k_T S_T^{-1} \PT - \Id  \|_{op} 
    \leq \ \left[ \frac{2\lamu}{(\laml)^2} \ + \ \frac{2\lamu}{\laml} \right] \times  \|\natto \|_{op}
\end{align}

\noindent Hence, by combining inequalities~\eqref{eqn:rt-bddtwo} and~\eqref{eqn:mtp-upbddtwo}, it follows that there exists a constant $C(d,\ \laml,\ \lamu)$ such that
\begin{align*}
    |\ratto | \leq C(d,\laml,\ \lamu) \times \|\natto\|_{op} \times \| \stath \|_2
\end{align*}

\subsection*{Proof of Lemma~\ref{lemma:lmin-ST}}

\noindent Note that
\begin{align*}
    \Prob(\Etwo) = \Prob\left( \lambda_{\min} \left ( \frac{1}{T}S_T \right) \geq \frac{\laml}{2} \right)
\end{align*}

\noindent Recall that we have the following decomposition
\begin{align*}
    \lambda_{\min} \left ( \frac{1}{T}S_T \right)
     =  \lambda_{\min} \left ( \frac{1}{T}M_T \ + \ \frac{1}{T}B_T \right)
     \geq \ \lambda_{\min} \left( \frac{1}{T}M_T  \right) \ + \ \lambda_{\min} \left(\frac{1}{T}B_T \right)
\end{align*}

\noindent As $\frac{1}{T} B_T$ is a convex combination of the $\Sigma_k$'s for $k \in [K]$, it follows that
\begin{align*}
    \lambda_{\min} \left(\frac{1}{T}B_T \right) \ \geq \ \laml
\end{align*}

\noindent Therefore, 
\begin{align*}
   \lambda_{\min} \left ( \frac{1}{T}S_T \right) 
   \geq   \ \lambda_{\min} \left( \frac{1}{T}M_T  \right) \ + \ \laml
\end{align*}

\noindent This implies that
\begin{align*}
   \Prob(\Etwo^c) = \Prob \left(\lambda_{\min} \left ( \frac{1}{T}S_T \right) < \frac{\laml}{2} \right)
   \ \leq \ \Prob \left(\lambda_{\min} \left ( \frac{1}{T}M_T \right) < -\frac{\laml}{2} \right)
\end{align*}

\noindent By observing that $M^\star_T :=-M_T$ is also a sum of martingale difference sequences, we have
\begin{align}
    \Prob(\Etwo^c)  \ \leq \  \Prob \left(\lambda_{\max} \left ( \frac{1}{T}M^\star_T \right) > \frac{\laml}{2} \right)
\end{align}

\noindent Now, we apply the following lemma (\citet{tropp2011freedman}):
\begin{theorem}[Matrix Freedman]
\label{thm:matrix-freedman}
Consider a matrix martingale $\{Y_k : k = 0,1,2,\dots\}$ whose values are
self-adjoint matrices of dimension $d$, and let
$\{X_k : k = 1,2,3,\dots\}$ be the difference sequence, where
\begin{align*}
X_k &= Y_k - Y_{k-1}.
\end{align*}

\noindent Assume that the difference sequence is uniformly bounded in the sense that
\begin{align*}
\lambda_{\max}(X_k) &\le R
\quad \text{almost surely, for all } k \ge 1.
\end{align*}

\noindent Define the predictable quadratic variation process
\begin{align*}
W_k &:= \sum_{j=1}^k \mathbb{E}_{j-1}\!\left[X_j^2\right],
\quad \text{for } k = 1,2,3,\dots
\end{align*}

\noindent Then, for all $t \ge 0$ and $\sigma^2 > 0$,
\begin{align*}
\mathbb{P}\!\left(
\exists\, k \ge 0 :
\lambda_{\max}(Y_k) \ge t
\ \text{and}\ 
\|W_k\|_{op} \le \sigma^2
\right)
&\le
d \cdot \exp\!\left(
-\frac{t^2/2}{\sigma^2 + Rt/3}
\right).
\end{align*}
\end{theorem}

\noindent Recall that $M_T = \sum^T_{t=1}D_t$, where $D_t = z_t z_t^\top - \Exs[z_t z_t^\top \mid \mathcal{F}_{t -1}]$. Now, for any vector $v$ with $\|v\|_2 \leq 1$, $\|vv^\top\|_{op} \leq 1$ (as it is a rank $1$ matrix and has only one positive eigenvalue equaling $\|v \|_2$). Hence, Assumption~\eqref{assn:context} ensures  
\begin{align*}
    \|D_t\|_{op} \leq \|z_t\|_2 +  \| \Exs [z_t z_t^\top \mid \mathcal{F}_{t - 1}]\|_{op} \leq 1 + \Exs \|z_t\|_2 \leq 2 
\end{align*}
where the second inequality above follows via Jensen's inequality. Additionally, 
\begin{align*}
 \| W_T\|_{op} \leq \sum_{t = 1}^T \Exs\|D_t^2 \mid \mathcal{F}_{t-1}\|_{op} \leq 
 \sum_{t = 1}^T \Exs[\|D_t^2 \|_{op}\mid \mathcal{F}_{t-1}] \leq 4T
\end{align*}

\noindent Therefore, by applying Lemma~\ref{thm:matrix-freedman}  with $X_t = D_t$ we have
\noindent Therefore, we have proved that
\begin{align*}
     \Prob(\Etwo^c) \leq  d \cdot \expower\left\{ - \frac{(\laml)^2}{32 +  8\laml/3} \cdot  T\right\}
\end{align*}

\noindent Hence, our proof is complete.

\end{document}